\def \wb {{\mathbf w}} 
\def \nb {{\mathbf n}}
\def \cM {{\mathcal{M}}}
\def \cR {{\mathbb{R}}}
\def\Er{\mathcal{E}}
\def\heteG{G}
\def\heteB{B}
\def\client{N} % K
\def\sample{S} % m
\def\step{K} % N
\def\smooth{\beta} % L
\newcommand{\lip}{L} % G_0
\newcommand{\clip}{\mathcal{L}}
\def\batch{b}
\def\epoch{E}
\newtheorem{assumption}{Assumption}
\newtheorem{proposition}{Proposition}
\newtheorem{lemma}{Lemma}
\newtheorem{theorem}{Theorem}
\newtheorem{definition}{Definition}
\newtheorem{remark}{Remark}
\newtheorem{corollary}{Corollary}
\newtheorem*{metatheorem*}{Meta Theorem}
\definecolor{header1}{cmyk}{.9,.5,0,.35}
\title{Differentially Private Federated Learning with Laplacian Smoothing}
\date{}
\author{Zhicong Liang\\
	Department of Mathematics\\
	Hong Kong University of Science and Technology\\
	\texttt{zliangak@connect.ust.hk} \\
	%% examples of more authors
	\And
	Bao Wang \\
	Department of Mathematics\\
	University of Utah\\
	\texttt{bwang@math.utah.edu} \\
	\AND
	Quanquan Gu \\
	Department of Computer Science\\
	University of California, Los Angeles \\
	\texttt{qgu@cs.ucla.edu} \\
	\And
	Stanley Osher \\
	Department of Mathematics\\
	University of California, Los Angeles \\
	\texttt{sjo@math.ucla.edu} \\
	\And
	Yuan Yao \\
	Department of Mathematics \\
	Hong Kong University of Science and Technology \\
	\texttt{yuany@ust.hk} \\
}
\begin{document}
\maketitle

\begin{abstract}
Federated learning aims to protect data privacy by collaboratively learning a model without sharing private data among users. However, an adversary may still be able to infer the private training data by attacking the released model. Differential privacy provides a statistical protection against such attacks at the price of 
significantly degrading the accuracy or utility of the trained models. In this paper, we investigate a utility enhancement scheme based on Laplacian smoothing for differentially private federated learning (DP-Fed-LS), where the parameter aggregation with injected Gaussian noise is improved in statistical precision without losing privacy budget. 
Our key observation is that the aggregated gradients in federated learning often enjoy a type of smoothness, i.e. sparsity in the graph Fourier basis with polynomial decays of Fourier coefficients as frequency grows, which can be exploited by the Laplacian smoothing efficiently. Under a prescribed differential privacy budget, convergence error bounds with tight rates are provided for DP-Fed-LS with uniform subsampling of heterogeneous Non-IID data, revealing possible utility improvement of Laplacian smoothing in effective dimensionality and variance reduction, among others. Experiments over MNIST, SVHN, and Shakespeare datasets show that the proposed method can improve model accuracy with DP-guarantee and membership privacy under both uniform and Poisson subsampling mechanisms.
\end{abstract}

% keywords can be removed
\keywords{Differential privacy \and Federated learning \and Laplacian smoothing}

\section{Introduction}\label{sec:introduction}
In recent years, we have already witnessed the great success of machine learning (ML) algorithms in handling large-scale and high-dimensional data
\cite{he:2016, devlin:bert, silver:2016, berner2019dota, senior2020improved}. Most of these models are trained 
%on centralized manner 
in a centralized manner
by gathering all data into a single database. However, in applications like mobile keyboard development \cite{hard2018federated} and vocal classifier such as ``Hey Siri" \cite{apple}, sensitive data are distributed in the devices of users, who are not willing to share their own data with others. Federated learning (FL), proposed in \cite{mcmahan:2016}, provides a solution that data owners can collaboratively learn a useful model without disclosing their private data. In FL, a server, and multiple data owners, referred to as clients, are involved in maintaining a global model. They no longer share the private data but the updated models trained on these data. 
%(More details can be found in the Algorithm Section).
In each communication round, the server will distribute the latest global model to a random subset of selected clients (active clients), who will perform learning starting from the received global model based on their private data, and then upload the locally updated models back to the server. The server then aggregates these local models to construct a new global model and start another communication round until convergence. There have been various studies on such a distributed learning since its inception \cite{konevcny:2016, suresh:2017, smith:2017}.

In some cases, however, 
federated learning is not sufficient to protect the sensitive data by 
simply decoupling the model training from the direct access to the raw training data %is insufficient to protect data privacy
\cite{Shokri:2017,fredrikson:2014,Fredrikson:2015}. 
%not enough. 
Information about raw data could be identified from a well-trained model. 
In some extreme cases, a neural network can even memorize the whole training set with its huge number of parameters. 
For example, an adversary may infer the presence of particular records in training \cite{Shokri:2017} or even recover the identity (e.g. face images) in the training set by attacking the released model \cite{Fredrikson:2015, fredrikson:2014}.
Differential privacy (DP) provides us with a solution to defend against these threats \cite{Dwork:2004, dwork2006our}. DP guarantees privacy in a statistical way that the well-trained models are not sensitive to the change of an individual record in the training set. This task is usually fulfilled by adding noise, calibrated to the model's sensitivity, to the outputs or the updates. 
%calibrated to the sensitivity of the model. 

%One major issue of DP lies in its possibly significant degradation 
One major deficiency of DP lies in its potential significant degradation of the utility of the models due to the noise injection.
%introduction of noise. 
Laplacian smoothing (LS) has recently been shown to be a good choice for reducing noise in noisy gradient, e.g. in stochastic gradient descent (SGD)
%variance reduction and escaping spurious minima in stochastic gradient descent (SGD) 
\cite{Osher:2018}, and thus promising for utility improvement in machine learning with DP \cite{wang:2019}.

In this paper, we introduce Laplacian smoothing to improve the utility of the differentially private federated learning (DP-Fed) while maintaining the same DP budget. 
%we investigate the differentially-private federated learning (DP-Fed) and charaterize its privacy budget under various subsampling schemes. Moreover, we  
The major contributions of our work are fourfold:
\begin{itemize}
    \item Laplacian smoothing of the \textit{federated average of gradients} is introduced to the differentially private federated learning, based on the observation that aggregated gradients in federated learning are often smooth or sparse in Fourier coefficients with polynomial decays. Laplacian smoothing can reduce variance with improved estimates of such gradients. We denote the proposed algorithm as DP-Fed-LS.
    
    \item Tight upper bounds are established for differential privacy budget guarantees. Our DP bounds are based on a new set of closed-form privacy bounds derived for both uniform and Poisson subsampling mechanisms, which are tighter than existing results in previous studies \cite{wanglingxiao:2019, bun:2018, Mironov2019sampled} while relaxing their requirements. 
    %Based on these results, we provide the DP-guarantees for differentially private federated learning, where Laplacian smoothing maintains the same privacy budget as those without it.

    \item Convergence bounds are developed for DP-Fed-LS in strongly-convex, general-convex, and non-convex settings under our differential privacy budget bounds. The rates on convergence and communication complexity match those on federated learning without DP \cite{karimireddy2020scaffold}, while our results extend to include the effect of differential privacy and Laplacian smoothing; as well as our rates match the ones of empirical risk minimization (ERM) via SGD with differential privacy in a centralized setting \cite{bassily2019private,wanglingxiao:2019}.
    %\cite{wang2017differentially, bassily2019private,wanglingxiao:2019},. Our bounds match 
    
    \item The utility of Laplacian smoothing in DP-Fed is demonstrated by training a logistic regression model over MNIST, a convolutional neural network (CNN) over extended SVHN, in an \textbf{IID} 
    %(Independent and Identically Distributed)
    fashion, and a long short-term memory (LSTM) model over Shakespeare dataset in a \textbf{Non-IID} setting. These experiments show that DP-Fed-LS improves accuracy while providing at least the same DP-guarantees and membership privacy as DP-Fed with two subsampling mechanisms across different datasets.
\end{itemize}

\begin{table}
\renewcommand{\arraystretch}{1.5}
\centering
\caption{Utility guarantee of $(\varepsilon,\delta)$ differential privacy (upper part) and rate of communication round needed to achieve $\epsilon$ accuracy (lower part) for $\mu$ strongly-convex and non-convex optimization problems. $\ddagger$ denotes that logarithmic factors are ignored here. See \ref{sec:table_compare} for more details. $\dagger$ denotes that no client subsampling is used. In full participation scenarios, $\tau=1$ and  $\log(\sample)=\log(\client)$. $\ast$ after DP-Fed-LS further denotes the specific setting of \textbf{IID} $(\heteG=0)$ with $\step \gg 1$. For centralized settings \cite{bassily:2014, wang2017differentially, wang:2019}, $N$ denotes the number of data points, while in federated learning, $N$ denotes the number of clients. DP-SRM \cite{wanglingxiao:2019} is a distributed setting where $\client$ and $\tilde{n}$ denote the number of clients and number of samples owned by each client, respectively. They consider data-level DP while we consider user-level DP. 
%\textcolor{red}{For utility guarantee of DP-Fed-LS, we consider \textbf{IID} $(\heteG=0)$ or full device participation $(\tau=1)$ setting where $\step \gg 1$. If $\tau=1$, then $\log(\sample)=\log(\client)$.}
% We can see that the utility guarantee of DP-Fed-LS matches the optimal ones of ERM via SGD with differential privacy in centralized setting \cite{wang:2019, wang2017differentially}. And the rate of communication round of DP-Fed-LS matches the non-DP one in \cite{karimireddy2020scaffold}, with an extra term for the injected noise $\nu_1$ in DP.
}
\begin{tabular}{cccc}
\hline
Method & $\mu$ strongly-convex & non-convex  \\ 
\hline
DP-SGD \cite{bassily:2014} & \footnotesize{$\frac{d\log^3(\client/\delta)}{\mu \client^2 \varepsilon^2}$} & $-$ \\

DP-SVRG $\text{\cite{wang2017differentially}}^\dagger$ & \footnotesize{$\frac{d\log(\client)\log(1/\delta)}{\mu \client^2\varepsilon^2}$} & \footnotesize{$\frac{\sqrt{d\log(1/\delta)}}{\client \varepsilon}$}\\

DP-SRM \cite{wanglingxiao:2019} & $-$ & \footnotesize{$\frac{\sqrt{d\log(1/\delta)}}{\client \tilde{n}\varepsilon}$} \\

DP-SGD-LS \cite{wang:2019} & $-$ & \footnotesize{$\frac{\sqrt{\tilde{d}_\sigma \log(1/\delta)}}{\client \varepsilon}$} \\

DP-Fed-LS$^{\ast}$ & \footnotesize{$\frac{d_\sigma \log(\sample)\log(1/\delta)}{\mu_\sigma \client^2 \varepsilon^2}$} & \footnotesize{$\frac{\sqrt{\tilde{d}_\sigma \log(1/\delta)}}{\client \varepsilon}$} \\

\hline

Fed-Avg $\text{\cite{li:2019}}^{\dagger}$ & \footnotesize{$\frac{\varsigma^2(0)}{\mu^2\client \step \epsilon} \!+\! \frac{\heteG^2 \step}{\mu^2 \varepsilon}$} & $-$
\\

Fed-Avg $\text{\cite{khaled2020tighter}}^{\dagger}$ & \footnotesize{$\frac{\varsigma^2(0) + \heteG^2}{\mu \client \step \epsilon}\!+\! \frac{\varsigma(0)+\heteG}{\mu\sqrt{\epsilon}} \!+\! \frac{\client \heteB}{\mu} ^\ddagger$} & $-$
\\

Fed-Avg \cite{karimireddy2020scaffold} & \footnotesize{$\frac{\varsigma^2(0)}{\mu \sample \step \epsilon} \!+\! \frac{(1-\tau)\heteG^2}{\mu\sample\epsilon} \!+\! \frac{\heteG
}{\mu\sqrt{\epsilon}} \!+\! \frac{\heteB^2}{\mu} ^\ddagger$} & \footnotesize{$\frac{\varsigma^2(0)}{\sample \step \epsilon^2} \!+\! \frac{(1-\tau)\heteG^2}{\sample \epsilon^2} \!+\! \frac{\heteG}{\epsilon^{2/3}} \!+\! \frac{\heteB^2}{\epsilon}$}
\\

DP-Fed-LS &  \footnotesize{$\frac{\varsigma^2(\sigma)}{\mu_\sigma \sample \step \epsilon} \! + \! \frac{(1-\tau)\heteG^2}{\mu_\sigma\sample\epsilon}\! +\! \frac{\heteG
}{\mu_\sigma\sqrt{\epsilon}} \!+\! \frac{(1+4\sigma)\heteB^2}{\mu_\sigma} \!+\! \frac{d_\sigma \lip\nu_1^2}{\mu_\sigma\sample^2\epsilon}^\ddagger$} & \footnotesize{$\frac{\varsigma^2(\sigma)}{\sample \step \epsilon^2} \!+\! \frac{(1-\tau)\heteG^2}{\sample \epsilon^2} \!+\! \frac{\heteG}{\epsilon^{2/3}} \!+\! \frac{(1+4\sigma^2)\heteB^2}{\epsilon} \!+\! \frac{\tilde{d}_\sigma \lip^2 \nu_1^2}{\sample^2\epsilon^2}$} \\
\hline

\end{tabular}
\label{tbl:rate-comparision}
\end{table}

\section{Background and Related Works}
\label{sec:background}
\

\textit{\textcolor{header1}{Risk of Federated Learning.}} Despite its decoupling of training from direct access to raw data, federated learning may suffer from the risk of privacy leakage by unintentionally allowing malicious clients to participate in the training \cite{hitaj:2017, melis2019exploiting, zhu:2019}.
% Hitaj et al. (\citeyear{hitaj:2017}) show that an adversary client can train a GAN to generate prototypical samples of 
% the private training data owned by 
% other users, and deceive the victims to reveal more information.  Melis et al. (\citeyear{melis:2018}) demonstrate that a curious client may infer the presence of exact data point, or some unintended properties of other clients' data through gradient exchange. Zhu et al. (\citeyear{zhu:2019}) show that it is possible to obtain the private training data from the publicly shared gradients by optimization in distributed learning system. 
In particular, model poisoning attacks are introduced in \cite{bagdasaryan2020backdoor, bhagoji:2019}. Even though we can ensure the training is private, the released model may also leak sensitive information about the training data. Fredrikson et al. introduce the model inversion attack that can infer sensitive features or even recover the input given a model \cite{fredrikson:2014,Fredrikson:2015}. Membership inference attacks can determine whether a record is in the training set by leveraging the ubiquitous overfitting of machine learning models \cite{Shokri:2017, yeom:2018, sablayrolles:2019}. In these cases, simply decoupling the training from direct access to private data is insufficient to guarantee data privacy.
%not a enough protection.

\textit{\textcolor{header1}{Differential Privacy.}} Differential privacy comes as a solution for privacy protection. Dwork et al. consider output perturbation with noise calibrated according to the sensitivity of the function \cite{Dwork:2004, Dwork2009calibrating}. 
% Chaudhuri et al. (\citeyear{Chaudhuri:2008, chaudhuri:2011}) apply the output perturbation to empirical risk minimization (ERM) and propose objective perturbation. 
Gradient perturbation \cite{bassily:2014, Abadi:2016} receives lots of recent attention in ML applications since it admits the public training process and ensures DP guarantee even for a non-convex objective. % \cite{yu:2018}. 
%Wang et al. \cite{wang:2016} reveals some intricate relationship between learnability, stability and privacy about ERM. 
Feldman et al. argue that one can amplify the privacy guarantee by hiding the intermediate results of contractive iterations \cite{Feldman:2018}. Papernot et al. propose PATE that bridges the target model and training data by multiple teacher models \cite{papernot:2017, papernot:2018}. Mironov proposes a natural relaxation of DP based on R\'enyi divergence (RDP), which allows tighter analysis of composite heterogeneous mechanisms \cite{mironov:2017renyi}. Wang et al. provide a tight 
numerical upper bound on RDP parameters for randomized mechanism with uniform subsampling \cite{wang2019subsampled}. Furthermore, they extend their bound to the case of Poisson subsampling \cite{zhu2019poission}, which is the same as the one in \cite{Mironov2019sampled}. Our work is based on these two numerical results, and we derive new closed-form bounds which are more precise or tighter than previous works \cite{wanglingxiao:2019,Mironov2019sampled,bun:2018}.

\textit{\textcolor{header1}{Differential Privacy in Distributed Settings.}} 
DP has been applied in many distributed learning scenarios. Pathak et al. propose the first DP training protocol in distributed setting \cite{pathak:2010}. Jayaraman et al. \cite{jayaraman:2018} reduce the noise needed in \cite{pathak:2010}.
% by a factor of $m$.
% by adding the noise inside the secure computation after aggregation . 
Zhang et al. propose to decouple the feature extraction from the training process \cite{zhang:2019}, where clients only need to extract features with frozen pre-trained convolutional layers and perturb them with Laplace noise. 
However, this method needs to introduce extra edge servers besides the central server in the standard federated learning. 
Agarwal et al. \cite{agarwal:2018} further take both communication efficiency and privacy into consideration. 
% Agarwal et al. (\citeyear{agarwal:2018}) derive a new Binomial mechanism to accommodate to their gradient quantization for communication efficiency. 
% Truex et al. (\citeyear{truex:2018}) argue that leveraging secure multiparty computation (SMC) can help reduce the noise needed by DP.
%and they introduce a tunable trust parameter which accounts for various trust scenarios.

Geyer et al. \cite{Geyer:2017} and McMahan et al. \cite{mcmahan:2017} consider a similar problem setting as this paper, which applies the Gaussian mechanism in federated learning to ensure DP. However, Geyer et al. \cite{Geyer:2017} only train models over MNIST, with repetition of the data across different clients, which is unrealistic in applications. McMahan et al. \cite{mcmahan:2018} use moment accountant in \cite{Mironov2019sampled, zhu2019poission}, and show that given a sufficiently large number of clients ($\sim$ 760K in their example), their models suffer no utility degradation.
%degeneration. 
However, in many scenarios, one has to deal with a much smaller number of clients, which will induce a large noise level with the same DP constraint, significantly reducing the utility of the models. This motivates us to leverage Laplacian smoothing to mitigate the utility degradation due to DP, broadening its scope of application. And we further provide convergence bounds and evaluate the membership privacy of our method by the membership inference attack.

\section{Differentially Private Federated Learning with Laplacian Smoothing} 
\label{sec:method}
In this section, we formulate the basic scheme of private (noisy) federated learning with Laplacian smoothing. Consider the following distributed optimization model,
\begin{equation*}
    \min_{w} f(w) \coloneqq \frac{1}{\client} \sum_{j=1}^\client f_j(w),
\end{equation*}
where $f_j$ represent the loss function of client $j$, and $N$ is the number of clients. Here $f_j(w) = \mathbb{E}_{x_j} f_j(w, x_j, y_j)$, where $\mathbb{E}_{x_j}$ is the expectation over the dataset of the $j$-th client. 

%To solve the optimization problem , our main algorithm, 
We propose \textit{differentially private federated learning with Laplacian smoothing} (DP-Fed-LS), which is summarized in Algorithm~\ref{algorithm:1}, to solve the above optimization problem. In each communication round $t$, the server distributes the global model $w^{t}$ to a selected subset out of $\client$ total clients. These selected (active) clients will perform $\step$ steps mini-batch SGD to update the models on their private data,
% with batch size $\batch$ for $\epoch$ epochs, 
and send back the model update $\Delta^{t,\step}_j$s, from which the server will aggregate and yield a new global model $w^{t+1}$. This process will be repeated until the global model converges. We call a setting \textbf{IID} if data of different clients are sampled from the same distribution independently, otherwise we call it \textbf{Non-IID} \cite{mcmahan:2016}.

In each update of the mini-batch SGD, we bound the local model $w_j^{t,i}, i\in[\step]$ within a $\clip$-ball ($\clip>0$) centering around $w^{t}$ by clipping: clip($v,\clip$) $\leftarrow {v} / {\max}(1, \| v \|_2/\clip)$. %where $G$ is the sensitivity of the update. 
In each round, we regard the aggregation of locally-trained models as the  \textit{federated average of gradients}, where we add calibrated Gaussian noise ${\bf{n}} \sim \mathcal{N}(\textbf{0},\nu^2 \textbf{I})$ to guarantee DP. Then we apply Laplacian smoothing with a smoothing factor $\sigma$ on the noisy aggregated \textit{federated average of gradients} (Eq. (*) in Algorithm~\ref{algorithm:1}), to stabilize the training while preserving DP based on the post-processing lemma (Proposition 2.1  of \cite{Dwork:2014book}). It will reduces to DP-Fed if ${\bf{A}}_{\sigma}=\textbf{I}, \ \mbox{i.e.}\ \sigma=0$.

\begin{algorithm}[htbp]
   \caption{Differentially-Private Federated Learning with Laplacian Smoothing (DP-Fed-LS)}
   \label{algorithm:1}
   \small
\begin{multicols}{2}
\begin{algorithmic}
    \STATE \hskip-0.75em \textit{parameters:}
    \STATE  activate client fraction $\tau \in (0,1]$
    \STATE  total communication round $T$
    \STATE  clipping parameter $\clip$
    \STATE local and global learning rate $\eta_l, \eta_g$
    \STATE  noise level $\nu$

   \bigskip
   
   \STATE \hskip-0.75em {\bfseries function} \textsc{ClientUpdate}($j$, $w^{t}$)
        \STATE $w_j^{t,0}$ $\leftarrow w^{t}$
        \FOR{$i=0$ {\bfseries to} $\step-1$}
            \STATE $g_j(w_j^{t, i}) \leftarrow \mbox{mini-batch gradient}$
            \STATE $w_j^{t,i+1} \leftarrow w^{t} + \textsc{Clip}\big( w_j^{t,i} - \eta_l g_j(w_j^{t,i}) - w^{t}, \clip \big)$
        \ENDFOR
        \STATE return $\Delta^{t}_{j} \leftarrow {w_j^{t, \step}}-{w^{t}} $
        
    \smallskip

   \STATE \hskip-0.75em {\bfseries function} \textsc{Clip}($v$, $\clip$) return ${v} / {\max}(1, \| v \|_2/\clip)$
   
   \smallskip
   
   \STATE \hskip-0.75em {\bfseries Server executes:}
   \STATE initialize $w^0$
   \FOR{$t=0$ {\bfseries to} $T-1$}
   \STATE $\mathcal{S}_{t}$ $\leftarrow$ (a random subset of clients selected by uniform or Possion subsampling with ratio $\tau$)
   \STATE $\sample \leftarrow |\mathcal{S}_t|$
   
   \FOR{client $j\in \mathcal{S}_{t}$ {\bfseries in parallel}}
   \STATE $\Delta^{t}_{j} \leftarrow \textsc{ClientUpdate}(j, w^{t})$ 
   \ENDFOR
   \STATE $\Delta^{t} \leftarrow \frac{\eta_g}{\sample}  \textbf{A}_\sigma^{-1} \big( \sum_{j=1}^\sample \Delta_j^{t} +  \mathcal{N}(\textbf{0}, \nu^2\textbf{I}) \big)  \hfill{(*)} $
   \STATE ${w^{t+1}} \leftarrow {w^{t}} + \Delta^{t} $
   \ENDFOR

   \STATE Output  $\bar{w}^T =\sum_{t=0}^T a_t w^t, \ a_t\in [0,1]$ \& $\sum a_t=1$
   
   %\STATE $\sum  a_t=1$

\end{algorithmic}
\end{multicols}
\end{algorithm}

\subsection{Laplacian Smoothing}
To understand the Laplacian smoothing in DP-Fed-LS, consider the following general iteration:
\begin{equation}
    {w}^{t+1} = {w}^{t}- \eta {\bf{A}}_{\sigma}^{-1}  \nabla f(w^t, x_{i_t}, y_{i_t}),
\end{equation}
where $\eta$ is the learning rate and $f(w, x_{i_t}, y_{i_t})$ is the loss of a given model with parameter $w$ on the training data $\{ x_{i_t},y_{i_t} \}$.
% and $i$ is a random sample from [$n$].
In Laplacian smoothing \cite{Osher:2018}, we let ${\bf{A}}_{\sigma}={\bf{I}}+\sigma {\bf{L}}$, where ${\bf L} \in \mathbb{R}^{d \times d}$ is the 1-dimensional Laplacian matrix of a cycle graph, i.e. ${\bf A}_\sigma$ a circulant matrix whose first row is $(1+2\sigma,-\sigma,0,\cdots,0,-\sigma)$ with $\sigma\geq 0$ being a constant. 
%i.e. a circulant matrix ${\bf{A}}_\sigma$ whose diagonal elements ${\bf{A}}_{\sigma}(i,i)=1+2\sigma$, off diagonal ${\bf{A}}_{\sigma}(i,i+1)=-\sigma$ ($i\neq j$), and otherwise $0$, for some constant $\sigma\geq 0$. 
When $\sigma=0$, Laplacian smoothing stochastic gradient descent reduces to SGD. 

Laplacian smoothing can be effectively implemented by using the fast Fourier transform.
To be specific, for any 1-D signal $v$ (here $ \nabla f(w^t, x_{i_t}, y_{i_t})$), we would like to calculate $u=\textbf{A}_\sigma^{-1} v$. Since $v = \textbf{A}_\sigma u = u - \sigma d * u$, where $d = [-2,1,0,...,0,1]^T$ and $*$ denotes the convolutional operator. We have the following equality by exploiting the 1-D fast Fourier transform (FFT) 
\begin{equation}\label{eq:fft}
    \text{fft}(v) = \text{fft}(u) \cdot \big(1 - \sigma \cdot \text{fft}(d) \big),
\end{equation}
where $\cdot$ is pointwise multiplication. In other words, the Laplacian matrix ${\bf{L}}$ has eigenvectors defined by the Fourier basis, which diagonalizes convolutions via 1-D fast Fourier transform. Going back to Eq. (\ref{eq:fft}), we solve $u$ by applying the inverse Fourier transform
\begin{equation*}
    u = \text{ifft} \Big( \frac{\text{fft}({v})}{{1}-\sigma \cdot \text{fft}({d})} \Big).
\end{equation*}

%(Section~\ref{sec:ls} in supplementary material).

% ${\bf{A}}_{\sigma}^{-1} v$ ($v\in \mathbb{R}^d$) can be efficiently computed via the fast Fourier transform
% %Fast Fourier Transform 
% (Section~\ref{sec:ls} in supplementary material).

The motivation behind Laplacian smoothing lies in that if the target parameter $v$ is smooth under Fourier basis, then when it is contaminated by Gaussian noise, i.e.
$ \tilde{v}=v+{\bf n}$, $v\in \mathbb{R}^d, {\bf n}\sim {\mathcal N}(\textbf{0},\nu^2 \textbf{I}),$
a smooth approximation of $\tilde{v}$ is helpful to reduce the noise. The Laplacian smoothing estimate is defined by 
\begin{equation}
    \hat{v}_{LS}:=\arg\min_u \|u-\tilde{v}\|^2 + \sigma \|\nabla u\|^2,
\end{equation}
where $\nabla$ is a 1-dimensional gradient operator such that $\bf{L}=\nabla^T \nabla$. It satisfies 
${\bf A}_\sigma \hat{v}_{LS} = \tilde{v} = v+{\bf n}$.
%where ${\bf A}_\sigma=I + \sigma \Delta$.
The following proposition characterizes the prediction error of Laplacian smoothing estimate $\hat{v}_{LS}$.

\begin{proposition}[Bias-Variance decomposition] \label{prop:ls-risk} Let the graph Laplacian have eigen decomposition $\Delta {\bf e}_i = \lambda_i {\bf e}_i$ with eigenvalues $0=\lambda_1\leq \lambda_2\leq \ldots\leq \lambda_d$ and the first eigenvector ${\bf e_1}={\bf 1}/\sqrt{d}$. Then the mean square error (risk) of estimate $\hat{v}_{LS}$ admits the following decomposition,
\begin{equation*}
    \begin{aligned}
    \mathcal{R}(\hat{v}_{LS}):=\mathbb{E}\|\hat{v}_{LS} - v\|^2 
    &= \|({\bf I} - {\bf A}_\sigma^{\dagger}) v\|^2 + \mathbb{E} \|{\bf A}_\sigma^{\dagger} {\bf n}\|^2 \\
    &= \sum_{i} \frac{\sigma^2 \lambda_i^2}{(1+\sigma \lambda_i)^2} \langle v, {\bf e}_i\rangle^2 + \sum_{i} \frac{\nu^2}{(1+\sigma \lambda_i)^2} ,
    \end{aligned}
\end{equation*}
where the first term is called the {\bf{bias}} and the second term is called the {\bf{variance}}.
\end{proposition}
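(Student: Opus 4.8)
The plan is to first obtain a closed form for the minimizer $\hat{v}_{LS}$, then split the error into a deterministic (bias) part and a stochastic (variance) part, and finally evaluate both parts in the eigenbasis of the graph Laplacian.

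First I would derive the normal equations for $\hat{v}_{LS}$. Since $\mathbf{L} = \nabla^T\nabla$, the penalty equals $\|\nabla u\|^2 = u^T \mathbf{L} u$, so the objective $\|u - \tilde v\|^2 + \sigma\, u^T \mathbf{L} u$ is a strictly convex quadratic in $u$ (its Hessian $2(\mathbf{I} + \sigma\mathbf{L})$ is positive definite because $\mathbf{L}\succeq 0$). Setting the gradient to zero gives $(\mathbf{I} + \sigma\mathbf{L})u = \tilde v$, i.e. $\mathbf{A}_\sigma \hat v_{LS} = \tilde v$, which is exactly the stated optimality relation $\mathbf{A}_\sigma \hat v_{LS} = v + \mathbf{n}$. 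Because every eigenvalue of $\mathbf{A}_\sigma$ equals $1 + \sigma\lambda_i \ge 1 > 0$, the matrix is invertible and $\mathbf{A}_\sigma^{\dagger} = \mathbf{A}_\sigma^{-1}$, so $\hat v_{LS} = \mathbf{A}_\sigma^{-1}(v + \mathbf{n})$.

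Next I would subtract $v$ and separate the two sources of error. Writing $\hat v_{LS} - v = -(\mathbf{I} - \mathbf{A}_\sigma^{-1})v + \mathbf{A}_\sigma^{-1}\mathbf{n}$, I take the expectation of the squared norm and expand into three terms. The cross term $-2\langle (\mathbf{I} - \mathbf{A}_\sigma^{-1})v,\ \mathbf{A}_\sigma^{-1}\mathbb{E}[\mathbf{n}]\rangle$ vanishes because $\mathbb{E}[\mathbf{n}] = \mathbf{0}$, leaving $\mathcal{R}(\hat v_{LS}) = \|(\mathbf{I} - \mathbf{A}_\sigma^{-1})v\|^2 + \mathbb{E}\|\mathbf{A}_\sigma^{-1}\mathbf{n}\|^2$, which is the first displayed line.

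Finally I would diagonalize. Since $\mathbf{L}$ is symmetric, its eigenvectors $\{\mathbf{e}_i\}$ form an orthonormal basis and $\mathbf{A}_\sigma^{-1}$ acts as multiplication by $(1+\sigma\lambda_i)^{-1}$ on $\mathbf{e}_i$, so $\mathbf{I} - \mathbf{A}_\sigma^{-1}$ acts as $\sigma\lambda_i/(1+\sigma\lambda_i)$. Expanding $v = \sum_i \langle v, \mathbf{e}_i\rangle \mathbf{e}_i$ and invoking Parseval yields the bias term $\sum_i \sigma^2\lambda_i^2(1+\sigma\lambda_i)^{-2}\langle v, \mathbf{e}_i\rangle^2$. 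For the variance I would use the trace identity $\mathbb{E}\|\mathbf{A}_\sigma^{-1}\mathbf{n}\|^2 = \mathbb{E}[\mathbf{n}^T\mathbf{A}_\sigma^{-2}\mathbf{n}] = \nu^2\,\mathrm{tr}(\mathbf{A}_\sigma^{-2}) = \nu^2\sum_i(1+\sigma\lambda_i)^{-2}$, with $\mathbb{E}[\mathbf{n}\mathbf{n}^T] = \nu^2\mathbf{I}$; equivalently, expand $\mathbf{n}$ in the eigenbasis and note that the rotational invariance of the isotropic Gaussian keeps the coordinates i.i.d. $\mathcal{N}(0,\nu^2)$. This produces the second displayed line. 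The computation is essentially routine; the only points that demand care are the invertibility of $\mathbf{A}_\sigma$ (so that the pseudoinverse coincides with the inverse), the orthonormality of the eigenbasis, and the rotational invariance of $\mathbf{n}$ in the variance step.
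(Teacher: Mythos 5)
Your proof is correct and complete; the paper itself states Proposition~\ref{prop:ls-risk} without an explicit proof (the supplementary Section~\ref{sec:ls} only illustrates it numerically), and your derivation is exactly the standard argument the statement presupposes: the normal equations giving $\mathbf{A}_\sigma \hat v_{LS}=\tilde v$, the vanishing cross term from $\mathbb{E}[\mathbf{n}]=\mathbf{0}$, and diagonalization in the Laplacian eigenbasis. Your trace computation for the variance term is the same identity the paper uses in its noise-reduction lemmas (Lemmas~\ref{lemma-ls-noise-reduction} and~\ref{lemma-ls-noise-reduction-2}), so nothing is missing.
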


In the bias-variance decomposition of the risk above, if $\sigma=0$, the risk becomes bias-free with variance $d \nu^2$; if $\sigma>0$, bias is introduced while variance is reduced. The optimal choice of $\sigma$ must depend on an optimal trade-off between the bias and variance in this case. When the true parameter $v$ is smooth, in the sense that its projections $\langle v, {\bf e}_i\rangle \to 0$ rapidly as $i$ increases, the introduction of bias can be much smaller compared to the reduction of variance, hence the mean squared error (risk) can be reduced with Laplacian smoothing. A bias-variance trade-off with similar idea for graph neural network can be found in \cite{nt2019revisiting}. 

\subsection{Sparsity of aggregated gradients in the Fourier basis}
To verify that the true signal $v$ is smooth or sparse with respect to the Fourier basis, we show in Figure~\ref{fig:fft_of_signal} the magnitudes distribution in frequency domain of $v=\frac{1}{\sample} \sum_j \Delta_j^t$, the \textit{federated average of gradients} in non-DP federated learning under the fast Fourier transform. It is a typical example with experimental setting described in Section~\ref{sec:cnn} and four models at different training communication rounds ($t=1, 50, 100, 200$) are shown. One can see that from the log-log plot, as the communication round and frequency
grow, the magnitudes of Fourier coefficients demonstrate a power law decay with respect to the frequency, indicated by a linear envelope between $\log_{10}(\text{Magnitude})$ and $\log_{10}(\text{Frequency})$ when $\log_{10}(\text{Frequency})$ increases. In other words, it shows that the projections of magnitudes $\langle v, \textbf{e}_i \rangle \rightarrow 0$ at a polynomial rate when the frequency in Fourier basis is large enough, supporting the assumption above for variance reduction.

\begin{figure}[htbp]
    \centering
    \subfigure[]{\includegraphics[width=1.5in]{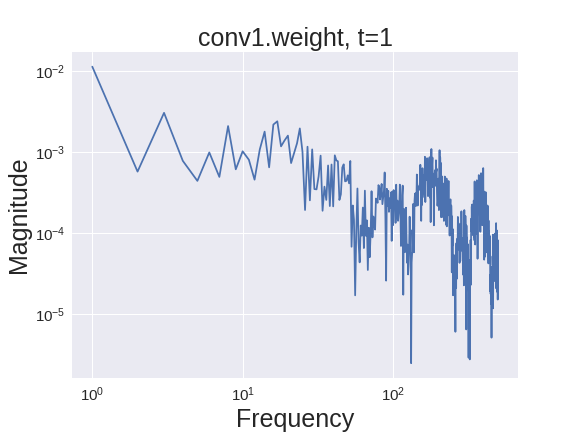}}
    \subfigure[]{\includegraphics[width=1.5in]{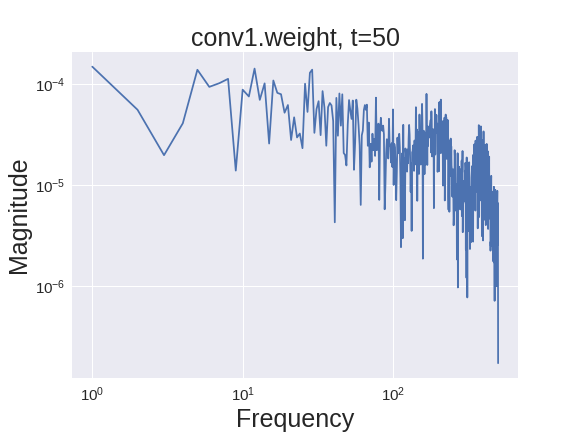}}
    \subfigure[]{\includegraphics[width=1.5in]{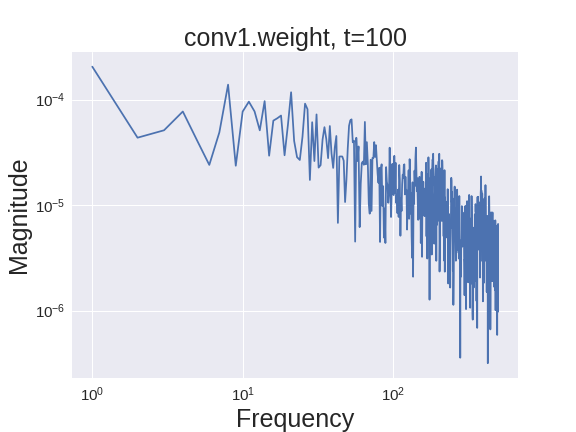}}
    \subfigure[]{\includegraphics[width=1.5in]{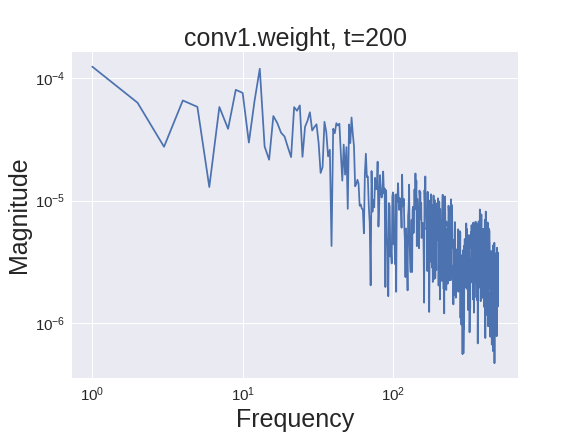}} \\
    \subfigure[]{\includegraphics[width=1.5in]{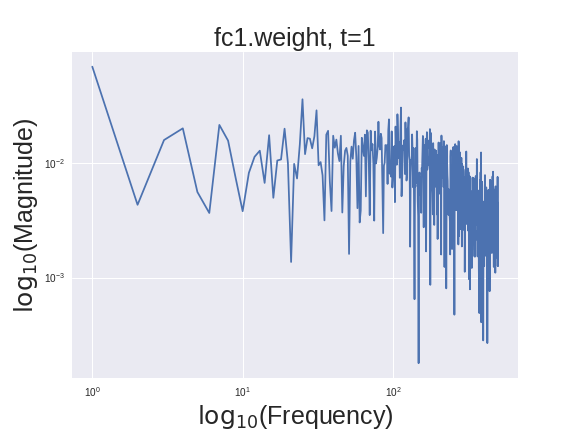}}
    \subfigure[]{\includegraphics[width=1.5in]{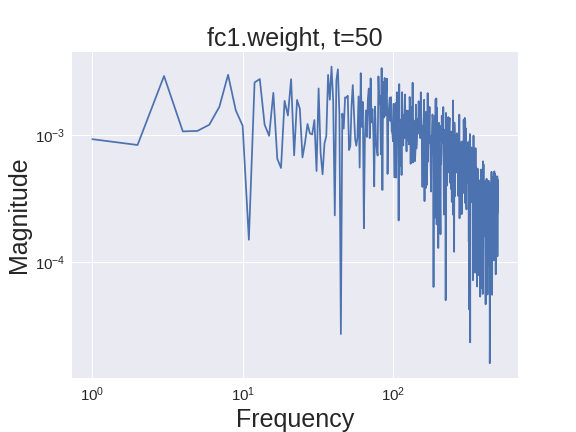}}
    \subfigure[]{\includegraphics[width=1.5in]{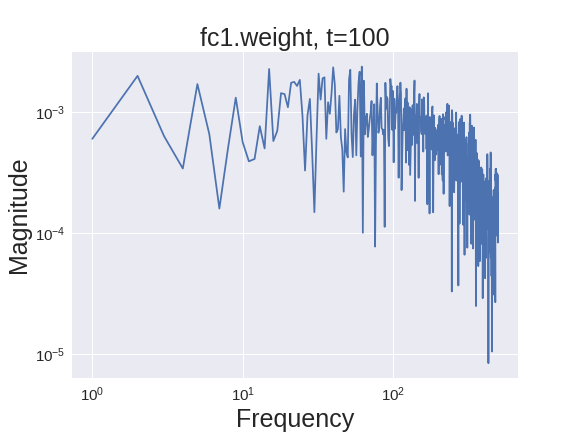}}
    \subfigure[]{\includegraphics[width=1.5in]{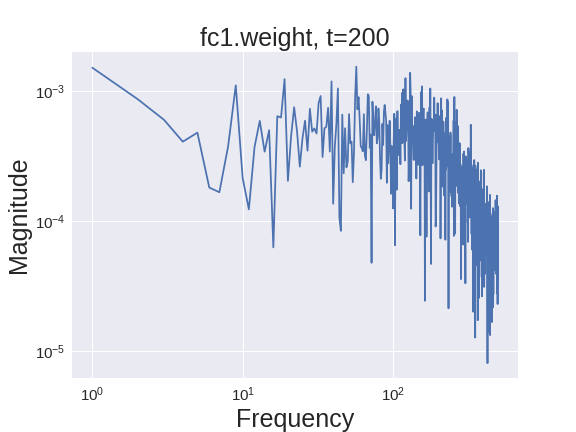}} \\
    \caption{Frequency distribution of federated average of gradients $v=\frac{1}{\sample} \sum_j \Delta_j^t$ over different communication rounds $t$ in non-DP federated learning, following the setting in Section~\ref{sec:cnn}. Here we use the first convolutional layer (conv1.weight) and the first fully-connected layer (fc1.weight) as an example. %\BW{Make the font size of the figures much bigger!}
    }
    \label{fig:fft_of_signal}
\end{figure}

In Section \ref{sec:ls} (supplementary material), %\textcolor{red}{we show that the aggregated model updates in non-DP federated learning are smooth in realistic scenarios.}
we demonstrate an additional classification example where Laplacian smoothing reaches improved estimates of smooth signals (parameters) against Gaussian noise. And we also give an example to show the trade-off in Proposition~\ref{prop:ls-risk}. Among a variety of usages such as reducing the variance of SGD on-the-fly, escaping spurious minima, and improving generalization in training many machine learning models including neural networks \cite{Osher:2018, wang:2019},
%\yy{citations? or elaborate a bit more?}, 
the Laplacian smooting in this paper particularly improves the utility when Gaussian noise is injected to federated learning for privacy, that will be discussed below. 
%\BW{LS, FL are not defined.} %shown in Section \ref{sec:experiment}.

\section{Differential Privacy Bounds}
\label{sec:theoretical-main}
We provide closed-form DP guarantees for differentially private federated learning, with or without LS, under both scenarios that active clients are sampled with uniform subsampling or with Poisson subsampling. 
%\YY{Note that these bounds are independent to the Laplacian smoothing parameter $\sigma$ and thus applied to the DP-Fed when $\sigma=0$.} 
Let us recall the definition of differential privacy and R\'enyi differential privacy (RDP).

\begin{definition}[$(\varepsilon$,$\delta)$-DP \cite{Dwork:2014book}]\label{dp-definition}
A randomized mechanism $\mathcal{M}:\mathcal{D} \rightarrow \mathbb{R}^d$ satisfies ($\varepsilon$,$\delta$)-DP if for any two adjacent datasets $D, D' \in \mathcal{D}$ differing by only one element, and any output subset $O \subseteq \mathbb{R}^d$, it holds that
$$\mathbb{P}[\mathcal{M}(D)\in O] \leq e^{\varepsilon} \cdot \mathbb{P}[\mathcal{M}(D')\in O]+\delta.$$
\end{definition}

\begin{definition}[$(\alpha, \rho)$-RDP \cite{mironov:2017renyi}]\label{RDP}
For $\alpha>1 \ \mbox{and}\ \rho>0$, a randomized mechanism $\cM:\mathcal{D}\rightarrow\cR^d$ satisfies $(\alpha, \rho)$-R\'enyi DP, i.e. $(\alpha, \rho)$-RDP, if for all adjacent datasets $D, D^\prime \in \mathcal{D}$ differing by one element, it has 
$$D_{\alpha}\big(\cM(D)||\cM(D^\prime)\big):=\frac{1}{\alpha-1}\log \mathbb{E}\big(\cM(D)/\cM(D^\prime)\big)^\alpha\leq \rho.$$
%\begin{align*}
%    D_{\alpha}\big(\cM(S)||\cM(S^\prime)\big):=\frac{1}{\alpha-1}\log \mathbb{E}\bigg(\frac{\cM(S)}{\cM(S^\prime)}\bigg)^\alpha\leq \rho,
%\end{align*}
%where the expectation is taken over $\cM(S^\prime)$.
\end{definition}

\begin{lemma}[From $(\alpha, \rho)$-RDP to $(\varepsilon,\delta)$-DP \cite{mironov:2017renyi}]\label{lemma:RDP_to_DP}
	If a randomized mechanism $\cM: \mathcal{D}\rightarrow\cR^d$ satisfies $(\alpha,\rho)$-RDP, then $\cM$ satisfies $(\rho+\log(1/\delta)/(\alpha-1),\delta)$-DP for all $\delta\in(0,1)$.
\end{lemma}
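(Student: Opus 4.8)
The plan is to prove the claim by the standard ``bad event'' argument that converts a moment (R\'enyi) bound into a tail bound on the privacy loss. Writing $P := \cM(D)$ and $Q := \cM(D')$ for the two output distributions and letting $Z := P/Q$ denote the likelihood-ratio (privacy loss) variable, I would first restate the RDP hypothesis as a moment bound: by \cref{RDP} and $\alpha>1$, the inequality $\frac{1}{\alpha-1}\log \mathbb{E}_Q[Z^\alpha] \le \rho$ is equivalent to $\mathbb{E}_Q[Z^\alpha] \le e^{(\alpha-1)\rho}$, where the expectation is taken under $Q=\cM(D')$ exactly as in the stated definition.

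Next I would fix an arbitrary output set $O \subseteq \cR^d$, set $\varepsilon := \rho + \log(1/\delta)/(\alpha-1)$, and introduce the ``bad set'' $B := \{x : P(x) > e^\varepsilon Q(x)\}$. Splitting $O = (O\cap B) \cup (O\setminus B)$ and using that $P \le e^\varepsilon Q$ holds pointwise on $O\setminus B$ gives $\mathbb{P}[\cM(D)\in O] = P(O) \le P(B) + e^\varepsilon Q(O \setminus B) \le P(B) + e^\varepsilon\, \mathbb{P}[\cM(D')\in O]$. This reduces the full $(\varepsilon,\delta)$-DP inequality of \cref{dp-definition} to the single tail estimate $P(B) \le \delta$, uniformly in $O$.

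The crux is then to bound $P(B)$ using the moment bound. I would write $P(B) = \mathbb{E}_Q[\,Z\,\mathbf{1}_{\{Z > e^\varepsilon\}}\,]$ and exploit the pointwise inequality $Z\,\mathbf{1}_{\{Z>e^\varepsilon\}} = Z^\alpha\, Z^{1-\alpha}\,\mathbf{1}_{\{Z>e^\varepsilon\}} \le e^{(1-\alpha)\varepsilon}\, Z^\alpha$, which holds precisely because $1-\alpha < 0$ while $Z > e^\varepsilon$ on $B$. Combining this with the RDP moment bound yields $P(B) \le e^{(1-\alpha)\varepsilon}\,\mathbb{E}_Q[Z^\alpha] \le e^{(\alpha-1)(\rho-\varepsilon)}$. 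Substituting the chosen value $\varepsilon = \rho + \log(1/\delta)/(\alpha-1)$ makes the exponent equal to $\log\delta$, so $P(B) \le \delta$, which completes the proof since $O$ was arbitrary and $\delta \in (0,1)$ was fixed.

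I expect the only delicate point to be measure-theoretic rather than computational: the density-ratio manipulations are cleanest when $P$ and $Q$ admit densities, so in full generality I would interpret $Z$ as the Radon--Nikodym derivative $dP/dQ$ and run the same estimates, handling the $Q$-null part on which $P$ is not absolutely continuous with respect to $Q$ separately (finiteness of $D_\alpha$ forces $P \ll Q$, so this part is vacuous). Beyond that, the argument is a short chain of inequalities with no genuine obstacle.
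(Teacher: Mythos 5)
The paper does not actually prove this lemma: it is imported verbatim from \cite{mironov:2017renyi} and used as a black box in the proofs of Theorems~\ref{Theorem-privacy-guarantee-federated-uniform} and~\ref{Theorem-privacy-guarantee-federated-poisson}, so there is no in-paper argument to compare against. Your proposal is a correct, self-contained proof. Writing $Z=dP/dQ$, the chain $P(O)\le P(B)+e^{\varepsilon}Q(O)$ with $B=\{Z>e^{\varepsilon}\}$, followed by $P(B)=\mathbb{E}_Q[Z\mathbf{1}_{\{Z>e^{\varepsilon}\}}]\le e^{(1-\alpha)\varepsilon}\mathbb{E}_Q[Z^{\alpha}]\le e^{(\alpha-1)(\rho-\varepsilon)}=\delta$ at $\varepsilon=\rho+\log(1/\delta)/(\alpha-1)$, is airtight, and you correctly observe that finiteness of $D_\alpha$ for $\alpha>1$ forces $P\ll Q$, so the Radon--Nikodym manipulations are legitimate. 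It is worth noting that this is not the route taken in Mironov's original Proposition~3, which instead applies H\"older's inequality to get $P(A)\le e^{(\alpha-1)\rho/\alpha}\,Q(A)^{(\alpha-1)/\alpha}$ for every event $A$ and then splits into two cases according to whether $Q(A)$ is large or small; your Markov-type tail bound on the privacy-loss variable is the more common modern presentation (it isolates the additive $\delta$ as the probability of a single bad event, independent of $O$) and yields exactly the same constants. Either argument fully justifies the lemma as stated.
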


In federated learning, we consider the user-level DP. So the terms \textbf{element} and \textbf{dataset} in the definition will refer to a single client, and a set of clients respectively in our scenario. There are two ways to construct a subset of active clients. The first one is uniform subsampling, i.e. in each communication round, a subset of fixed size $\sample= \tau\cdot \client$ of clients are sampled uniformly. The second one is Poisson subsampling, which includes each client in the subset with probability $\tau$ independently. If we trace back to the definition, this subtle difference actually comes from the difference of how we construct the adjacent datasets $D$ and $D'$. For uniform subsampling, $D$ and $D'$ are adjacent if and only if there exist two samples $a \in D$ and $b \in D'$ such that if we replace $a$ in $D$ with $b$, then $D$ is identical with $D'$ \cite{Dwork:2014book}. However, for Poisson subsampling, $D$ and $D'$ are said to be adjacent if $D\cup \{a\}$ or $D \backslash \{a\}$ is identical to $D'$ for some sample $a$ \cite{Mironov2019sampled, zhu2019poission}. This subtle difference results in two different parallel scenarios below.

\subsection{DP guarantee for uniform subsampling}
\begin{lemma}[RDP for Uniform Subsampling]\label{lemma:sub_uniform}
Gaussian mechanism $\mathcal{M}=f(D)+\mathcal{N}(0,\nu^2)$ applied on a subset of samples drawn uniformly without replacement with probability $\tau$ satisfies $(\alpha,3.5\tau^2\alpha/\nu^2)$-RDP given $\nu^2\geq \frac{2}{3}$ and $\alpha-1 \leq \frac{2}{3}\nu^2 \ln \big(1/\alpha\tau(1+\nu^2)\big)$, where the sensitivity of $f$ is 1.
\end{lemma}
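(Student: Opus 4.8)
The plan is to take the numerical series bound for subsampled R\'enyi differential privacy of Wang et al.~\cite{wang2019subsampled}, which the paper cites as its foundation, and distill it into the stated closed form. For a base mechanism obeying $(\alpha,\epsilon(\alpha))$-RDP, their bound for uniform (without-replacement) subsampling at rate $\tau$ and integer order $\alpha\ge 2$ reads
\begin{equation*}
\epsilon'(\alpha)\le \frac{1}{\alpha-1}\log\Big(1+\tau^2\binom{\alpha}{2}\min\{4(e^{\epsilon(2)}-1),\,2e^{\epsilon(2)}\}+\sum_{j=3}^{\alpha}\tau^j\binom{\alpha}{j}2e^{(j-1)\epsilon(j)}\Big).
\end{equation*}
For the Gaussian mechanism of unit sensitivity one has $\epsilon(j)=j/(2\nu^2)$, so $\epsilon(2)=1/\nu^2$ and $(j-1)\epsilon(j)=j(j-1)/(2\nu^2)$; the task reduces to showing the right-hand side is at most $3.5\tau^2\alpha/\nu^2$.

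First I would apply $\log(1+x)\le x$, so that it suffices to bound the bracketed sum minus one. Write $T_2=\tau^2\binom{\alpha}{2}c_2$ with $c_2=\min\{4(e^{1/\nu^2}-1),\,2e^{1/\nu^2}\}$ for the leading $j=2$ term, and $T_j=\tau^j\binom{\alpha}{j}2e^{j(j-1)/(2\nu^2)}$ for $j\ge 3$. Since $\binom{\alpha}{2}/(\alpha-1)=\alpha/2$, the goal becomes
\begin{equation*}
\frac{c_2}{2}+\frac{1}{\tau^2\alpha(\alpha-1)}\sum_{j=3}^{\alpha}T_j\;\le\;\frac{3.5}{\nu^2}.
\end{equation*}
The hypothesis $\nu^2\ge 2/3$ gives $1/\nu^2\le 3/2$, and a short case split at the crossover $\nu^2=1/\ln 2$ of the two branches of the minimum shows $c_2\le 7/\nu^2$, hence $c_2/2\le 3.5/\nu^2$ with a strictly positive gap that is precisely what must absorb the tail.

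The crux is controlling $\sum_{j\ge 3}T_j$, and this is where the second hypothesis enters. Rewriting $\alpha-1\le \tfrac{2}{3}\nu^2\ln\!\big(1/(\alpha\tau(1+\nu^2))\big)$ as $e^{(\alpha-1)/\nu^2}\le (\alpha\tau(1+\nu^2))^{-2/3}$, I would bound the ratio of consecutive terms, $T_{j+1}/T_j=\tfrac{\alpha-j}{j+1}\tau e^{j/\nu^2}$, by $\tfrac{\alpha}{3}\tau e^{(\alpha-1)/\nu^2}\le \tfrac13(\alpha\tau)^{1/3}(1+\nu^2)^{-2/3}$ for $2\le j\le\alpha-1$. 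The same hypothesis forces $\alpha\tau(1+\nu^2)\le 1$, so this ratio is a genuine geometric factor strictly below $\tfrac13(1+\nu^2)^{-1}\le 1/5$; moreover the leading tail ratio $T_3/T_2$ carries an extra factor of order $\tau^{1/3}$ and is therefore small. Summing the geometric tail and comparing it against $T_2$ shows the tail is only a small fraction of the leading term, small enough to fit inside the gap left by the leading-term bound, which yields the coefficient $3.5$. (For non-integer $\alpha$ the bound follows from the integer case by the standard monotonicity/convexity of the RDP curve, or simply by using the integer order in the subsequent conversion to $(\varepsilon,\delta)$-DP.)

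The main obstacle is this final balancing act: the constant $3.5$ is tight enough that one cannot bound $c_2$ by $7/\nu^2$ and then add the tail as a separate term, since that already overshoots; instead the two hypotheses must be used in tandem. The floor $\nu^2\ge 2/3$ keeps both $c_2$ and the geometric ratio small, while the cap on $\alpha-1$ drives the high-order factors $e^{j(j-1)/(2\nu^2)}$ down fast enough—forcing $\alpha\tau$ to be exponentially small in $\alpha/\nu^2$—that the whole $j\ge 3$ tail stays dwarfed by the $j=2$ term across the entire admissible range of $\nu^2$ and $\alpha$. Certifying this uniformly, rather than merely pointwise, is the delicate constant-sensitive step.
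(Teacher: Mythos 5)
Your starting point is exactly the paper's: Theorem 9 of Wang et al.\ \cite{wang2019subsampled}, split into the $j=2$ term and the $j\ge 3$ tail, with the floor on $\nu^2$ controlling the former and the cap on $\alpha-1$ controlling the latter. But the quantitative core is not closed, and the numbers you do commit to cannot be made to work. The paper's decisive step is the inequality $\min\{4(e^{1/\nu^2}-1),\,2e^{1/\nu^2}\}\le 6/\nu^2$ for $\nu^2\ge 2/3$ (verified numerically in its Figure S1), which leaves a slack of exactly $1/\nu^2$ inside the target $7/\nu^2$; the tail is then shown to fit into that slack, since the geometric series with ratio $\tau\alpha e^{3(\alpha-1)/(2\nu^2)}\le 1/(1+\nu^2)$ sums to at most $\tau^2\binom{\alpha}{2}\cdot\frac{1}{\nu^2}$. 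You instead bound the leading coefficient by $7/\nu^2$, which consumes the entire budget and leaves no gap at all --- and you acknowledge this overshoots, but your proposed remedy (``the two hypotheses must be used in tandem'') is never instantiated as a concrete inequality. The missing fact is precisely the sharper bound $6/\nu^2$ on the minimum, and it is tight: at $\nu^2=2/3$ one has $2e^{1.5}\approx 8.96$ against $6/\nu^2=9$, so there is essentially no room to spare.

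Your tail estimate is also quantitatively insufficient. Bounding consecutive ratios $T_{j+1}/T_j$ by $\tfrac13(\alpha\tau)^{1/3}(1+\nu^2)^{-2/3}\le \tfrac15$ gives $\sum_{j\ge 3}T_j\lesssim \tfrac14\cdot \tau^2\binom{\alpha}{2}\,2e^{1/\nu^2}$, i.e.\ a contribution of about $\tfrac12 e^{1/\nu^2}$ per unit of $\tau^2\binom{\alpha}{2}$; near $\nu^2=2/3$ this is $\approx 2.24$, which exceeds the $1/\nu^2=1.5$ the paper's argument achieves and, combined with any leading bound of at least $2e^{1/\nu^2}$, pushes the total past $7/\nu^2$. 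The paper avoids this by comparing the tail not to $T_2$ but directly to $\tau^2\binom{\alpha}{2}$ via the bound $\binom{\alpha}{j}\le \alpha^j/j!$, the relaxation $(j-1)j\le (\alpha-1)j$, and the hypothesis in the form $\tau\alpha e^{3(\alpha-1)/(2\nu^2)}\le 1/(1+\nu^2)$, so that the geometric sum collapses to $\frac{1/(1+\nu^2)}{1-1/(1+\nu^2)}=\frac{1}{\nu^2}$. In short: right skeleton, but both constant-sensitive estimates (the $6/\nu^2$ bound and the $1/\nu^2$ tail bound) are absent, and the constants you propose in their place provably do not sum to $7/\nu^2$.
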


\begin{remark}
    Comparing with the result $(\alpha, 5\tau^2\alpha/\nu^2)$ in \cite{wanglingxiao:2019}, and $(\alpha, 6\tau^2\alpha/\nu^2)$ in \cite{bun:2018}, Lemma~\ref{lemma:sub_uniform} provides a tighter bound while relaxing their requirement on $\nu^2$ that $\nu^2\geq 1.5$ and $\nu^2 \geq 5$ respectively. 
\end{remark}

\begin{theorem}[Differential Privacy Guarantee for DP-Fed-LS with Uniform Subsampling]\label{Theorem-privacy-guarantee-federated-uniform}
For any $\delta \in (0,1)$, $\varepsilon>0$, DP-Fed or DP-Fed-LS with uniform subsampling, satisfies ($\varepsilon$,$\delta$)-DP when the variance of the injected Gaussian noise $\mathcal{N}(0, \nu^2)$ satisfies
%its injected Gaussian noise $\mathcal{N}(0, \nu^2)$ is chosen to be
\begin{equation} \label{eq-nu-uniform}
    \nu \geq \frac{\tau \clip}{\varepsilon}\sqrt{\frac{14T}{\lambda}\bigg( \frac{\log(1/\delta)}{1-\lambda} + \varepsilon\bigg)},
\end{equation}
if there exists $\lambda \in (0,1)$ such that $\nu^2/4\clip^2 \geq \frac{2}{3}$ and $\alpha-1 \leq \frac{\nu^2}{6\clip^2}\log(1/(\tau \alpha (1+\nu^2/4\clip^2)))$, where $\alpha= \log(1/\delta)/((1-\lambda)\varepsilon) +1$.
% $\clip$ is the $\ell_2$-bound of clipping mapped on the \textit{federated average of gradients}, 
% $\tau:=\sample/\client$ is the subsampling ratio of active clients, and
% $T$ is the total number of communication rounds.
\end{theorem}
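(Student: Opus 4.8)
The plan is to reduce the entire $T$-round algorithm to a $T$-fold adaptive composition of a single subsampled Gaussian mechanism, control each round's privacy loss through \cref{lemma:sub_uniform}, compose in the RDP sense, convert back to $(\varepsilon,\delta)$-DP via \cref{lemma:RDP_to_DP}, and finally invert the resulting inequality to solve for $\nu$. First I would isolate the only privacy-sensitive step in \cref{algorithm:1}: line $(*)$ releases $\sum_{j=1}^{\sample}\Delta_j^t + \mathcal{N}(\mathbf{0},\nu^2\mathbf{I})$, and everything applied afterward — the scaling $\eta_g/\sample$, the linear map $\mathbf{A}_\sigma^{-1}$ (Laplacian smoothing), and the additive update $w^{t+1}=w^t+\Delta^t$ — is a deterministic post-processing of this noisy sum. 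By the post-processing property, the per-round RDP of DP-Fed-LS equals that of DP-Fed (the case $\mathbf{A}_\sigma=\mathbf{I}$), so it suffices to analyze the raw Gaussian mechanism on the aggregate.

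Next I would pin down the sensitivity. The clipping step forces $\|w_j^{t,i}-w^t\|_2\le\clip$ for every inner iterate, hence $\|\Delta_j^t\|_2\le\clip$. Under user-level ``replace-one-client'' adjacency for uniform subsampling, changing a single client alters $\sum_j\Delta_j^t$ by $\Delta_b^t-\Delta_a^t$, whose norm is at most $2\clip$; thus the $\ell_2$-sensitivity of the aggregate is $2\clip$. Rescaling the mechanism by $1/(2\clip)$ reduces it to sensitivity $1$ with effective noise variance $\nu^2/(4\clip^2)$. Feeding this effective variance into \cref{lemma:sub_uniform} yields a per-round guarantee of $(\alpha,\,14\tau^2\clip^2\alpha/\nu^2)$-RDP, since $3.5\cdot 4=14$; and the two admissibility conditions of that lemma, $\nu^2/(4\clip^2)\ge 2/3$ and $\alpha-1\le \tfrac{\nu^2}{6\clip^2}\log\!\big(1/(\tau\alpha(1+\nu^2/(4\clip^2)))\big)$, are exactly the hypotheses stated in the theorem.

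With the per-round parameter in hand, I would invoke the adaptive composition of RDP: because each round's output feeds into the next, the composition theorem for RDP applies and the parameters add, giving $(\alpha,\rho)$-RDP with $\rho=14T\tau^2\clip^2\alpha/\nu^2$ over all $T$ rounds. Converting through \cref{lemma:RDP_to_DP}, the algorithm is $(\varepsilon,\delta)$-DP as soon as $\rho+\log(1/\delta)/(\alpha-1)\le\varepsilon$. I would then split this budget with the free parameter $\lambda\in(0,1)$: set $\log(1/\delta)/(\alpha-1)=(1-\lambda)\varepsilon$, which forces $\alpha=\log(1/\delta)/((1-\lambda)\varepsilon)+1$ exactly as defined in the statement, and demand $\rho\le\lambda\varepsilon$. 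The latter inequality is $14T\tau^2\clip^2\alpha/\nu^2\le\lambda\varepsilon$; solving for $\nu^2$, substituting the chosen $\alpha$, and collecting terms reproduces $\nu^2\ge \tfrac{14T\tau^2\clip^2}{\lambda\varepsilon^2}\big(\tfrac{\log(1/\delta)}{1-\lambda}+\varepsilon\big)$, i.e.\ the claimed bound on $\nu$ after taking square roots.

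The main obstacle is bookkeeping the constants so that the rescaling is consistent end-to-end: one must verify that the factor $2\clip$ (from ``replace'' adjacency, squared into the variance) combines with the constant $3.5$ of \cref{lemma:sub_uniform} to give precisely $14$, and that the side-conditions survive the substitution $\nu^2\mapsto\nu^2/(4\clip^2)$ without weakening. A secondary subtlety is ensuring the chosen $\alpha$ simultaneously serves both roles — realizing the budget split and remaining admissible for the subsampling lemma — which is why the admissibility hypothesis must be carried as an explicit assumption rather than derived; checking that the post-processing and adaptive-composition claims hold uniformly across the $T$ data-dependent rounds is the remaining point requiring care.
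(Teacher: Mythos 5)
Your proposal is correct and follows essentially the same route as the paper's proof: bound the per-round sensitivity by $2\clip$ via clipping and replace-one adjacency, apply \cref{lemma:sub_uniform} to the rescaled subsampled Gaussian mechanism (with the post-processing lemma absorbing ${\bf A}_\sigma^{-1}$ and the server update), compose the $T$ rounds in RDP, and convert to $(\varepsilon,\delta)$-DP with the budget split $\rho\le\lambda\varepsilon$ and $\log(1/\delta)/(\alpha-1)=(1-\lambda)\varepsilon$. The only cosmetic difference is that the paper normalizes the query by $1/\sample$ (sensitivity $2\clip/\sample$, noise $\nu/\sample$) and verifies the prescribed $\nu$ rather than solving for it, which yields the identical bound.
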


\subsection{DP guarantee for Poisson subsampling}
\begin{lemma}[RDP for Poisson Subsampling]\label{lemma:sub_poisson}
Gaussian mechanism $\mathcal{M}=f(D)+\mathcal{N}(0,\nu^2)$ applied to a subset that includes each data point independently with probability $\tau$ satisfies $(\alpha,2\tau^2\alpha/\nu^2)$-RDP given $\nu^2\geq \frac{5}{9}$ and $\alpha-1 \leq \frac{2}{3}\nu^2\log \big(1/\alpha\tau(1+\nu^2)\big)$, where the sensitivity of $f$ is 1.
\end{lemma}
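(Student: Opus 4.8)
The plan is to reduce the lemma to a one-dimensional Rényi divergence computation and then control the resulting moment by isolating its dominant term. Under the Poisson adjacency notion, the mechanism on $D$ and on an adjacent $D'$ obtained by adding one record differs only in whether that record (whose contribution has $\ell_2$ norm at most the sensitivity $1$) is included, and this inclusion happens independently with probability $\tau$. By the standard worst-case reduction — translation invariance and the fact that the extremal configuration places the whole sensitivity along a single coordinate of the spherical Gaussian — it suffices to upper bound $D_\alpha(q\|p)$ and $D_\alpha(p\|q)$, where $p=\mathcal{N}(0,\nu^2)$ is the output when the record is absent and $q=(1-\tau)\mathcal{N}(0,\nu^2)+\tau\,\mathcal{N}(1,\nu^2)$ is the mixture when it is present. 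First I would write the likelihood ratio $q(x)/p(x)=(1-\tau)+\tau\,e^{(2x-1)/(2\nu^2)}$ and, using the Gaussian moment generating function $\mathbb{E}_{x\sim\mathcal{N}(0,\nu^2)}[e^{k(2x-1)/(2\nu^2)}]=e^{k(k-1)/(2\nu^2)}$, obtain for integer $\alpha$ the exact expansion
$$\mathbb{E}_{x\sim p}\left[\left(\tfrac{q(x)}{p(x)}\right)^\alpha\right]=\sum_{k=0}^{\alpha}\binom{\alpha}{k}(1-\tau)^{\alpha-k}\tau^k\,e^{k(k-1)/(2\nu^2)},$$
which agrees with the numerical moment bound of \cite{zhu2019poission,Mironov2019sampled} that we are entitled to invoke; for real $\alpha>1$ the same quantity is controlled by the corresponding binomial-series estimate.

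The second step is to subtract off the ``no-loss'' baseline. Because replacing every $e^{k(k-1)/(2\nu^2)}$ by $1$ collapses the sum into $((1-\tau)+\tau)^\alpha=1$, and the $k=0,1$ terms already carry exponent zero, I can rewrite the moment as $1+S$ with $S=\sum_{k=2}^{\alpha}\binom{\alpha}{k}(1-\tau)^{\alpha-k}\tau^k\big(e^{k(k-1)/(2\nu^2)}-1\big)$. The leading ($k=2$) term is at most $\tfrac{\alpha(\alpha-1)}{2}\tau^2(e^{1/\nu^2}-1)$, and since $\log(1+S)\le S$ the whole argument reduces to showing $S\le \tfrac{2\tau^2\alpha(\alpha-1)}{\nu^2}$, which after dividing by $\alpha-1$ gives the target rate $2\tau^2\alpha/\nu^2$. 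The elementary inequality $e^{1/\nu^2}-1\le 4/\nu^2$, valid precisely because $\nu^2\ge 5/9$ forces $1/\nu^2\le 1.8$, places the $k=2$ term comfortably below this budget and, crucially, leaves a fixed multiplicative slack to absorb the higher-order terms.

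The crux, and the main obstacle, is showing that the tail $\sum_{k\ge 3}$ does not spoil this estimate. I would bound the ratio of consecutive terms, $\tfrac{t_{k+1}}{t_k}\approx \tfrac{\alpha-k}{k+1}\,\tau\,e^{k/\nu^2}$, and argue that the hypothesis $\alpha-1\le \tfrac{2}{3}\nu^2\log\!\big(1/(\alpha\tau(1+\nu^2))\big)$ is exactly what forces this ratio below one for all $k\ge 2$, so that $S$ is dominated by a geometric series with small ratio whose sum is swallowed by the slack between $e^{1/\nu^2}-1$ and $4/\nu^2$. Getting the constants to land exactly at $2$, rather than at a looser value, is the delicate part: it rests on the joint interplay of the lower bound $\nu^2\ge 5/9$ and the logarithmic cap on $\alpha$, and it mirrors the accounting already used for the uniform-subsampling bound in \cref{lemma:sub_uniform}. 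Finally I would treat the reverse direction $D_\alpha(p\|q)$, whose likelihood ratio is bounded and which by a parallel, and in the relevant regime no larger, computation yields no bigger a value; taking the maximum of the two directions then gives the RDP order $2\tau^2\alpha/\nu^2$ and completes the proof.
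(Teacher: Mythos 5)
Your proposal follows essentially the same route as the paper's proof: both start from the binomial moment expansion for the subsampled Gaussian mechanism (the numerical bound of Mironov et al.\ and Zhu--Wang), reduce the $k\le 1$ terms to the baseline $1$, bound the $k=2$ term by an elementary inequality that is exactly where the hypothesis $\nu^2\ge 5/9$ enters (the paper uses $e^{1/\nu^2}\le 7/(2\nu^2)$, you use $e^{1/\nu^2}-1\le 4/\nu^2$), control the $k\ge 3$ tail by a geometric series whose ratio is driven below one by the condition on $\alpha$, and finish with $\log(1+S)\le S$ and division by $\alpha-1$. The only differences are bookkeeping: your exact decomposition $1+\sum_{k\ge 2}\binom{\alpha}{k}(1-\tau)^{\alpha-k}\tau^k\bigl(e^{k(k-1)/(2\nu^2)}-1\bigr)$ shuffles constants between the $k=2$ term and the tail relative to the paper's split $\tfrac{7}{2\nu^2}+\tfrac{1}{2\nu^2}=\tfrac{4}{\nu^2}$, and your tail estimate, though only sketched, invokes the hypothesis on $\alpha$ in the same way the paper does.
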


\begin{remark}
    %Lemma~\ref{lemma:sub_poisson}'s bound equals to 
    The bound in Lemma~\ref{lemma:sub_poisson} matches the bound of $(\alpha, 2\alpha\tau^2/\nu^2)$-DP in \cite{Mironov2019sampled}. However, we relax the requirement that $\nu \geq 4$ used in \cite{Mironov2019sampled}, and simplify the multiple requirements over $\alpha$ that $1<\alpha\leq \frac{\nu^2C}{2}-2\ln \nu$ and $\alpha \leq \frac{\nu^2 C^2/2-\ln 5-2\ln\nu}{C+\ln(\tau \alpha)+1/(2\nu^2)}$,  where $C= \ln \big( 1+\frac{1}{\tau(\alpha-1)} \big)$, to only one requirement. This makes our closed-form privacy bound in Theorem~\ref{Theorem-privacy-guarantee-federated-poisson} below more concise and easier to implement. %easily implemented.
\end{remark}

\begin{theorem}[Differential Privacy Guarantee for DP-Fed-LS with Poisson Subsampling]\label{Theorem-privacy-guarantee-federated-poisson}
For any $\delta \in (0,1)$, $\varepsilon>0$, DP-Fed or DP-Fed-LS with Poisson subsampling, satisfies ($\varepsilon$,$\delta$)-DP when its injected Gaussian noise $\mathcal{N}(0, \nu^2)$ is chosen to be
\begin{equation} \label{eq-nu-poisson}
    \nu \geq \frac{\tau \clip}{\varepsilon}\sqrt{\frac{2T}{\lambda}\bigg( \frac{\log(1/\delta)}{1-\lambda} + \varepsilon\bigg)},
\end{equation}
if there exists $\lambda \in (0,1)$ such that $\nu^2/\clip^2 \geq \frac{5}{9}$ and $\alpha-1 \leq \frac{2\nu^2}{3\clip^2}\log(1/(\tau \alpha (1+\nu^2/\clip^2)))$, where $\alpha=\log(1/\delta)/((1-\lambda)\varepsilon) +1$.
% $\clip$ is the $\ell_2$-bound of clipping mapped on the \textit{federated average of gradients}, $\tau$ is the subsampling ratio of active clients,
% $T$ is the total number of communication rounds.
\end{theorem}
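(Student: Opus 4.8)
The plan is to reduce the multi-round, subsampled, Laplacian-smoothed mechanism to the single-round RDP guarantee of \Cref{lemma:sub_poisson} through three reductions (sensitivity rescaling, post-processing, and composition), and then convert the resulting RDP bound into $(\varepsilon,\delta)$-DP through \Cref{lemma:RDP_to_DP}, optimizing the free order $\alpha$ via the split parameter $\lambda$.

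First I would pin down the $\ell_2$-sensitivity of the quantity to which the Gaussian noise is actually added. In line $(*)$ the noise $\mathcal{N}(\mathbf{0},\nu^2\Ib)$ is injected into $\sum_{j} \Delta_j^{t}$; because every local model is clipped to the $\clip$-ball around $w^t$, each update satisfies $\|\Delta_j^{t}\|_2 \le \clip$. Under the Poisson adjacency relation (adding or removing a single client), two adjacent inputs differ in this sum by exactly one term $\Delta_a^{t}$, so the sensitivity equals $\clip$ (in contrast to $2\clip$ for the replace-one relation of the uniform case). Applying \Cref{lemma:sub_poisson}, which is stated for unit sensitivity, then amounts to rescaling by $\clip$: the per-round mechanism behaves as a unit-sensitivity Gaussian mechanism with effective noise $\nu/\clip$. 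Hence it satisfies $(\alpha, 2\tau^2\alpha\clip^2/\nu^2)$-RDP precisely under the two hypotheses carried in the theorem, namely $\nu^2/\clip^2 \ge 5/9$ and $\alpha-1 \le \tfrac{2\nu^2}{3\clip^2}\log\!\big(1/(\tau\alpha(1+\nu^2/\clip^2))\big)$.

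Next I would dispose of the two operations that do not affect privacy. The scalar factor $\eta_g/\sample$ and the Laplacian-smoothing operator $\mathbf{A}_\sigma^{-1}$ in $(*)$ are deterministic maps applied to the already-privatized sum, so by the post-processing lemma (Proposition 2.1 of \cite{Dwork:2014book}) they leave the RDP parameters unchanged; this is exactly why DP-Fed and DP-Fed-LS share the same bound. Running the algorithm for $T$ communication rounds composes $T$ such mechanisms, and since RDP parameters add under composition, the full release satisfies $(\alpha, \rho)$-RDP with $\rho = 2T\tau^2\alpha\clip^2/\nu^2$.

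Finally I would convert and optimize. By \Cref{lemma:RDP_to_DP} the mechanism is $(\rho + \log(1/\delta)/(\alpha-1),\delta)$-DP, so it suffices to force $\rho + \log(1/\delta)/(\alpha-1) \le \varepsilon$. Splitting the budget as $\rho \le \lambda\varepsilon$ and $\log(1/\delta)/(\alpha-1) \le (1-\lambda)\varepsilon$ for some $\lambda\in(0,1)$, the second inequality holds with equality under the theorem's choice $\alpha = \log(1/\delta)/((1-\lambda)\varepsilon)+1$, and the first becomes $2T\tau^2\alpha\clip^2/\nu^2 \le \lambda\varepsilon$. Solving this for $\nu$ and substituting the chosen $\alpha$ yields, after a short rearrangement, exactly the bound \eqref{eq-nu-poisson}. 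The only genuinely delicate point is bookkeeping consistency: the value of $\alpha$ fixed by the budget split must still satisfy the admissibility constraint of \Cref{lemma:sub_poisson}, which is why that constraint reappears verbatim as a hypothesis of the theorem rather than being discharged inside the proof; everything else is the rescaling--composition--conversion pipeline followed by elementary algebra.
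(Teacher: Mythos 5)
Your proposal is correct and follows essentially the same route as the paper's proof: compute the add/remove sensitivity $\clip$ of the clipped per-round aggregate, apply the subsampled-Gaussian RDP bound of \Cref{lemma:sub_poisson} rescaled to that sensitivity, discard the scaling and $\mathbf{A}_\sigma^{-1}$ by post-processing, compose additively over $T$ rounds, and convert to $(\varepsilon,\delta)$-DP via \Cref{lemma:RDP_to_DP} with the budget split $\rho\le\lambda\varepsilon$, $\log(1/\delta)/(\alpha-1)\le(1-\lambda)\varepsilon$. The algebra solving $2T\tau^2\alpha\clip^2/\nu^2\le\lambda\varepsilon$ for $\nu$ with $\alpha=\log(1/\delta)/((1-\lambda)\varepsilon)+1$ reproduces \eqref{eq-nu-poisson} exactly, and your remark that the admissibility constraints of \Cref{lemma:sub_poisson} must be retained as hypotheses matches the paper's treatment.
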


Theorem~\ref{Theorem-privacy-guarantee-federated-uniform} and Theorem~\ref{Theorem-privacy-guarantee-federated-poisson} characterize the closed-form relationship between $(\varepsilon,\delta)$-DP and the corresponding noise level $\nu$, based on the numerical results in  \cite{wang2019subsampled, zhu2019poission, Mironov2019sampled}.
%and the noise level $\nu$ we need to achieve it. \BW{achieve it?} 
As we can see later, they will also serve as backbone theorems when we analyse the optimization error bounds of DP-Fed-LS. The two conditions in the above theorems (lemmas) are used for inequality scaling. 
% \BW{I do not understand what is the products of xxx.} 
In practical implementation, we will do a grid search of $\lambda \in (0,1)$ and select the one that gives the smallest lower bound of $\nu$ while satisfying both %of the
conditions. After that, we set $\nu$ to its lower bound.

Proofs of Theorem~\ref{Theorem-privacy-guarantee-federated-uniform} and Theorem~\ref{Theorem-privacy-guarantee-federated-poisson} are given in the Section~\ref{sec:proof-theorem-uniform} and \ref{sec:proof-theorem-poisson} in the appendix, while the proof of lemmas are given in Section~\ref{sec:proof-lemma-uniform} and Section~\ref{sec:proof-lemma-poisson} in the supplementary material. These closed-form bounds are of similar rates as the numerical moment accountant \cite{wang2019subsampled, zhu2019poission, Mironov2019sampled} up to a constant (see Section~\ref{sec:comparison} in the supplementary material).

\section{Convergence with Differential Privacy Guarantee}
Here, we provide convergence guarantees for DP-Fed-LS in Algorithm~\ref{algorithm:1} with uniform subsampling. %We assume $f_j, j \in [\client]$ are $\smooth$-smooth and $\lip$-Lipschitz. The mini-batch gradient $g(w), \ \forall w$ is unbiased stochastic estimator to $\nabla f_j(w)$, whose variance under the vector norm given by $\|v \|_{\textbf{A}_\sigma^{-1}}^2= \langle v, \textbf{A}_\sigma^{-1} v \rangle$ is bounded by $\varsigma_j^2(\sigma)$. We denote $\varsigma^2(\sigma) = \frac{1}{\client}\sum_{j=1}^\client \varsigma_j^2(\sigma)$. The full statement of these assumptions are deferred to Appendix. 
% To characterize the heterogeneity, we make the following assumption, among other common conditions. 
%We make the following commonly used assumptions.
We collect several commonly used assumptions.
\begin{assumption}[$(\heteG,\heteB)$-BGD (Bound Gradient Dissimilarity) \cite{karimireddy2020scaffold}]\label{assumption-BGD}
There exist constants $\heteG \geq 0$ and $\heteB \geq 1$ such that
\begin{equation*}
       \frac{1}{\client} \sum_{j=1}^\client \| \nabla f_j (w) \|_2^2 \leq \heteG^2 + \heteB^2 \| \nabla f(w) \|_2^2, \ \forall w. 
\end{equation*}
\end{assumption}

\begin{assumption}\label{assumption-smooth}
$f_1, ..., f_\client$ are all $\smooth$-smooth: for all $u$ and $v$, $\| \nabla f_j(u) - \nabla f_j(v) \|\leq \smooth \|u-v \|$.
% which implies that 
% \begin{equation*}
%     f_j(u) \leq f_j(v) + \langle \nabla f_j(v), u-v \rangle + \frac{\smooth}{2} \| u-v\|^2, \quad \mbox{for all} \ u, v.
% \end{equation*}
% One property of $\smooth$-smoothness is that
% \begin{equation*}
%     \|  \nabla f(w) \|^2_2 \leq 2 \smooth \big( f(w) - f(w^*) \big) \quad \mbox{for all} \ w.
% \end{equation*}
\end{assumption}

\begin{assumption}\label{assumption-mu-strongly-convex}
$f_1$, ..., $f_\client$ are all $\mu$-strongly convex: 
\begin{equation*}
    f_j(u) \geq f_j(v) + \langle u-v, \nabla f_j(v) \rangle + \frac{\mu}{2}\| u-v \|_2^2, \quad \mbox{for all} \ u,v.
\end{equation*}
\end{assumption}

\begin{assumption}\label{assumption-convex}
$f_1$, ..., $f_\client$ are all convex:
\begin{equation*}
    f_j(u) \geq f_j(v) + \langle u-v, \nabla f_j(v) \rangle, \quad \mbox{for all} \  u, v.
\end{equation*}
\end{assumption}

\begin{assumption}\label{assumption-gradient-variance}
Let $g_j(w)$ be a stochastic mini-batch gradient of client $j$. The variance of $g_j(w)$ under the vector norm given by $\|v \|_{\textbf{A}_\sigma^{-1}}^2= \langle v, \textbf{A}_\sigma^{-1} v \rangle$ in each device is bounded:
\begin{equation*}
    \mathbb{E}\| g_j(w) - \nabla f_j(w) \|_{\textbf{A}_\sigma^{-1}}^2 \leq \varsigma_j^2(\sigma) \quad \mbox{for all} \ j\in[\client].
\end{equation*}
We denote $\varsigma^2(\sigma)=\frac{1}{\client} \sum_{j=1}^\client \varsigma_j^2(\sigma)$.
\end{assumption}
%\BW{Discuss the bounded variance assumption used in SGD to show our assumption is reasonable.}
Here for $\sigma=0$, it reduces to the common assumption in federated learning \cite{karimireddy2020scaffold}; for $\sigma>0$, variance could be significantly reduced as the discussions in Section \ref{sec:method}.

\begin{assumption}\label{assumption-G-lipschitz}
$f_1$, ..., $f_\client$ are all $\lip$-Lipschitz:
$
\|f_j(u) -f_j(v)\|_2 \leq \lip\|u-v \|_2 \quad \mbox{for all} \ u, v.
$
\end{assumption}

For simplicity, 
% we consider that SGD ($\batch=1$) is perform in local update, and we denote the total number of local iteration in each communication round as $\step$. 
we use $\nu_\clip$ to represent $\nu$ in Theorem~\ref{Theorem-privacy-guarantee-federated-uniform} 
%and $\ref{Theorem-privacy-guarantee-federated-poisson}$ 
as a linear %proportional 
function of the clipping parameter $\clip$. Then we have $\nu_\clip = \clip\nu_1$.
% $\clip$. We firstly assume that the number of selected client in each communication round is $\sample$, which conforms with the uniform subsampling mechanism, then we further discuss the Poisson subsampling case.
We use $\tilde{\mathcal{O}}$ to denote asymptotic growth rate up to a logarithmic factor (including $\log \step$, $\log \sample$), while $\mathcal{O}$ up to a constant.

% In Theorem~\ref{thm-convergenve}, we provide convergence guarantees for strongly-convex, general-convex and non-convex scenarios.

\begin{theorem}[Convergence Guarantees for DP-Fed-LS]\label{thm-convergenve}
% We denote $\Er(\bar{w}^T) = \mathbb{E}[f(\bar{w}^T)] - f(w^*)$ for strongly-convex and general-convex cases while $\Er(\bar{w}^T)= \| \nabla f(\bar{w}) \|_{\textbf{A}_\sigma^{-1}}^2$ for non-convex case. 
Assuming the conditions in Theorem~\ref{Theorem-privacy-guarantee-federated-uniform} hold, with $\log(1/\delta)\geq\varepsilon$ and a proper constant step size $\eta_l$. Let $\clip=\eta_l \step \lip$, $\eta_g\geq \sqrt{\sample}$, and communication round $T=\frac{\varepsilon^2 \client^2}{C_0 \lip^2 \sample \log(1/\delta)}$, then DP-Fed-LS with uniform subsampling satisfies $(\varepsilon,\delta)$-DP and the following error bounds. %Assuming $\log(1/\delta)\geq\varepsilon$, set $\clip=\eta_l \step \lip$, $\eta_g\geq \sqrt{\sample}$, and $T=\frac{\varepsilon^2 \client^2}{C_0 \lip^2 \sample \log(1/\delta)}$. %Assume Assumption~\ref{assumption-BGD}, \ref{assumption-smooth}, \ref{assumption-gradient-variance}, \ref{assumption-G-lipschitz}, and $\log(1/\delta)\geq\varepsilon$, 
%Under assumptions above, there exist constant step size $\eta_l$, Gaussian noise level $\nu$, communication round $T$ such that DP-Fed-LS with uniform subsampling satisfies
\begin{itemize}
%[noitemsep,topsep=0pt, leftmargin=*]
    \item \textbf{$\mu$ Strongly-Convex:} Under Assumption~\ref{assumption-BGD}, \ref{assumption-smooth}, \ref{assumption-mu-strongly-convex}, \ref{assumption-gradient-variance}, \ref{assumption-G-lipschitz}, it holds that
        \begin{equation*}
            \hspace*{-0.25cm}
                \Er(\bar{w}^T) = \mathbb{E}[f(\bar{w}^T)] - f(w^*) \leq \tilde{\mathcal{O}} \bigg(\frac{ (\frac{\varsigma^2(\sigma)}{\step}+ (1-\tau)\heteG^2 + d_\sigma) \lip^2 \log(1/\delta)}{\mu_\sigma \varepsilon^2 \client^2} \bigg).
        \end{equation*}
    \item \textbf{General-Convex:} Under Assumption~\ref{assumption-BGD}, \ref{assumption-smooth}, \ref{assumption-gradient-variance}, \ref{assumption-G-lipschitz}, it holds that
        \begin{equation*}
            \hspace*{-0.25cm} \Er(\bar{w}^T) = \mathbb{E}[f(\bar{w}^T)] - f(w^*) \leq \mathcal{O} \bigg( \frac{\sqrt{(\frac{\varsigma^2(\sigma)}{\step}+ (1-\tau)\heteG^2 + d_\sigma)D_\sigma \lip^2 \log(1/\delta)}} {\varepsilon \client} \bigg).
        \end{equation*}
    \item \textbf{Non-Convex:} Under Assumption~\ref{assumption-BGD}, \ref{assumption-smooth}, \ref{assumption-gradient-variance}, \ref{assumption-G-lipschitz}, it holds that 
    \begin{equation*}
        \hspace*{-0.25cm}
        \Er(\bar{w}^T)= \| \nabla f(\bar{w}) \|_{\textbf{A}_\sigma^{-1}}^2
        \leq
        \mathcal{O} \bigg( \frac{\sqrt{(\frac{\varsigma^2(\sigma)}{\step} + (1-\tau) \heteG^2 + \tilde{d}_\sigma)F_0  \smooth \lip^2 \log(1/\delta)}} {\varepsilon \client} \bigg).
    \end{equation*}
\end{itemize}
where $\mu_\sigma = \mu \Lambda_{\min}\geq \mu/(1+4\sigma)$, and the effective dimension $d_\sigma=\sum_{i=1}^d \Lambda_i$, $\tilde{d}_\sigma=\sum_{i=1}^d \Lambda_i^2$. Here $\Lambda_i = \frac{1}{1+2\sigma(1-\cos(2\pi i/d))}$ is the eigenvalue of $\textbf{A}_\sigma^{-1}$ while $\Lambda_{\min}$ is the smallest one. And for an optimum $w^*$, $D_\sigma=\|w^0-w^* \|_{\textbf{A}_\sigma}^2$, $F_0 = f(w^0) - f(w^*)$ and $C_0=\frac{14}{\lambda} (1+\frac{1}{1-\lambda})$.
\end{theorem}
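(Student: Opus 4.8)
The plan is to adapt the Fed-Avg / SCAFFOLD convergence analysis of \cite{karimireddy2020scaffold} to our preconditioned, noise-injected setting, carrying out the entire argument in the weighted geometry induced by $\textbf{A}_\sigma$. The first observation is that, since $\clip=\eta_l\step\lip$ and each $f_j$ is $\lip$-Lipschitz (Assumption~\ref{assumption-G-lipschitz}), the drift $\|w_j^{t,i}-w^{t}\|$ accumulated over $\step$ local steps is bounded by $\eta_l\step\lip=\clip$, so the clipping operator in \textsc{ClientUpdate} is essentially inactive and each $\Delta_j^{t}$ may be treated as an unclipped sum of mini-batch gradient steps. Unrolling the server update (*) in Algorithm~\ref{algorithm:1}, the round becomes a preconditioned stochastic step
\begin{equation*}
    w^{t+1}=w^{t}-\eta_g\eta_l\step\,\textbf{A}_\sigma^{-1}\nabla f(w^{t})+\textbf{A}_\sigma^{-1}\Big(\tfrac{\eta_g}{\sample}\textbf{n}\Big)+(\text{drift}+\text{sampling variance}),
\end{equation*}
with effective step size $\tilde\eta=\eta_g\eta_l\step$ and preconditioner $\textbf{A}_\sigma^{-1}$; this is exactly the structure that makes the $\textbf{A}_\sigma$-inner product the right object to track.

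Next I would isolate the effect of the injected noise, which is the crux of the Laplacian-smoothing gain. Because $\textbf{A}_\sigma$ is symmetric positive definite with $\textbf{A}_\sigma^{-1}$ having eigenvalues $\Lambda_i$, direct trace computations give $\mathbb{E}\|\textbf{A}_\sigma^{-1}\textbf{n}\|_{\textbf{A}_\sigma}^2=\nu^2\,\mathrm{tr}(\textbf{A}_\sigma^{-1})=\nu^2 d_\sigma$ and $\mathbb{E}\|\textbf{A}_\sigma^{-1}\textbf{n}\|^2=\nu^2\,\mathrm{tr}(\textbf{A}_\sigma^{-2})=\nu^2\tilde d_\sigma$. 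This explains why $d_\sigma$ appears in the (strongly- and general-)convex bounds, where progress is measured by $\|w^{t}-w^*\|_{\textbf{A}_\sigma}$, whereas $\tilde d_\sigma$ appears in the non-convex bound, where the smoothness descent lemma measures the noise in the plain Euclidean norm and progress is measured by $\|\nabla f\|_{\textbf{A}_\sigma^{-1}}$.

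I would then establish the three per-round descent recursions and telescope. For the strongly-convex case, Assumption~\ref{assumption-mu-strongly-convex} yields $\mu_\sigma=\mu\Lambda_{\min}$-strong convexity in the $\textbf{A}_\sigma$-norm and a contraction
\begin{equation*}
    \mathbb{E}\|w^{t+1}-w^*\|_{\textbf{A}_\sigma}^2\leq(1-\tilde\eta\mu_\sigma)\,\mathbb{E}\|w^{t}-w^*\|_{\textbf{A}_\sigma}^2+\tilde\eta^2\Big(\tfrac{\varsigma^2(\sigma)}{\step}+(1-\tau)\heteG^2\Big)+\tfrac{\eta_g^2}{\sample^2}\nu^2 d_\sigma+(\text{drift}),
\end{equation*}
where the variance term uses Assumption~\ref{assumption-gradient-variance} (stated in the $\textbf{A}_\sigma^{-1}$-norm) and the $(1-\tau)\heteG^2$ term comes from uniform client subsampling with Assumption~\ref{assumption-BGD}; for the general-convex and non-convex cases I would instead use the smoothness Assumption~\ref{assumption-smooth} in a standard descent lemma and bound the client-drift second moment by $\mathcal{O}(\eta_l^2\step^2(\varsigma^2(\sigma)+\heteG^2+\heteB^2\|\nabla f\|^2))$ via the recursion of \cite{karimireddy2020scaffold}, which produces the $\heteB^2$ terms of Table~\ref{tbl:rate-comparision}. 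Summing over $t=0,\dots,T-1$ and applying Jensen's inequality to $\bar w^{T}=\sum_t a_t w^{t}$ gives a schematic bound $\mathcal{O}\!\big(D_\sigma/(\tilde\eta T)+\tilde\eta(\varsigma^2(\sigma)/\step+(1-\tau)\heteG^2)+\eta_g^2\nu^2 d_\sigma/(\sample^2 T\tilde\eta)\big)$, and analogously for the other two settings. Substituting the DP noise level $\nu_\clip=\clip\nu_1$ from Theorem~\ref{Theorem-privacy-guarantee-federated-uniform} (so that $\nu^2\propto T\clip^2/\varepsilon^2$) together with $\clip=\eta_l\step\lip$, $\eta_g\geq\sqrt{\sample}$, and $T=\varepsilon^2\client^2/(C_0\lip^2\sample\log(1/\delta))$, the $T$-dependence cancels against the $\nu^2\propto T$ growth, and optimizing the free step size $\eta_l$ balances the optimization and noise terms to deliver the three stated rates.

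The main obstacle I expect is the simultaneous bookkeeping of the $\textbf{A}_\sigma$-weighted norms through the multi-step local recursion while keeping the noise, subsampling, and client-drift contributions cleanly separated, and in particular threading the two distinct effective dimensions $d_\sigma$ and $\tilde d_\sigma$ correctly through the convex versus non-convex analyses. A second delicate point is verifying rigorously that the clipping bias is genuinely negligible under $\clip=\eta_l\step\lip$, rather than merely heuristically inactive, since the descent lemmas implicitly treat $\Delta_j^{t}$ as an exact gradient sum. By contrast, the final substitution of $\nu$ and the step-size optimization are mostly mechanical, though it is there that the precise constant $C_0=\frac{14}{\lambda}(1+\frac{1}{1-\lambda})$ and the matching of rates to the non-private Fed-Avg baseline must be confirmed.
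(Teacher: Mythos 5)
Your plan follows essentially the same route as the paper's proof: a SCAFFOLD-style per-round recursion carried out in the $\textbf{A}_\sigma$-weighted geometry (with perturbed strong convexity giving $\mu_\sigma=\mu\Lambda_{\min}$), the two trace identities $\mathbb{E}\|\textbf{n}\|_{\textbf{A}_\sigma^{-1}}^2=\nu^2 d_\sigma$ and $\mathbb{E}\|\textbf{A}_\sigma^{-1}\textbf{n}\|_2^2=\nu^2\tilde d_\sigma$ explaining which effective dimension enters the convex versus non-convex bounds, a drift lemma producing the $\heteG^2,\heteB^2$ terms, and a final substitution of the calibrated $\nu_1$ and the stated $T$ so that the $T$-dependence cancels. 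Your flagged concern about the clipping being only heuristically inactive under $\clip=\eta_l\step\lip$ is fair, but the paper's own proof treats the local updates as unclipped in exactly the same way, so this is not a divergence in approach.
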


The proof sketch of Theorem~\ref{thm-convergenve} can be found in Section~\ref{sec:full-statement-convergence} in Appendix. In this theorem, dominant errors are introduced by the variance of stochastic gradients ($\varsigma^2(\sigma)/\step$), heterogeneity of \textbf{Non-IID} data ($(1-\tau)\heteG^2$) and DP ($d_\sigma$ and $ \tilde{d}_\sigma$), in comparison to the initial error. Among the three dominant errors, the variance term $\varsigma^2(\sigma)/\step$ will diminish while the number of local iteration $\step$ grows large enough ($\step \gg 1$). What's more, the heterogeneity term $(1-\tau)\heteG^2$ will be reduced if subsampling ratio $\tau$ is high. Particularly, in \textbf{IID} ($\heteG=0$) or full-device participation $(\tau=1)$ setting, this term will vanish. Therefore the error term introduced by DP, of effective dimensionality $d_\sigma$ or $\tilde{d}_\sigma$, dominates the variance and heterogeneity terms in these scenarios, whose rates in Theorem~\ref{thm-convergenve} matches the optimal ones of ERM via SGD with differential privacy in centralized setting \cite{wang:2019, wang2017differentially}, as shown in the upper part of Table~\ref{tbl:rate-comparision}. In Table~\ref{tbl:rate-comparision}, the term $\log(\sample)$ of DP-Fed-LS comes from the numerator of learning rate $\tilde{\eta}$ in Theorem~\ref{thm-strongly-convex} in Appendix, implicitly involved in $\tilde{\mathcal{O}}$.

In particular when $\sigma=0$, the bounds above reduce to the standard DP-Fed setting. The benefit of introducing Laplacian smoothing ($\sigma > 0$) lies in the reduction of $\varsigma^2(\sigma)$ and the effective dimension $d_\sigma, \tilde{d}_\sigma \leq d_0=d$, although it might lose some curvature $\mu_\sigma \geq \mu/(1+4\sigma)$ in the strongly-convex case. %These may explain the possible slow learning curves at the beginning while better convergence at the end of Laplace smoothing in the Experiments section.

The following corollary provides the communication complexity of DP-Fed-LS in Algorithm~\ref{algorithm:1} with uniform subsampling, with tight bounds on the number of communications $T$ to reach an optimization error $\epsilon$. It is derived from Theorem~\ref{thm-strongly-convex}, \ref{thm-convex} and \ref{thm-non-convex} in Appendix.
% For simplification, we denote the injected noise in Eq (*) as $\textbf{n}(\clip) \sim \mathcal{N}(0,\nu^2_\clip \textbf{I})$, and $\nu_\clip$ is the noise level as a proportional function of clipping parameter $\clip$. In this case, $\nu_{\clip} = \clip \nu_1$.

\begin{corollary}[Communication Complexity]\label{corollary-communication-round}
Assuming the same conditions in Theorem~\ref{thm-convergenve}, the communication complexity of DP-Fed-LS with uniform subsampling and fixed noise level $\nu_\clip=\clip \nu_1$ independent to $T$ satisfies the following rates to reach an $\epsilon$-optimality gap.

\begin{itemize}
%[noitemsep,topsep=0pt, leftmargin=*]
    \item \textbf{$\mu$ Strongly-Convex:} 
        %Under the conditions from Theorem~\ref{thm-strongly-convex}, there holds
        \begin{equation*}
            T = \tilde{\mathcal{O}} \bigg( \frac{(1+4\sigma)\smooth \heteB^2}{\mu_\sigma } + \frac{\varsigma^2(\sigma)}{\mu_\sigma \step \sample \epsilon} + \frac{d_\sigma \lip^2 \nu_1^2}{ \mu_\sigma \sample^2 \epsilon}  + \frac{\sqrt{\smooth} \heteG}{\mu_\sigma \sqrt{\epsilon}} + (1-\tau) \frac{\heteG^2}{\mu_\sigma \epsilon \sample} \bigg).
        \end{equation*}
    \item \textbf{General-Convex:} %Under the conditions from Theorem~\ref{thm-convex}, there holds
    \begin{equation*}
        T = \mathcal{O}\bigg( \frac{(1+4\sigma) \smooth \heteB^2 D_\sigma}{\epsilon} + \frac{\varsigma^2(\sigma) D_\sigma}{\step \sample \epsilon^2} + \frac{d_\sigma D_\sigma \lip^2 \nu_1^2  }{\sample^2 \epsilon^2} + \frac{\sqrt{\smooth} D_\sigma \heteG}{\epsilon^{3/2}} + (1-\tau) \frac{D_\sigma \heteG^2}{\epsilon^2 \sample} \bigg).
    \end{equation*}
    \item \textbf{Non-Convex:} %Under the conditions from Theorem~\ref{thm-non-convex}, there holds
    \begin{equation*}
        T = \mathcal{O} \bigg( \frac{(1+4\sigma)\smooth \heteB^2 F_0}{\epsilon} + \frac{\varsigma^2(\sigma) \smooth F_0}{\step \sample \epsilon^2} + \frac{\tilde{d}_\sigma F_0 \lip^2 \nu_1^2 \smooth }{\sample^2 \epsilon^2} + \frac{\smooth F_0 \heteG}{\epsilon^{3/2}} + (1-\tau)\frac{F_0 \smooth \heteG^2}{\epsilon^2 \sample} \bigg).
    \end{equation*}
\end{itemize}
\end{corollary}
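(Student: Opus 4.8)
The plan is to obtain Corollary~\ref{corollary-communication-round} by inverting the raw convergence bounds established in Theorems~\ref{thm-strongly-convex}, \ref{thm-convex} and \ref{thm-non-convex} of the appendix, which are the same bounds underlying Theorem~\ref{thm-convergenve} but stated as functions of the number of rounds $T$ and the effective step size $\tilde{\eta}:=\eta_l\eta_g\step$ rather than for the privacy-fixed choice of $T$. For the strongly-convex case these bounds take the familiar federated-optimization shape of an exponentially decaying initial-error term plus statistical terms that grow polynomially in $\tilde{\eta}$, schematically $\Er(\bar{w}^T)\le \tilde{\mathcal{O}}\big(\mu_\sigma D_\sigma e^{-\mu_\sigma\tilde{\eta}T}+c_1/(\mu_\sigma\tilde{\eta}T)+c_2/(\mu_\sigma\tilde{\eta}^2T^2)\big)$, while the convex and non-convex cases replace the exponential by $D_\sigma/(\tilde{\eta}T)$ (resp.\ $F_0/(\tilde{\eta}T)$) and use the corresponding $1/\sqrt{\tilde{\eta}T}$-type balancing.

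First I would substitute the two structural choices $\clip=\eta_l\step\lip$ and $\nu_\clip=\clip\nu_1$ into these bounds. The crucial consequence is that the DP-noise contribution of the aggregation step $(*)$, which is proportional to $\eta_g^2\nu_\clip^2 d_\sigma/\sample^2$ for the convex analyses and to $\tilde{d}_\sigma$ for the non-convex one, becomes proportional to a power of $\tilde{\eta}$ times $d_\sigma\lip^2\nu_1^2/\sample^2$. This places the DP term on the same footing as the stochastic-gradient variance term $\varsigma^2(\sigma)/(\step\sample)$ and the partial-participation term $(1-\tau)\heteG^2/\sample$, so that all three enter the constant $c_1$ (and the noise term also seeds $c_2$).

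Next I would apply a standard step-size tuning lemma of the Karimireddy et al.~\cite{karimireddy2020scaffold} type, which converts a bound of the above form into a final rate by choosing $\tilde{\eta}$ optimally subject to the smoothness ceiling $\tilde{\eta}\lesssim 1/\big((1+4\sigma)\smooth\heteB^2\big)$ dictated by Assumption~\ref{assumption-smooth}, Assumption~\ref{assumption-BGD} and the curvature loss $\mu_\sigma\ge\mu/(1+4\sigma)$. Balancing the summands term by term then reproduces the five-term rate displayed: the step-size ceiling feeding the exponential gives the leading $(1+4\sigma)\smooth\heteB^2/\mu_\sigma$ term; the $c_1/(\mu_\sigma\tilde{\eta}T)$ piece gives the $\varsigma^2(\sigma)/(\mu_\sigma\step\sample\epsilon)$, $d_\sigma\lip^2\nu_1^2/(\mu_\sigma\sample^2\epsilon)$ and $(1-\tau)\heteG^2/(\mu_\sigma\epsilon\sample)$ terms; and the $c_2/(\mu_\sigma\tilde{\eta}^2T^2)$ piece, carrying the client-drift constant, yields the $\sqrt{\smooth}\heteG/(\mu_\sigma\sqrt{\epsilon})$ term. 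Setting the tuned bound $\le\epsilon$ and solving for $T$ gives the strongly-convex claim; the convex and non-convex claims follow identically, with $D_\sigma$ (resp.\ $F_0\smooth$) replacing the curvature factors and the $\sqrt{\cdot}$ balancing producing the $\epsilon^{-2}$ and $\epsilon^{-3/2}$ exponents.

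The main obstacle I expect is the coupling between the clipping radius and the step size: because $\clip=\eta_l\step\lip$ is not a free constant, the DP-noise magnitude scales with $\tilde{\eta}$, so the optimal step-size balance differs from that of noiseless federated learning and the noise term must be tracked through both $c_1$ and $c_2$ in the tuning lemma. Verifying that the smoothness and curvature ceiling on $\tilde{\eta}$ (now inflated by the $(1+4\sigma)$ factor from $\mu_\sigma$ and $\heteB$) is respected, and that the effective-dimension quantities $d_\sigma=\sum_i\Lambda_i$ and $\tilde{d}_\sigma=\sum_i\Lambda_i^2$ enter with the correct powers in the convex versus non-convex cases, is where the bookkeeping is delicate; everything else is the routine inversion of a tuned SGD-style bound.
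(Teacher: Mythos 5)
Your proposal is correct and follows essentially the same route as the paper, which likewise obtains the corollary by taking the tuned error bounds of Theorems~\ref{thm-strongly-convex}, \ref{thm-convex} and \ref{thm-non-convex} (with $\clip=\eta_l\step\lip$ and $\nu_\clip=\clip\nu_1$ already substituted so that the DP term sits alongside the variance and heterogeneity terms in $H_\sigma$), setting each summand at most $\epsilon$ and solving for $T$, with the leading $(1+4\sigma)\smooth\heteB^2$ term coming from the step-size ceiling $\tilde{\eta}\leq \Lambda_{\min}/(8\step\smooth(1+\heteB^2))$ together with $T\geq 1/(\mu_\sigma\tilde{\eta}\step)$ (resp.\ the $\Er_{init}$ term). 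The only slips are cosmetic: your schematic writes the statistical terms as decaying in $\tilde{\eta}$ when they in fact grow with it before tuning, and in the paper's bounds the DP noise enters only the linear-in-$\tilde{\eta}$ coefficient ($H_\sigma$), not the quadratic client-drift coefficient ($Q_1$, $Q_2$); neither affects the resulting rates.
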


\begin{remark}
In Corollary \ref{corollary-communication-round}, we regard that $\nu_1$ is a given constant independent to the communication round $T$, such that $\nu_{\clip}=\clip \nu_1$ and $\clip=\eta_l \step \lip$. In this case, if $\nu_1\geq8/3$ and $\alpha - 1\leq \frac{\nu_1^2}{6} \ln \frac{1}{\tau\alpha(1+\nu_1^2/4)}$, then $(\varepsilon,\delta)$-DP satisfying Eq~(\ref{eq-T}) can be achieved for any $\lambda\in(0,1)$.
% \YY{No, you can't say for any $\lambda\in (0,1)$ as only for those $\lambda$ such that $\nu_1\geq8/3$ and $\alpha - 1\leq \frac{\nu_1^2}{6} \ln \frac{1}{\tau\alpha(1+\nu_1^2/4)}$ ($\alpha= \log(1/\delta)/((1-\lambda)\varepsilon) +1$) holds. You may say for those $\varepsilon>0$, $\delta\in (0,1)$ and $\lambda\in(0,1)$, such that Theorem \ref{Theorem-privacy-guarantee-federated-uniform} holds?}
\begin{equation}\label{eq-T}
    T \leq \frac{\lambda \varepsilon^2 \nu_1^2}{14 \tau^2 \big(\frac{\log(1/\delta)}{1-\lambda} +\varepsilon \big)}.
\end{equation}
More details can be found in Section~\ref{sec:table_compare} in supplementary.
Compared with the best known rates in federated average without DP \cite{karimireddy2020scaffold}, the communication complexity in Corollary~\ref{corollary-communication-round} involves an extra term for the injected noise $\nu_1$ in DP, while other terms match the best known rates, which are tighter than others in literature \cite{yu2019parallel,khaled2020tighter,li:2019} with the same $(\heteG,\heteB)$-BGD assumption, as shown in the lower part of Table~\ref{tbl:rate-comparision}.
\end{remark}

\section{Experimental Results}\label{sec:experiment}
We evaluate our proposed DP-Fed-LS on three benchmark classification tasks. 
% For all three tasks,
We compare the utility of DP-Fed-LS ($\sigma>0$) and plain DP-Fed ($\sigma=0$) with varying $\varepsilon$ in $(\varepsilon, \delta)$-DP. Here we set $\delta=1/\client^{1.1}$ as \cite{mcmahan:2017}.
% \BW{why choose this particular $\delta$}
These three tasks include training a DP federated logistic regression on the MNIST dataset \cite{lecun:1998}, a convolution neural network (CNN) on the SVHN dataset \cite{netzer:2011} and a long short-term memory (LSTM) model over the Shakespeare dataset \cite{caldas2018leaf, mcmahan:2016}. 
% For the second task, we also compare the membership privacy of DP-Fed and DP-Fed-LS. 
Details about datasets and tasks will be discussed later. For logistic regression, we apply the privacy budget in Theorem~\ref{Theorem-privacy-guarantee-federated-uniform} and \ref{Theorem-privacy-guarantee-federated-poisson}. For CNN and LSTM, we apply the moment accountants in \cite{wang2019subsampled}\footnote{\url{https://github.com/yuxiangw/autodp}} and \cite{Mironov2019sampled}\footnote{\url{https://github.com/tensorflow/privacy/tree/master/tensorflow_privacy/privacy/analysis}} for uniform subsampling and Poisson subsampling, respectively. For moment accountants, we should provide a noise multiplier $z$ to control the noise level. Then we can compute the privacy budget with given communication round and subsampling ratio.

We report the average loss and average accuracy based on three independent runs. Hardwares we used for these experiments are NVIDIA GeForce GTX 1080Ti GPU (11G RAM) and Intel Xeon E5-2640 CPU.

\textit{\textcolor{header1}{Hyper-parameter Tuning.}} To comply with traditional neural network training, we replace local iteration step $\step$ with local epoch $\epoch$ and denote the batch size as $\batch$. For all the tasks, we tune the hyper-parameters such that DP-Fed achieves the best validation accuracy, and then apply the same settings to DP-Fed-LS. For example, the clipping parameter $\clip$ is involved since a large one will induce too much noise while a small one will %hinder
deteriorate training. We will first start from a small value
% $\clip$ like $0.2$
and then increase it until the validation accuracy for DP-Fed no longer improves. Other parameters, including local learning rate $\eta_l$, local batch size $\batch$, local epoch $\epoch$ are borrowed from the literature \cite{caldas2018leaf, papernot:2018, mcmahan:2016}.
%For all experiments
We fixed the global learning rate $\eta_g=1$. 

\subsection{Logistic regression with IID MNIST dataset}\label{sec:logistic}
We train a differentially private federated logistic regression on the MNIST dataset \cite{lecun:1998}. MNIST is a dataset of 28$\times$28 grayscale images of digit from 0 to 9, containing 60K training samples and 10K testing samples. We split 50K training samples into 1000 clients each containing 50 samples in an \textbf{IID} fashion \cite{mcmahan:2016} for uniform subsampling. For Poisson subsampling, we further lower the number of clients to 500 each containing 100 samples. The remaining 10K training samples are left for validation. We set the batch size $\batch=10$, local epoch $E=5$ \cite{mcmahan:2016}, sensitivity $\clip=0.3$ (by tuning described above), number of communication rounds $T=30$, activate client fraction $\tau=0.05$ and weight decay $\lambda_0=4e-5$. We use an initial local learning rate $\eta_l=0.1$ and decay it by a factor of $\gamma=0.99$ each communication round.

\textit{\textcolor{header1}{Improved test accuracy under the same privacy budget}}. From Table~\ref{tbl:mnist_acc}, we notice that DP-Fed-LS outperforms DP-Fed in almost all settings. In particular, when $\varepsilon$ is small, the improvement of DP-Fed-LS is remarkably large. We show the training curves in Figure~\ref{fig:mnist_loss_acc}, where we find that DP-Fed-LS converges slower than DP-Fed in both subsampling scenarios. However, DP-Fed-LS will generalize better than DP-Fed at the later stage of training. Other training curves are deferred to the Section~\ref{sec:curves} in supplementary material.

\begin{table}[htbp]
\centering
\caption{Test accuracy of logistic regression on MNIST with DP-Fed ($\sigma=0$) and DP-Fed-LS ($\sigma=1, 2, 3$) under different $(\varepsilon,1/\client^{1.1})$-DP guarantees and subsampling methods.}
\begin{tabular}{ccccc|cccc}
\hline
\multicolumn{5}{c}{Uniform Subsampling} & \multicolumn{4}{c}{Poisson Subsampling} \\
\hline
$\varepsilon$ & 6 & 7 & 8 & 9 & 6  & 7 & 8 & 9\\
$\sigma=0.0$ & 78.41 & 81.85 & 83.24 & 84.62 & 80.03 & 82.05 & 83.33 & 84.52 \\
$\sigma=1.0$ & 82.44 & \textbf{85.12} & 85.22 & 84.69 & 82.34 & \textbf{84.85} & \textbf{84.65} & 84.49\\
$\sigma=2.0$ & 83.33 & 84.65 & \textbf{85.31} & 85.27 & \textbf{83.43} & \textbf{84.85} & 84.39 & \textbf{85.87}\\
$\sigma=3.0$ &\textbf{83.60} & 83.53 & 85.18 & \textbf{85.35} & 82.94 & 84.29 & 84.16 & 84.79\\
\hline
\end{tabular}
\label{tbl:mnist_acc}
\end{table}

\begin{figure}[ht]
    \centering
    \subfigure[]{\includegraphics[width=1.5in]{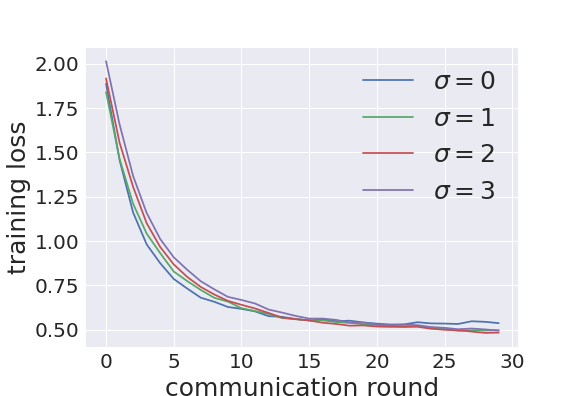}}
    \subfigure[]{\includegraphics[width=1.5in]{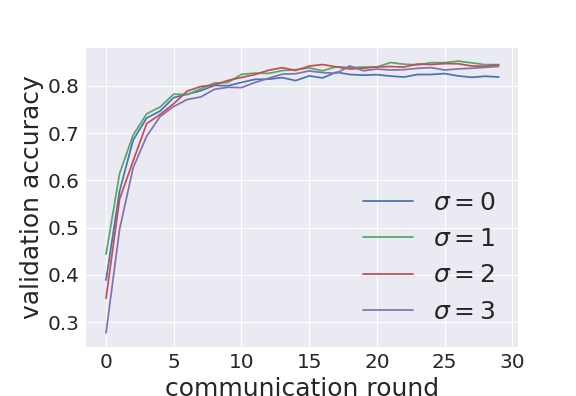}}
    \subfigure[]{\includegraphics[width=1.5in]{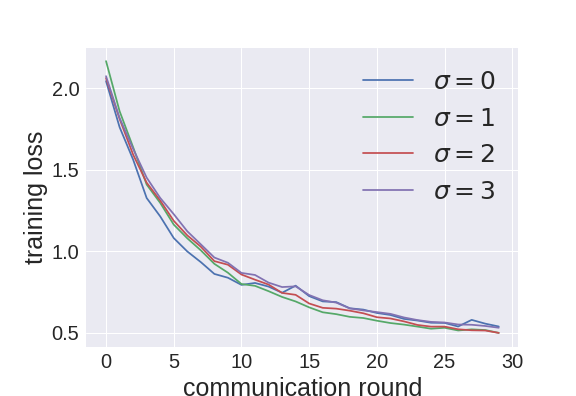}}
    \subfigure[]{\includegraphics[width=1.5in]{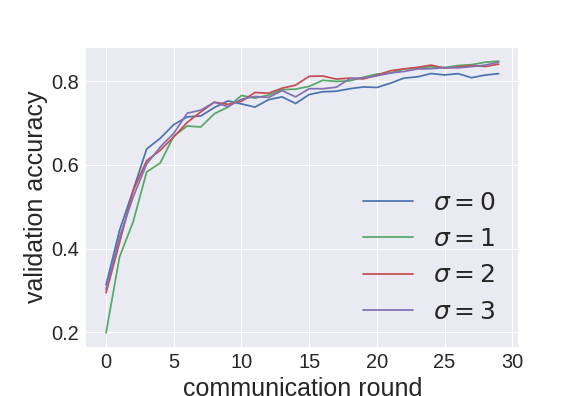}}
    \vspace{-0.7em}
    \caption{Training curves of logistic regression on MNIST with DP-Fed ($\sigma=0$), DP-Fed-LS ($\sigma=1, 2, 3$). (a) and (b): training loss and validation accuracy with uniform subsampling and $(7,1/1000^{1.1)}$-DP.  (c) and (d): training loss and validation accuracy with Poisson subsampling and $(7,1/500^{1.1})$-DP.}
    \label{fig:mnist_loss_acc}
\end{figure}

\subsection{Convolutional Neural Network with IID SVHN dataset}\label{sec:cnn}
In this section, we train a differentially private federated CNN on the extended SVHN dataset \cite{netzer:2011}. SVHN is a dataset of 32$\times$32 colored images of digits from 0 to 9, containing 73,257 training samples and 26,032 testing samples. We enlarge the training set with another 531,131 extended samples and split them into 2,000 clients each containing about 300 samples in an \textbf{IID} fashion \cite{mcmahan:2016}. We also split the testing set by 10K/16K for validation and testing. Our CNN stacks two $5\times 5$ convolutional layers with max-pooling, two fully-connected layers with 384 and 192 units, respectively, and a final softmax output layer (about 3.4M parameters in total) \cite{papernot:2017}. We pretrain the model over the MNIST dataset to speed up the training. 
%For both the uniform and Poisson subsampling scenarios, we use the same parameter settings. 
We set $\batch=50$, local epoch $\epoch=10$ \cite{mcmahan:2016}, sensitivity $\clip=0.7$ (by tuning described above), number of 
communication rounds $T=200$, active client fraction $\tau=0.05$ and weight decay $\lambda_0=4e-5$. Initial local learning rate $\eta_l=0.1$ and will decay by a factor of $\gamma=0.99$ each communication round. We vary the privacy budget by setting the noise multiplier $z=1.5, 1.3, 1.1, 1.0$.

\begin{table}[htbp]
\centering
\caption{Test accuracy of CNN on SVHN with DP-Fed ($\sigma=0$) and DP-Fed-LS ($\sigma=0,5, 1, 1.5$) under different $(\varepsilon,1/2000^{1.1})$-DP guarantees and subsampling methods.}
\begin{tabular}{ccccc|cccc}
\hline
\multicolumn{5}{c}{Uniform Subsampling} &\multicolumn{4}{c}{Poisson Subsampling} \\
\hline
$\varepsilon$   & 5.23 & 6.34 & 7.84 & 8.66 & 2.56 & 3.19 & 4.24 & 5.07\\
$\sigma=0.0$ & 81.40          & 82.46          & 85.18           & 85.84 & 82.29          & 83.82          & 85.53          & 86.56 \\
$\sigma=0.5$ & 82.72        & \textbf{84.65} & \textbf{86.49} & 86.32 & 84.27          & \textbf{85.47} & \textbf{87.00} & \textbf{87.50}\\
$\sigma=1.0$ & \textbf{82.39} & 84.13     & 85.88     & \textbf{86.39} & \textbf{84.65} & 85.38          & 86.37          & 87.26\\
$\sigma=1.5$ & 82.19          & 83.97         & 86.03      & 85.66& 84.23          & 85.12          & 86.58          & 87.35\\
\hline
\end{tabular}
\label{tbl:svhn_acc}
\end{table}

\begin{figure}[htbp]
    \centering
    \subfigure[]{\includegraphics[width=1.5in]{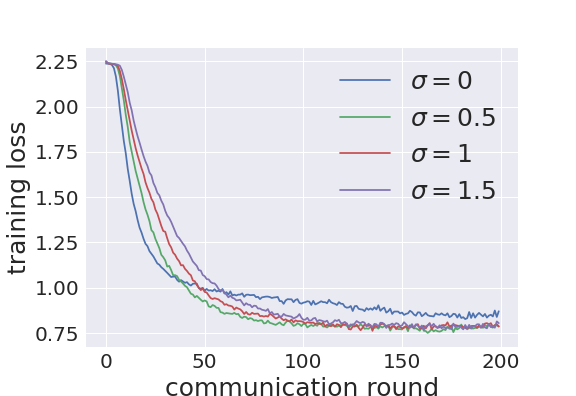}}
    \subfigure[]{\includegraphics[width=1.5in]{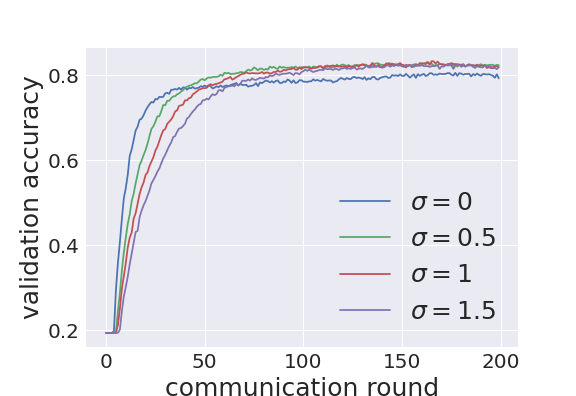}}
    \subfigure[]{\includegraphics[width=1.5in]{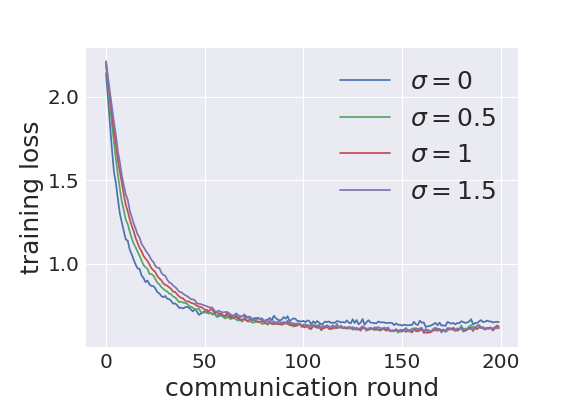}}
    \subfigure[]{\includegraphics[width=1.5in]{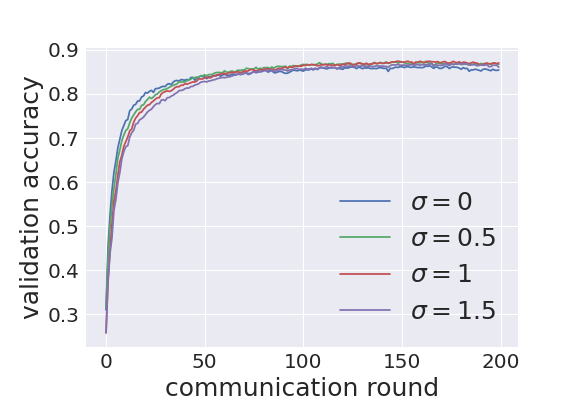}}
    
    \vspace{-0.8em}
    
    \caption{Training curves of CNN on SVHN with DP-Fed ($\sigma=0$), DP-Fed-LS ($\sigma=0.5, 1, 1.5$). (a) and (b): training loss and validation accuracy with uniform subsampling and $(5.23, 1/2000^{1.1})$-DP. (c) and (d): training loss and validation accuracy with Poisson subsampling and $(5.07, 1/2000^{1.1})$-DP.}
    \label{fig:svhn_loss_acc}
\end{figure}

\textit{\textcolor{header1}{Improved test accuracy under the same privacy budget}}. Table~\ref{tbl:svhn_acc}
%, we report the average testing accuracy over 3 independent runs. 
shows that DP-Fed-LS yields higher accuracy than DP-Fed with both subsampling mechanisms and different DP guarantees.  We show training curves in Figure \ref{fig:svhn_loss_acc}, which are similar to the ones of the logistic regression. Again, training curves of DP-Fed-LS converge slower than that of DP-Fed, especially when uniform subsampling is used. However, DP-Fed-LS generalizes better than DP-Fed at the later stage.

In Table~\ref{tbl:svhn_acc_th_js}, we show the results of another two adaptive denoising estimators: the James-Stein estimator (JS) and the soft-thresholding estimator (TH), which have been shown to be useful for high dimensional parameter estimation and number release \cite{balle2018improving}, comparing with other denoising estimators \cite{barak2007privacy, hay2009accurate, williams2010probabilistic,bernstein2017differentially}. As mentioned in \cite{balle2018improving}, thanks to the fact that we know the parameter $\nu$ exactly, both JS and TH estimators are completely free of tuning parameters. However, as we can see in Table~\ref{tbl:svhn_acc_th_js}, neither of these two estimators performs well in our scenario, compared with Laplacian smoothing in Table~\ref{tbl:svhn_acc}, indicating that high dimensional sparsity assumption \cite{balle2018improving} does not hold here on \textit{federated average of gradients}.

\begin{table}[htbp]
\centering
\caption{Test accuracy of CNN on SVHN James-Stein (JS) and soft-threholding (TH) estimators \cite{balle2018improving} under different $(\varepsilon,1/2000^{1.1})$-DP guarantees and subsampling methods (the same settings as the ones we used in Table~\ref{tbl:svhn_acc}).}
\begin{tabular}{ccccc|cccc}
\hline
\multicolumn{5}{c}{Uniform Subsampling} &\multicolumn{4}{c}{Poisson Subsampling} \\
\hline
$\varepsilon$   & 5.23 & 6.34 & 7.84 & 8.66 & 2.56 & 3.19 & 4.24 & 5.07\\
JS & 52.12 & 52.41 & 59.73 & 61.77 & 56.23 & 55.60 & 60.43 & 60.64 \\
TH & 20.05 & 18.52 & 21.72 & 25.15 & 22.12 & 16.96 & 23.24 & 26.89\\
\hline
\end{tabular}
\label{tbl:svhn_acc_th_js}
\end{table}

\textit{\textcolor{header1}{Stability under large noise, learning rate and different orders of parameter flattening.}} In Figure~\ref{fig:extreme}, we show the training curves where relatively large noise multipliers $z$ are applied with Poisson subsampling and different local learning rates $\eta_l$. Here our CNNs are trained for one run from scratch. When the noise levels are large, the training curves fluctuate a lot. 
%We can observe that, 
In these extreme cases, DP-Fed-LS outperforms DP-Fed by a large margin. For example, when $z=3$ and $\eta_l=0.05$, validation accuracy of DP-Fed starts to drop at the 150th epoch while DP-Fed-LS can still converge. When the learning rate increase to $0.125$, validation accuracy of DP-Fed drops below 0.2 after the 25th epoch while DP-Fed-LS approaches 0.7 at the end. Overall speaking, DP-Fed-LS is more stable against large noise levels and the change of local learning rate than DP-Fed. What's more, from Table~\ref{tab:order}, we notice that DP-Fed-LS is insensitive to the order of parameter flattening and consistently performs better than DP-Fed.

\begin{table}[htbp]
\centering
\caption{Test accuracy of CNN on SVHN with DP-Fed-LS ($\sigma=0.5, 1, 1.5$), with different unfolding ordering of convolutional kernel, under $(4.24,1/2000^{1.1})$-DP guarantees along with Poisson subsampling, and $(7.838,1/2000^{1.1})$-DP guarantees along with Uniform subsampling. ``B", ``C", ``W", and ``H" represent for batch, channel, width, and height. BCWH is the one reported in the paper. The accuracy for pure DP-Fed is 85.18 and 85.53 respectively.}
\begin{tabular}{ccccc|cccc}
\hline
\multicolumn{5}{c}{Uniform Subsampling} & \multicolumn{4}{c}{Poisson Subsampling}\\
\hline
Order   & BCWH & BCHW & BWHC & BHWC & BCWH & BCHW & BWHC & BHWC \\
$\sigma=0.5$ & 86.49 & 87.45  & 86.79  & 86.93 & 87.00 & 86.77 & 86.43 & 87.48\\
$\sigma=1.0$ & 85.88 & 87.41  & 86.97  & 87.42 & 86.37 & 86.15 & 87.21 & 87.24\\
$\sigma=1.5$ & 86.03 & 87.10  & 87.11  & 86.34 & 86.58 & 86.23 & 86.83 & 87.04\\
\hline
\end{tabular}

\label{tab:order}
\end{table}

\begin{figure*}[htbp]
    \centering
    \subfigure[]{\includegraphics[width=1.8in]{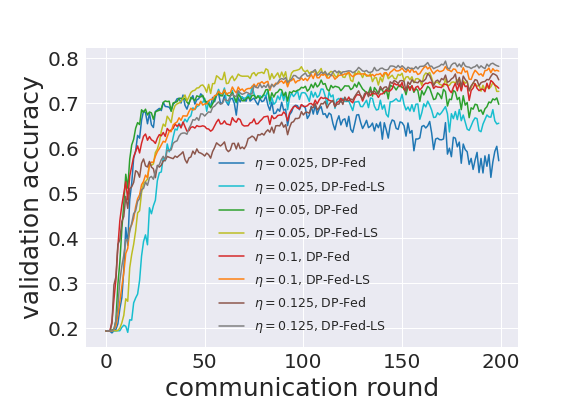}}
    \subfigure[]{\includegraphics[width=1.8in]{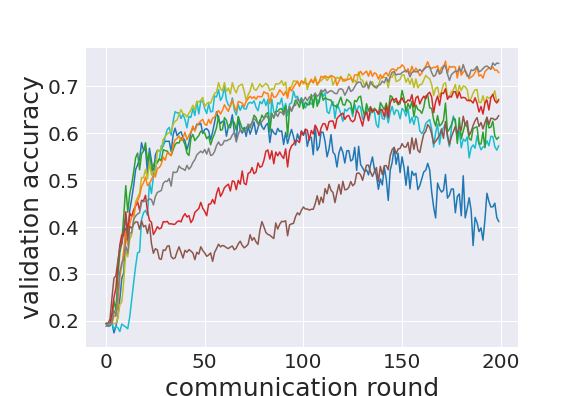}}
    \subfigure[]{\includegraphics[width=1.8in]{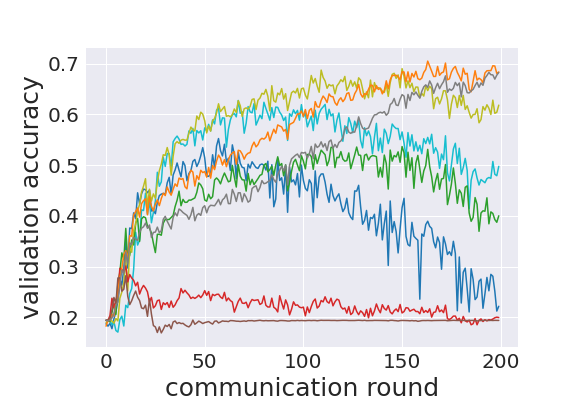}}
    \caption{Training curves of CNN on SVHN where large noise levels are applied, with Poisson subsampling and different local learning rates $\eta_l$. From left tor right, noise multiplier $z=2$, $2.5$ and $3$. For DP-Fed-LS, we set $\sigma=1$. %DP-Fed-LS is less sensitive to large noise and the change of learning rates than DP-Fed.
    }
    \label{fig:extreme}
\end{figure*}

\subsection{Long Short-Term Memory with Non-IID Shakespeare dataset}\label{sec:lstm}
Here, we train a differentially private LSTM on the Shakespeare dataset \cite{caldas2018leaf, mcmahan:2016}, which is built from all works of William Shakespeare, where each speaking role is considered as a client, whose local database consists of all her/his lines. This is a \textbf{Non-IID} setting. The full dataset contains 1,129 clients and 4,226,158 samples. Each sample consists of 80 successive characters and the task is to predict the next character \cite{caldas2018leaf, mcmahan:2016}. In our setting, we remove the clients that own less than 100 samples to stabilize training, which reduces the total client number to 975. We split the training, validation, and testing set chronologically \cite{caldas2018leaf, mcmahan:2016}, with fractions of 0.7, 0.1, 0.2. Our LSTM first embeds each input character into a 8-dimensional space, after which two LSTM layers are stacked, each have 256 nodes. The outputs will be then fed into a linear layer, of which the number of output nodes equals the number of distinct characters \cite{caldas2018leaf, mcmahan:2016}. In this experiment, we set $\batch=50$, $\epoch=5$ \cite{mcmahan:2016}, $\clip=5$ (by tuning described above), $T=100$, $\tau=0.2$, and $\lambda_0=4e-5$. Initial local learning rate $\eta_l=1.47$ \cite{mcmahan:2016} and will decay by a factor of $\gamma=0.99$ each communication round. We vary the DP budget by setting the noise multiplier $z=1.6,1.4,1.2,1.0$.

\textit{\textcolor{header1}{Improved test accuracy under the same privacy budget}}. The test accuracy in Table~\ref{tbl:lstm_acc} are comparable to the one in \cite{caldas2018leaf}. We can also conclude that DP-Fed-LS provides better utility than DP-Fed. The training curves are plotted in Figure~\ref{fig:lstm_loss_acc}. Generally speaking, the training curves in \textbf{Non-IID} setting suffer from larger fluctuation than the ones in \textbf{IID} setting above. And the curves of DP-Fed-LS are smoother than DP-Fed, which further shows the potential of DP-Fed-LS in real-world applications.

\begin{table}[htbp]
\centering
\caption{Test accuracy of LSTM on Shakespeare with DP-Fed ($\sigma=0$) and DP-Fed-LS ($\sigma=0,5, 1, 1.5$) under different $(\varepsilon,1/975^{1.1})$-DP guarantees and subsamplings.}
\begin{tabular}{ccccc|cccc}
\hline
\multicolumn{5}{c}{Uniform Subsampling} & \multicolumn{4}{c}{Poisson Subsampling} \\
\hline
$\varepsilon$   & 14.94 & 17.69 & 22.43 & 27.24 & 6.78 & 8.22 & 10.41&14.04\\
$\sigma=0.0$ & 38.22 & 38.47 & 39.96 & 41.87& 38.81 & 39.42 & 40.19 & 41.55 \\
$\sigma=0.5$ & 39.14 & 40.27 & 41.95 & 43.76 & 39.07 & 40.02 & 42.02 & 43.59\\
$\sigma=1.0$ & 39.18 & \textbf{40.94} & \textbf{42.60} & 43.90 & \textbf{39.45} & \textbf{41.07} & 42.09 & \textbf{43.78}\\
$\sigma=1.5$ & \textbf{40.16} & 40.89 & 42.50 & \textbf{43.95} & 39.38 & 40.99 & \textbf{42.19} & 43.67\\
\hline
\end{tabular}
\label{tbl:lstm_acc}
\end{table}

\begin{figure}[htbp]
    \centering
    \subfigure[]{\includegraphics[width=1.5in]{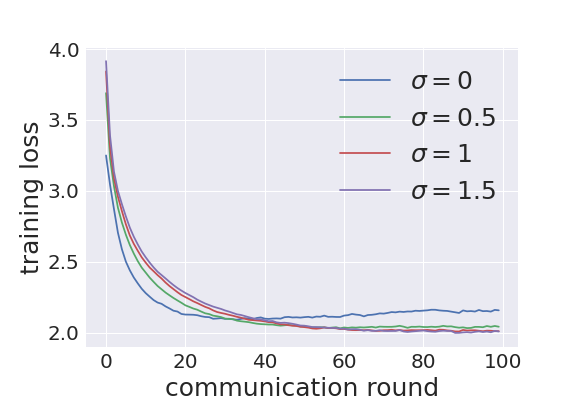}}
    \subfigure[]{\includegraphics[width=1.5in]{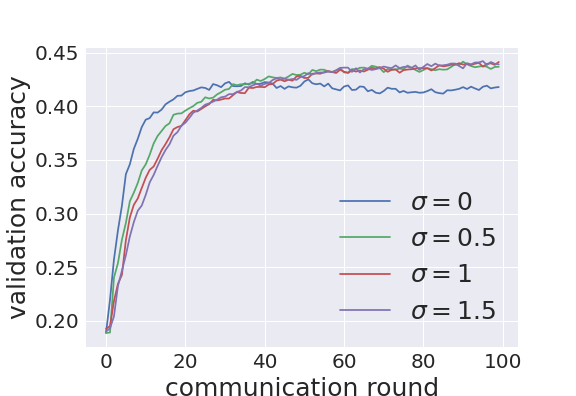}}
    \subfigure[]{\includegraphics[width=1.5in]{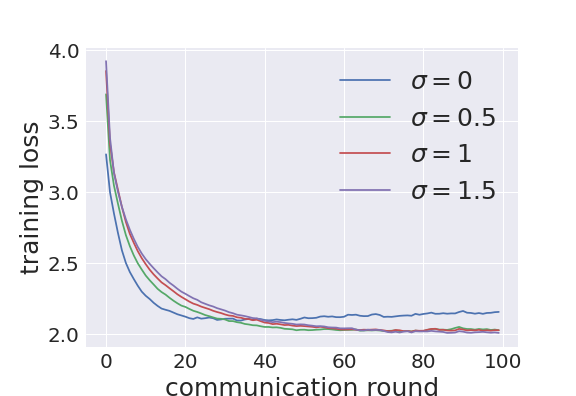}}
    \subfigure[]{\includegraphics[width=1.5in]{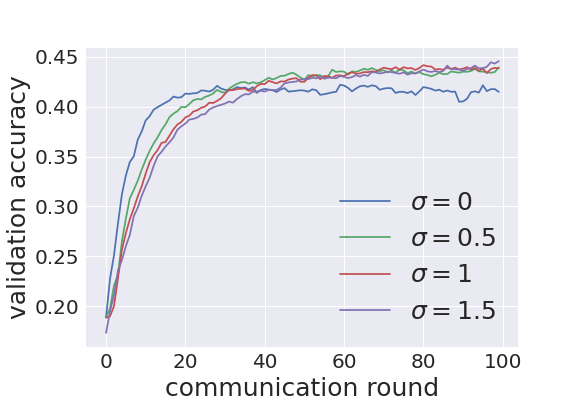}}
    \caption{Training curves of LSTM on Shakespeare dataset with DP-Fed $(\sigma=0)$, DP-Fed-LS $(\sigma=0.5, 1, 1.5)$. (a) and (b): training loss and validation accuracy with uniform subsampling and $(27.24, 1/975^{1.1})$-DP. (c) and (d): training loss and validation accuracy with Poisson subsampling, and $(14.04, 1/975^{1.1})$-DP.}
    \label{fig:lstm_loss_acc}
\end{figure}

\subsection{Membership Inference Attack}\label{sec:mi}
\textit{Membership privacy} is a simple yet quite practical notion of privacy \cite{Shokri:2017, yeom:2018, sablayrolles:2019}. Given a model $\theta$ and sample $z$, \textit{membership inference attack} is to infer the probability that a sample $z$ belongs to the training dataset \cite{sablayrolles:2019}. Specifically, a test set $\mathcal{T}=\{(z_i,m_i)\}$, is constructed with samples from both training data $(m_i=1)$ and hold-out data $(m_i=0)$, where the prior probability $P(m=1|\mathcal{T})=\rho_\mathcal{T}$. Then a successful membership attack can increase the excess probability using knowledge of model $\theta$, $P(m(z)=1|\theta, z\in \mathcal{T})-\rho_\mathcal{T}$.
In \cite{sablayrolles:2019}, Sablayrolles et al. define ($\varepsilon,\delta$)-membership privacy, and show that
%\YY{This paper defines the $(\varepsilon,\delta)$-membership privacy (Definition 3), which is different to $(\varepsilon,\delta)$-differential privacy in our paper.} 
$(\varepsilon, \delta)$-membership privacy can guarantee an upper bound $P(m(z)=1|\theta,z\in \mathcal{T})-\rho_\mathcal{T} \leq \frac{c\varepsilon}{4}+\delta$.

To evaluate the membership information leakage of models, 
\textit{threshold attack} \cite{yeom:2018} is adopted in our experiment. It is widely used as a metric to evaluate membership privacy \cite{jayaraman2019evaluating,wu:2020,yeom:2018}. It bases on the intuition that a sample with relatively small loss is more likely to belong to the training set, due to the more or less overfitting of ML models. Specifically, the test set $\mathcal{T}$ consists of both training and hold-out data of equal size (thus $\rho_\mathcal{T}=0.5$). Given a sample $z=(x,y)$ and a model $M_w(x)$, we calculate the loss $\ell(y, M_w(x))$. Then we select a threshold $t$: if $\ell \leq t$, we regard this sample in the training set; otherwise, it belongs to the hold-out set.
%it does not belong to the training set
As the threshold varies over all possible values in $(0,\mathcal{U})$, where $\mathcal{U}$ is the upper bound for $\ell$, the area under ROC curve (AUC) is used to measure the information leakage. In perfectly-private situation, the AUC should be 0.5, indicating that the adversary could not infer whether a given sample belongs to the training set or not. The larger the AUC, the more membership information leaks.

\textit{\textcolor{header1}{Improved membership privacy}}. We follow the setup in Section \ref{sec:cnn} here while we only split 64K data into 500 clients and set $\tau=0.2$ for training \cite{jayaraman2019evaluating,Shokri:2017,yeom:2018}. Our test set $\mathcal{T}$ for membership inference attack includes 10K training data and 10K testing data of SVHN. In Figure~\ref{fig:mi_auc}, we show the AUC values of threshold attack against different models. We observe that Non-DP model actually suffers high risk of membership leakage. And applying DP can significantly lower the risk. Comparing with DP-Fed, DP-Fed-LS may even further improve the membership privacy.

\begin{figure}[htbp]
    \centering
    \subfigure[]{\includegraphics[width=1.5in]{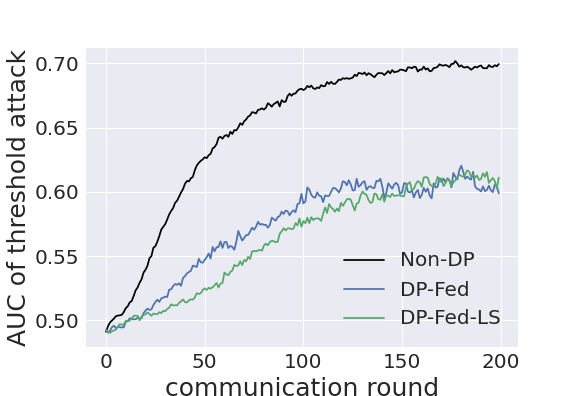}}
    \subfigure[]{\includegraphics[width=1.5in]{imgs/auc_evolution/mi_auc_svhn_uniform_u500_z1.png}}
    \subfigure[]{\includegraphics[width=1.5in]{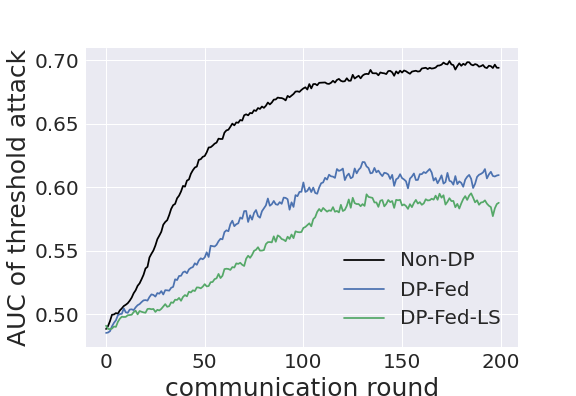}}
    \subfigure[]{\includegraphics[width=1.5in]{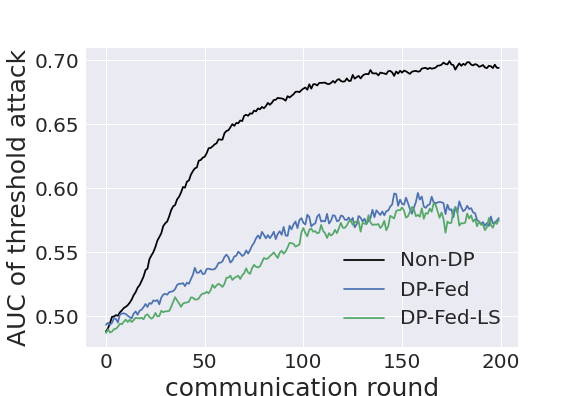}}
    \caption{AUC of threshold attack of different model on SVHN. (a) and (b): uniform subsampling with noise multiplier $z=1$  and $z=1.3$ for DP models. (c) and (d): Poisson subsampling with $z=1$ and $z=1.3$. The larger the AUC, the more membership information leakages. For DP-Fed-LS, the LS parameter $\sigma=1$.}
    \label{fig:mi_auc}
    \vspace{-1em}
\end{figure}

\section{Conclusion}

We introduce Laplacian smoothing to the noisy \textit{federated average of gradients} to improve the generalization accuracy with the same DP guarantee. Privacy bounds in tight closed-form are given under uniform or Poisson subsampling mechanisms, while optimization error bounds help us understand the theoretical effect of LS. Experimental results show that DP-Fed-LS outperforms DP-Fed in both \textbf{IID} and \textbf{Non-IID} settings, regarding accuracy and membership privacy, demonstrating its potential in practical applications.

\appendix
\section{Proof of Theorem \ref{Theorem-privacy-guarantee-federated-uniform}}\label{sec:proof-theorem-uniform}

We firstly introduce the notation of $\ell_2$-sensitivity and composition theorem of RDP.

\begin{definition}[$\ell_2$-Sensitivity] \label{L2-Sensitivity}
For any given function $f(\cdot)$, the $\ell_2$-sensitivity of $f$ is defined by
$$
\Delta (f) = \max_{\|D-D'\|_1=1} \|f(D) - f(D')\|_2,
$$
where $\|D-D'\|_1=1$ means the data sets $D$ and $D'$ differ in only one entry.
\end{definition}

\begin{lemma}[Composition Theorem of RDP \cite{mironov:2017renyi}]\label{lemma:com_post}
	If $k$ randomized mechanisms $\cM_i:\mathcal{D}\rightarrow\cR^d$, for $i\in[k]$, satisfy $(\alpha,\rho_i)$-RDP, then their composition $\big(\cM_1(D),\ldots,\cM_k(D)\big)$ satisfies $(\alpha,\sum_{i=1}^k\rho_i)$-RDP. Moreover, the input of the $i$-th mechanism can be based on outputs of the previous $(i-1)$ mechanisms.
\end{lemma}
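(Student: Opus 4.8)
The plan is to reduce the $k$-fold adaptively composed mechanism to the two-fold case and then proceed by induction on $k$. First I would fix an arbitrary pair of adjacent datasets $D,D'$ and describe the output of the composition as a tuple $(Y_1,\ldots,Y_k)$, where $Y_i$ is the output of $\mathcal{M}_i$ and the mechanism producing $Y_i$ is allowed to depend on the realized prefix $(Y_1,\ldots,Y_{i-1})$. Writing $p_i(y_i\mid y_{<i})$ and $q_i(y_i\mid y_{<i})$ for the conditional densities of the $i$-th output under $D$ and $D'$, the chain rule factorizes the joint densities into products, so that the exponentiated R\'enyi divergence $\exp\!\big((\alpha-1)D_\alpha(\mathcal{M}(D)\|\mathcal{M}(D'))\big)$ equals a nested integral of $\prod_i p_i^\alpha/q_i^{\alpha-1}$; the whole argument then rests on peeling this integral one coordinate at a time.

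The heart of the proof is the single peeling step for two mechanisms. I would rewrite the two-fold integral as $\int \frac{p_1(y)^\alpha}{q_1(y)^{\alpha-1}}\Big(\int \frac{p_2(z\mid y)^\alpha}{q_2(z\mid y)^{\alpha-1}}\,dz\Big)\,dy$ and recognize the inner integral as $\exp\!\big((\alpha-1)D_\alpha(\mathcal{M}_2(D;y)\|\mathcal{M}_2(D';y))\big)$ for the fixed prefix $y$. This is exactly where the $(\alpha,\rho_2)$-RDP guarantee of $\mathcal{M}_2$ enters: since that guarantee is required to hold for every admissible instantiation of $\mathcal{M}_2$, and in particular for the one selected by the prefix $y$, the inner integral is bounded by $e^{(\alpha-1)\rho_2}$ for almost every $y$. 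Pulling this constant out of the outer integral and applying the $(\alpha,\rho_1)$-RDP of $\mathcal{M}_1$ to what remains yields the bound $\exp\!\big((\alpha-1)(\rho_1+\rho_2)\big)$; taking $\tfrac{1}{\alpha-1}\log$ gives $D_\alpha(\mathcal{M}(D)\|\mathcal{M}(D'))\le \rho_1+\rho_2$, and since $D,D'$ were arbitrary the two-fold composition is $(\alpha,\rho_1+\rho_2)$-RDP.

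To finish, I would close the induction by grouping $(\mathcal{M}_1,\ldots,\mathcal{M}_{k-1})$ into one mechanism that is $(\alpha,\sum_{i<k}\rho_i)$-RDP by the inductive hypothesis and composing it with $\mathcal{M}_k$ through the two-fold result, obtaining $(\alpha,\sum_{i=1}^k\rho_i)$-RDP; adaptivity is preserved because $\mathcal{M}_k$ may read the entire grouped prefix. The main obstacle I anticipate is not any inequality but the adaptive and measure-theoretic bookkeeping: one must verify that the conditional densities are well defined, justify the Fubini-type separation of the nested integral for $\alpha>1$, and --- most importantly --- make precise that each per-mechanism RDP bound is uniform over all prefixes, so that the inner estimate $e^{(\alpha-1)\rho_2}$ holds pointwise in $y$ and not merely on average. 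Once this uniformity is nailed down, the remainder is the elementary factorization above.
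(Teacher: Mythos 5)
The paper does not prove this lemma at all; it is imported verbatim from the cited reference \cite{mironov:2017renyi}, so there is no in-paper proof to compare against. Your argument is the standard chain-rule/peeling proof of adaptive RDP composition from that reference (factor the joint density, bound the inner conditional R\'enyi integral uniformly over prefixes by $e^{(\alpha-1)\rho_2}$, pull it out, apply the first mechanism's bound, and induct), and it is correct, including your identification of the prefix-uniformity of the per-mechanism RDP guarantee as the step that makes adaptivity harmless.
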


Here we are going to provide privacy upper bound for FedAvg (DP-Fed). We drop the superscript $\step$ from $w_j^{t,\step}$ for simplicity, then
\begin{equation}\label{eq:algm2-sgd}
    {w}^{t+1} = {w}^t + \frac{\eta_g}{\sample} \Bigg( \sum_{j \in M_t} {w}^t_j - \sample \cdot {w}^t + {\bf{n}} \Bigg),
\end{equation}
And for the one with Laplacian Smoothing (DP-Fed-LS), it becomes
\begin{equation}\label{eq:algm2-lssgd}
    \tilde {w}^{t+1} = \tilde {w}^t + \frac{\eta_g}{\sample} A_{\sigma}^{-1} \Bigg( \sum_{j \in M_t} \tilde {w}^t_j - \sample \cdot \tilde {w}^t + {\bf{n}} \Bigg),
\end{equation}
where ${\bf{n}} \sim \mathcal{N}(0, \nu^2 I)$, and ${w}_j^t$ is the updated model from client $j$, based on the previous global model $ w^t$.

\begin{proof}
In the following, we will show that the Gaussian noise $\mathcal{N}(0, \nu^2)$ in Eq.~(\ref{eq:algm2-sgd}) for each coordinate of $\nb$, the output of DP-Fed, $\wb$, after $T$ iteration is ($\varepsilon$,$\delta$)-DP. We drop the superscript $\step$ from $w_j^{t,\step}$ for simplicity.

Let us consider the mechanism $\cM_t=\frac{1}{\sample}\sum_{j=1}^K w_j^t - w^t + \frac{1}{\sample} {\bf{n}}$ with query ${\bf{q}}_t=\frac{1}{\sample}\sum_{j=1}^{\client}w_j^t- w^t$ and its subsampled version $\hat \cM_t=\frac{1}{\sample}\sum_{j\in M_t}w_j^t -w^t + \frac{1}{\sample} \nb$. Define the query noise $\nb_q=\nb/\sample$ whose variance is $\nu_q^2:=\nu^2/\sample^2$.
We will firstly evaluate the sensitivity of $w_j^t$. For each local iteration 
$$w_j^t \leftarrow w_j^t - \eta_l g(w_j^t)$$
$$w^t_j \leftarrow w^t + \text{clip}\big( w_j^t - w^{t}, \clip \big),$$
where clip($v,\clip$)  $\leftarrow {v} / {\max}(1, \| v {\|}_2/\clip)$. All the local output $\Delta_j^t \leftarrow w_j^t - w^{t}$ will be inside the $l_2$-norm ball centering around $w^{t}$ with radius $\clip$. 
% Therefore, after local iterations, $$\| w_j^t - w_j^{t'} \| \leq 2\clip.$$
We have $l_2$-sensitivity of ${\bf{q}}_t$ as $\Delta ({\bf{q}}) = \| w_j^t - w_j^{t'} {\|}_2/\sample \leq 2\clip/\sample.$

According to \cite{mironov:2017renyi}, if we add noise with variance,
\begin{equation}\label{nu_uniform}
    \nu^2=\sample^2\nu_q^2=\frac{14 \tau^2  \alpha T \clip^2}{\lambda \varepsilon},
\end{equation}
the mechanism $\cM_t$ will satisfy $(\alpha,\alpha\Delta^2({\bf{q}})/(2\nu^2_q)) =(\alpha, \lambda \varepsilon/7\tau^2T)$-RDP. By Lemma~\ref{lemma:sub_uniform}, $\hat \cM_t$ will satisfy ($\alpha$,$\lambda \varepsilon/T$)-RDP provided that $\nu_q^2/\Delta^2({\bf{q}}) = \nu^2/(\sample^2\Delta^2({\bf{q}}))\geq \frac{2}{3}$ and $\alpha-1 \leq\frac{ 2\nu_q^2}{3\Delta^2({\bf{q}})} \log \big(1/\tau\alpha(1+\nu_q^2/\Delta^2({\bf{q}}))\big)$. By post-processing theorem, $\tilde \cM_t=A_{\sigma}^{-1}\big( \frac{1}{\sample}\sum_{j\in M_t} w_j^t - w^t + \frac{1}{\sample} \nb \big)$ will also satisfy $(\alpha,\lambda \varepsilon/T)$-RDP. 

Let $\alpha=\log(1/\delta)/((1-\lambda)\varepsilon)+1$, we obtain that $\hat \cM_t$ (and $\tilde \cM_t$) satisfies $(\log(1/\delta)/(1-\lambda)\varepsilon+1,\lambda \varepsilon/T)$-RDP as long as the following inequalities hold
\begin{equation}
    \frac{\nu^2_q}{\Delta^2({\bf{q}})}= \frac{\nu^2}{\sample^2\Delta^2({\bf{q}})}=
    %\frac{3.5 \tau^2 \alpha T}{\lambda \varepsilon}
    \frac{\nu^2}{4\clip^2}
    \geq \frac{2}{3}
\end{equation}
and 
\begin{equation}
    \alpha - 1 \leq \frac{\nu^2}{6\clip^2} \ln \frac{1}{\tau\alpha(1+\nu^2/4\clip^2)}.
\end{equation}

Therefore, according to Lemma \ref{lemma:com_post}, we have $w^t$ (and $\tilde w^t$) satisfies $(\log(1/\delta)/(1-\lambda)\varepsilon+1,\lambda t \varepsilon/T)$-RDP. Finally, by Lemma \ref{lemma:RDP_to_DP}, we have $w^t$ (and $\tilde w^t$) satisfies ($\lambda t \varepsilon/T+(1-\lambda)\varepsilon$, $\delta$)-DP. Thus, the output of DP-Fed (and DP-Fed-LS), $w$ (and $\tilde {w}$), is ($\varepsilon$,$\delta$)-DP.
\end{proof}

\section{Proof of Theorem~\ref{Theorem-privacy-guarantee-federated-poisson}}\label{sec:proof-theorem-poisson}

\begin{proof}
The proof is identical to proof of Theorem~\ref{Theorem-privacy-guarantee-federated-uniform} except that we use Lemma~\ref{lemma:sub_poisson} instead of Lemma~\ref{lemma:sub_uniform}. According to the definition of Poisson subsampling, we have $l_2$-sensitivity of ${\bf{q}}_t$ as $\Delta ({\bf{q}}) \leq \|w_j^{t'} {\|}_2/\sample \leq \clip/\sample.$ We start from the Eq. (\ref{nu_uniform}) in the proof of Theorem~\ref{Theorem-privacy-guarantee-federated-uniform}. If we add noise with variance 
\begin{equation}\label{nu_poisson}
    \nu^2=\sample^2\nu_q^2=\frac{2 \tau^2  \alpha T \clip^2}{\lambda \varepsilon},
\end{equation}
the mechanism $\cM_t$ will satisfy $(\alpha,\alpha\Delta^2({\bf{q}})/(2\nu^2_q)) =(\alpha, \frac{\lambda \varepsilon}{4\tau^2T})$-RDP. According to Lemma~\ref{lemma:sub_poisson}, $\hat \cM_t$ will satisfy ($\alpha$,$\lambda \varepsilon/T$)-RDP provided that 
\begin{equation}
    \frac{\nu_q^2}{\Delta^2({\bf{q}})} = \frac{\nu^2}{\sample^2\Delta^2({\bf{q}})} = \frac{\nu^2}{\clip^2}\geq
    \frac{5}{9},
\end{equation} 
and 
\begin{equation}
    \alpha - 1 \leq \frac{2\nu^2}{3\clip^2} \ln \frac{1}{\tau\alpha(1+\nu^2/\clip^2)}.
\end{equation}

By post-processing theorem, $\tilde \cM_t=A_{\sigma}^{-1}\big( \frac{1}{\sample}\sum_{j\in M_t} w_j^t - w^t + \frac{1}{\sample} \nb \big)$ will also satisfy $(\alpha,\lambda \varepsilon/T)$-RDP. Let $\alpha=\log(1/\delta)/((1-\lambda)\varepsilon)+1$, we obtain that $\hat \cM_t$ (and $\tilde \cM_t$) satisfies $(\log(1/\delta)/(1-\lambda)\varepsilon+1,\lambda \varepsilon/T)$-RDP. Therefore, according to Lemma \ref{lemma:com_post}, we have $w^t$ (and $\tilde w^t$) satisfies $(\log(1/\delta)/(1-\lambda)\varepsilon+1,\lambda t \varepsilon/T)$-RDP. Finally, by Lemma \ref{lemma:RDP_to_DP}, we have $w^t$ (and $\tilde w^t$) satisfies ($\lambda t \varepsilon/T+(1-\lambda)\varepsilon$, $\delta$)-DP. Thus, the output of DP-Fed (and DP-Fed-LS), $w$ (and $\tilde{w}$), is ($\varepsilon$,$\delta$)-DP.
\end{proof} 

\section{Proof Outline of Theorem~\ref{thm-convergenve}}\label{sec:full-statement-convergence}

\begin{proof}[Proof Sketch]
To prove Theorem~\ref{thm-convergenve}, we establish the following \textbf{Meta Theorem} summarizing the four optimization error terms caused by initial error, heterogeneous clients, stochastic gradient variance and differential privacy noise.
% Theorem~\ref{thm-strongly-convex}, \ref{thm-convex}, \ref{thm-non-convex} and Corollary~\ref{corollary-convergence-dp-uniform}.

\begin{metatheorem*}\label{meta-thm-convergence}
There exists constant step size $\eta_l$ and $\eta_g$, Gaussian noise $\nu$, communication round $T$, and clipping parameter $\clip$ such that DP-Fed-LS satisfies
\begin{equation}
    \begin{aligned}
      \Er(\bar{w}^T)  \leq \Er_{init} (T) + \Er_{hete} (T) + \Er_{var} (T) + \Er_{dp}(T)
    \end{aligned}
\end{equation}
where $\Er(\bar{w}^T) = \mathbb{E}[f(\bar{w}^T)] - f(w^*)$ for strongly convex and general convex cases while $\Er(\bar{w}^T)= \| \nabla f(\bar{w}) \|_{\textbf{A}_\sigma^{-1}}$ for non-convex case. $\Er_{init}$ is the initial error, $\Er_{hete}$ is introduced by the heterogeneity of clients' data, $\Er_{var}$ accounts for the variance of stochastic gradients, and $\Er_{dp}$ is due to privacy noise under Laplacian smoothing.
\end{metatheorem*}

To see this, the following Theorem~\ref{thm-strongly-convex}, \ref{thm-convex} and \ref{thm-non-convex} instantiate the Meta Theorem for three scenarios, i.e. strongly convex, convex, and non-convex cases of loss functions, respectively, whose detailed proof can be found in Section~\ref{sec:proof-of-convergence} in supplementary material.

\begin{theorem}[$\mu$ Strongly-Convex]\label{thm-strongly-convex}
Under Assumption~\ref{assumption-BGD}, \ref{assumption-smooth}, \ref{assumption-mu-strongly-convex}, \ref{assumption-gradient-variance}, \ref{assumption-G-lipschitz}, $\eta_g \geq 1$, $a_t=(1-\mu_\sigma \eta \step/2)^{-t} /(\sum_{t=0}^T (1-\mu_\sigma \eta \step/2)^{-t})$, $\tilde{\eta} = \min \bigg\{ \frac{2\log(\max(e,\mu_\sigma^2 T D_\sigma/H_\sigma)))}{\mu_\sigma \step T} , \frac{\Lambda_{\min}}{8\step \smooth (1+\heteB^2)} \bigg\}$, $\clip=\eta_l \step \lip$, and $T \geq \frac{1}{\mu_\sigma \tilde{\eta} \step}$, Algorithm~\ref{algorithm:1} with uniform subsampling satisfies
\begin{equation*}
\small
    \Er_{init}  = 3 \mu_\sigma \exp(-\mu_\sigma \tilde{\eta} \step T/2) D_\sigma  \leq \tilde{\mathcal{O}} \bigg(\frac{H_\sigma}{\mu_\sigma T} \bigg), \ \ \ \Er_{var} \leq 2 \tilde{\eta}(1+\frac{\sample}{\eta_g^2}) \frac{\varsigma^2(\sigma)}{\sample} \leq \tilde{\mathcal{O}} \bigg( \frac{(1+\frac{\sample}{\eta_g^2})\varsigma^2(\sigma)}{\mu_\sigma \step \sample T} \bigg )
\end{equation*}
\begin{equation*}
\small
     \Er_{dp} \leq \frac{ 2 \tilde{\eta} d_\sigma \step \lip^2 \nu^2_1}{\sample^2} \leq \tilde{\mathcal{O}} \bigg(\frac{d_\sigma \lip^2 \nu^2_1}{\mu_\sigma \sample^2 T}\bigg), \ \ \ \Er_{hete} \leq 24 \tilde{\eta}^2 \step^2 \smooth \heteG^2 + 8\tilde{\eta} (1-\tau)  \frac{\step}{\sample} \heteG^2 \leq \tilde{\mathcal{O}} \bigg( \frac{\smooth \heteG^2}{\mu_\sigma^2 T^2} + \frac{\heteG^2(1-\tau) }{\mu_\sigma \sample T}  \bigg ).
\end{equation*}
where $\mu_\sigma = \mu \Lambda_{\min}\geq \mu/(1+4\sigma) $,  $H_\sigma=\big( \frac{1}{\eta_g^2 \step} + \frac{1}{\sample \step} \big) \varsigma^2(\sigma) + \frac{4}{\sample} (1-\tau) \heteG^2 + \frac{ \lip^2 \nu^2_1 d_\sigma}{\sample^2}$, $D_\sigma=\|w^{0} - w^* \|_{\textbf{A}_\sigma}^2$, and the effective dimension $d_\sigma=\sum_{i=1}^d \Lambda_i$.

\end{theorem}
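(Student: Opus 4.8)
The plan is to analyze a single communication round of DP-Fed-LS in the preconditioned norm $\|\cdot\|_{\textbf{A}_\sigma}$, derive a contractive one-round recursion for $\E\|\tilde{w}^{t}-w^*\|_{\textbf{A}_\sigma}^2$, and then form the $a_t$-weighted average (with the exponentially growing weights in the statement) so that the distance terms telescope geometrically and the four displayed error terms fall out. Working in the $\textbf{A}_\sigma$-norm is the key simplification: since the global update in Eq.~(*) is preconditioned by $\textbf{A}_\sigma^{-1}$, the cross term between $\tilde{w}^t-w^*$ and the aggregated direction collapses to an inner product of the \emph{unpreconditioned} aggregated local updates with $\tilde{w}^t-w^*$, turning each round into an inexact preconditioned gradient step on $f$. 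Throughout I write $\tilde{\eta}=\eta_l\eta_g$ for the effective per-step learning rate, with $\step$ local steps per round.

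First I would expand $\E\|\tilde{w}^{t+1}-w^*\|_{\textbf{A}_\sigma}^2$ and split the resulting second-moment term into the contribution of the subsampled client sum and of the injected Gaussian noise. The noise part, $\frac{\eta_g}{\sample}\textbf{A}_\sigma^{-1}\nb$, contributes
\begin{equation*}
\frac{\eta_g^2}{\sample^2}\,\E\|\textbf{A}_\sigma^{-1}\nb\|_{\textbf{A}_\sigma}^2 = \frac{\eta_g^2\nu^2}{\sample^2}\,\mathrm{tr}(\textbf{A}_\sigma^{-1}) = \frac{\eta_g^2\nu^2 d_\sigma}{\sample^2} = \frac{\tilde{\eta}^2\step^2\lip^2\nu_1^2 d_\sigma}{\sample^2},
\end{equation*}
using $\mathrm{tr}(\textbf{A}_\sigma^{-1})=\sum_i\Lambda_i=d_\sigma$ together with $\nu=\clip\nu_1$ and $\clip=\eta_l\step\lip$; this is exactly where the effective dimension $d_\sigma$ enters and where Laplacian smoothing purchases its variance reduction, and after dividing the recursion through by $\tilde{\eta}\step$ it yields $\Er_{dp}$. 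The uniform without-replacement sampling variance of the client average produces the $\tfrac{1}{\sample}$ and $(1-\tau)$ factors in $\Er_{var}$ and $\Er_{hete}$, invoking Assumption~\ref{assumption-gradient-variance} (already stated in the $\textbf{A}_\sigma^{-1}$-norm) for the stochastic-gradient part and Assumption~\ref{assumption-BGD} for the heterogeneity part.

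The technical heart is controlling the client drift $\frac{1}{\sample}\sum_j\E\|w_j^{t,i}-\tilde{w}^t\|^2$ accumulated over the $\step$ local SGD steps. I would bound it by induction on $i$ using $\smooth$-smoothness (Assumption~\ref{assumption-smooth}) and $(\heteG,\heteB)$-BGD (Assumption~\ref{assumption-BGD}), obtaining a bound of order $\tilde{\eta}^2\step^2(\heteG^2+\heteB^2\|\nabla f(\tilde{w}^t)\|^2)+\tilde{\eta}^2\step\,\varsigma^2(\sigma)$. A subtle but essential point is that the choice $\clip=\eta_l\step\lip$ combined with $\lip$-Lipschitzness (Assumption~\ref{assumption-G-lipschitz}) keeps the unclipped local trajectory inside the $\clip$-ball, so the clipping in \textsc{ClientUpdate} is inactive and introduces no bias. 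Feeding the drift bound back, applying $\mu$-strong convexity (Assumption~\ref{assumption-mu-strongly-convex}, rescaled by $\Lambda_{\min}$ so that $\mu_\sigma=\mu\Lambda_{\min}$) to convert $\langle\nabla f,\tilde{w}^t-w^*\rangle$ into distance contraction plus a negative function-gap term, and absorbing the $\heteB^2\|\nabla f\|^2$ term by the step-size cap $\tilde{\eta}\leq\frac{\Lambda_{\min}}{8\step\smooth(1+\heteB^2)}$ (the second branch of the $\min$), gives
\begin{equation*}
\E\|\tilde{w}^{t+1}-w^*\|_{\textbf{A}_\sigma}^2 \leq \Big(1-\tfrac{\mu_\sigma\tilde{\eta}\step}{2}\Big)\E\|\tilde{w}^{t}-w^*\|_{\textbf{A}_\sigma}^2 - \tilde{\eta}\step\,\Er(w^t) + \tilde{\eta}^2\step^2 H_\sigma,
\end{equation*}
with $H_\sigma$ collecting the variance, heterogeneity, and DP-noise coefficients.

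Finally I would divide by $\tilde{\eta}\step$, take the $a_t$-weighted sum so the distance terms telescope (the weights are precisely the inverses of the contraction factor), and invoke the standard strongly-convex step-size lemma of \cite{karimireddy2020scaffold} that optimizes the first branch $\tilde{\eta}=\frac{2\log(\max(e,\mu_\sigma^2 T D_\sigma/H_\sigma))}{\mu_\sigma\step T}$; this turns the geometric initial term into $\tilde{\mathcal{O}}(H_\sigma/(\mu_\sigma T))$ and each accumulated term into its displayed rate. The main obstacle I anticipate is not any single estimate but making all constants match the stated geometry at once: the preconditioner rescales both smoothness and strong convexity by $\Lambda_{\min}$, the drift and subsampling variances must be measured consistently in the dual norm pair $(\textbf{A}_\sigma,\textbf{A}_\sigma^{-1})$, and the DP constraint $\nu=\clip\nu_1$ with $\clip=\eta_l\step\lip$ must propagate through $T=\frac{\varepsilon^2\client^2}{C_0\lip^2\sample\log(1/\delta)}$ of Theorem~\ref{thm-convergenve} while still satisfying the conditions of Theorem~\ref{Theorem-privacy-guarantee-federated-uniform}. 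Keeping the clipping inactive throughout is the delicate link that ties the optimization and privacy analyses together.
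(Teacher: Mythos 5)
Your proposal follows essentially the same route as the paper's proof: a one-round recursion for $\mathbb{E}\|w^{t}-w^*\|_{\textbf{A}_\sigma}^2$ with the cross term handled by perturbed strong convexity (rescaled by $\Lambda_{\min}$), the client drift bounded by induction over the $\step$ local steps, the subsampling variance giving the $(1-\tau)$ factor, the DP noise contributing $\nu_1^2\clip^2\,\mathrm{tr}(\textbf{A}_\sigma^{-1})=\clip^2\nu_1^2 d_\sigma$, and the geometric $a_t$-weighted telescoping followed by the two-case step-size choice. The only cosmetic difference is that you make explicit the (correct, and in the paper only implicit) observation that $\clip=\eta_l\step\lip$ renders the clipping inactive so the local recursion can be analyzed unclipped.
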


\begin{theorem}[General-Convex]\label{thm-convex}
Under Assumption~\ref{assumption-BGD}, \ref{assumption-smooth}, \ref{assumption-convex}, \ref{assumption-gradient-variance}, \ref{assumption-G-lipschitz}, $\eta_g \geq 1$, $a_t=1/(T+1)$, and $\tilde{\eta}=\min \bigg\{\sqrt{\frac{D_\sigma}{H_\sigma T\step^2}}, \sqrt[3]{\frac{D_\sigma}{Q_1 T\step^3}}, \frac{\Lambda_{\min}}{8\step \smooth(1+\heteB^2)} \bigg\}$, $\clip=\eta_l \step \lip$, Algorithm~\ref{algorithm:1} with uniform subsampling satisfies
\begin{equation*}
\footnotesize
    \Er_{init}  = \frac{2}{\tilde{\eta} T \step}D_\sigma  \leq \frac{16\smooth (1+\heteB^2)(1+4\sigma) D_\sigma}{T}, \ \ \ \Er_{var} \leq 2 \tilde{\eta}(1 + \frac{\sample}{\eta_g^2}) \frac{\varsigma^2(\sigma)}{\sample} \leq 2\sqrt{\frac{ (1+\frac{\sample}{\eta_g^2}) \varsigma^2(\sigma)D_\sigma}{\step \sample T}}
\end{equation*}
\begin{equation*}
\footnotesize
     \Er_{dp} \leq \frac{ 2 \tilde{\eta} d_\sigma \step \lip^2 \nu^2_1}{\sample^2} \leq 2\sqrt{\frac{ D_\sigma \lip^2 \nu^2_1 d_\sigma}{\sample^2 T}}, \ \ \ \Er_{hete} \leq \underbrace{24 \smooth \heteG^2}_{Q_1} \tilde{\eta}^2 \step^2+ 8 \tilde{\eta} (1-\tau) \frac{\step}{\sample} \heteG^2 \leq \sqrt[3]{\frac{24\smooth D_\sigma^2 \heteG^2 }{T^2}} + 4\sqrt{\frac{(1-\tau) D_\sigma  \heteG^2}{\sample T}}.
\end{equation*} 
where $H_\sigma=\big( \frac{1}{\eta_g^2 \step} + \frac{1}{\sample \step} \big) \varsigma^2(\sigma) + \frac{4}{\sample} (1-\tau) \heteG^2 + \frac{ \lip^2 \nu^2_1 d_\sigma}{\sample^2}$, $D_\sigma=\|w^0 - w^*\|^{2}_{\textbf{A}_\sigma}$, and the effective dimension $d_\sigma=\sum_{i=1}^d \Lambda_i$.
\end{theorem}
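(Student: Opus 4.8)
The plan is to prove Theorem~\ref{thm-convex} by first deriving a one-step progress inequality in the $\textbf{A}_\sigma$-weighted geometry, then telescoping and optimizing over the effective step size $\tilde{\eta}=\eta_l\eta_g$; the four quantities $\Er_{init},\Er_{var},\Er_{dp},\Er_{hete}$ are exactly the instantiation of the Meta Theorem's terms in the convex case. The natural potential is $\|w^t-w^*\|_{\textbf{A}_\sigma}^2$, which is why $D_\sigma=\|w^0-w^*\|_{\textbf{A}_\sigma}^2$ appears: since the server update is $w^{t+1}-w^t=\frac{\eta_g}{\sample}\textbf{A}_\sigma^{-1}(\sum_{j\in S_t}\Delta_j^t+\mathbf{n})$, expanding $\|w^{t+1}-w^*\|_{\textbf{A}_\sigma}^2$ and using the identity $\langle\textbf{A}_\sigma^{-1}u,v\rangle_{\textbf{A}_\sigma}=\langle u,v\rangle$ collapses the cross term to an ordinary inner product of the aggregated update with $w^t-w^*$, while the quadratic term becomes $\frac{\eta_g^2}{\sample^2}\|\cdot\|_{\textbf{A}_\sigma^{-1}}^2$. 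This is precisely the mechanism by which the effective dimension $d_\sigma=\mathrm{tr}(\textbf{A}_\sigma^{-1})=\sum_i\Lambda_i$ enters $\Er_{dp}$, because $\mathbb{E}\|\mathbf{n}\|_{\textbf{A}_\sigma^{-1}}^2=\nu^2 d_\sigma$.

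Before assembling the recursion I would observe that the choice $\clip=\eta_l\step\lip$ together with the $\lip$-Lipschitz Assumption~\ref{assumption-G-lipschitz} renders the clipping operator inactive: after $\step$ local SGD steps of size $\eta_l$ on gradients bounded by $\lip$, the displacement $\|w_j^{t,i}-w^t\|$ never exceeds $\eta_l\step\lip=\clip$, so each $\Delta_j^t$ equals the un-clipped local update. This lets me treat the local trajectory as ordinary SGD and reuse a \cite{karimireddy2020scaffold}-style drift lemma, bounding the accumulated deviation $\sum_i\mathbb{E}\|w_j^{t,i}-w^t\|^2$ recursively via $\smooth$-smoothness (Assumption~\ref{assumption-smooth}) and the $(\heteG,\heteB)$-bounded dissimilarity (Assumption~\ref{assumption-BGD}). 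The step-size cap $\tilde{\eta}\leq\Lambda_{\min}/(8\step\smooth(1+\heteB^2))$ is exactly what keeps this recursion contractive and produces the $\tilde{\eta}^2\step^2\smooth\heteG^2$ drift contribution identified as $Q_1$.

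Taking expectations over the uniform client subset $S_t$ and the Gaussian noise, the subsampling variance of the aggregated gradient contributes the $(1-\tau)$-weighted heterogeneity term (sampling $\sample$ of $\client$ clients yields variance proportional to $(1-\tau)/\sample$ times the dissimilarity), while the noise contributes $\Er_{dp}\leq 2\tilde{\eta}d_\sigma\step\lip^2\nu_1^2/\sample^2$ after substituting $\nu=\clip\nu_1=\eta_l\step\lip\nu_1$, and Assumption~\ref{assumption-gradient-variance} supplies $\Er_{var}$ in the $\textbf{A}_\sigma^{-1}$-norm. Invoking convexity (Assumption~\ref{assumption-convex}) to lower-bound $\langle\nabla f(w^t),w^t-w^*\rangle$ by $f(w^t)-f(w^*)$ yields a recursion of the form $\mathbb{E}[f(w^t)-f(w^*)]\lesssim\frac{1}{2\tilde{\eta}\step}(\mathbb{E}\|w^t-w^*\|_{\textbf{A}_\sigma}^2-\mathbb{E}\|w^{t+1}-w^*\|_{\textbf{A}_\sigma}^2)+\tilde{\eta}H_\sigma+\tilde{\eta}^2\step^2 Q_1$. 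Averaging with the uniform weights $a_t=1/(T+1)$ and applying Jensen telescopes the potential to $\Er(\bar{w}^T)\leq\frac{2D_\sigma}{\tilde{\eta}\step T}+\tilde{\eta}H_\sigma+\tilde{\eta}^2\step^2 Q_1$.

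The final step is the step-size optimization, a standard balancing argument: matching $D_\sigma/(\tilde{\eta}\step T)$ against the linear term $\tilde{\eta}H_\sigma$ gives the candidate $\sqrt{D_\sigma/(H_\sigma T\step^2)}$ and a $\sqrt{\cdot/T}$ rate, matching it against the quadratic term $\tilde{\eta}^2\step^2 Q_1$ gives $\sqrt[3]{D_\sigma/(Q_1 T\step^3)}$ and a $T^{-2/3}$ rate, and the smoothness cap supplies the $\mathcal{O}(1/T)$ initial-error branch; taking $\tilde{\eta}$ to be the minimum of the three and substituting each branch back reproduces the four stated bounds. I expect the main obstacle to be the drift analysis under the Laplacian-smoothing geometry: one must verify that the $(\heteG,\heteB)$-dissimilarity and $\smooth$-smoothness, stated in the standard Euclidean norm, interact correctly with the $\textbf{A}_\sigma^{-1}$-preconditioned update so that the $\Lambda_{\min}$ and effective-dimension factors land in the right places, and that the local-step recursion stays contractive under the stated cap. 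Once the descent inequality is secured, the telescoping and tuning are routine.
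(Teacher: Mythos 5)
Your proposal follows essentially the same route as the paper: the same $\|w^t-w^*\|_{\textbf{A}_\sigma}^2$ potential, the same decomposition of the one-step recursion into a convexity cross term, an $\textbf{A}_\sigma^{-1}$-norm quadratic term split into gradient/variance/noise pieces (with $\mathrm{tr}(\textbf{A}_\sigma^{-1})=d_\sigma$ entering exactly as you describe), the same SCAFFOLD-style drift lemma under the cap $\tilde{\eta}\leq\Lambda_{\min}/(8\step\smooth(1+\heteB^2))$, and the same telescoping plus three-way step-size balancing. The paper obtains the convex recursion simply by setting $\mu=0$ in the strongly-convex intermediate inequality, but the content is identical to what you outline, so the proposal is correct.
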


\begin{theorem}[Non-Convex]\label{thm-non-convex}
Under Assumption~\ref{assumption-BGD}, \ref{assumption-smooth}, \ref{assumption-gradient-variance}, \ref{assumption-G-lipschitz}, $\eta_g \geq 1$, $a_t=1/(T+1)$, $\tilde{\eta} = \min\{ \sqrt{\frac{F_0}{H_\sigma T \smooth \step^2}}, \sqrt[3]{\frac{F_0}{Q_2 T \step^3}}, \frac{\Lambda_{\min}}{8 \step \smooth (1+\heteB^2)}\}$, $\clip = \eta_l \step \lip$, Algorithm~\ref{algorithm:1} with uniform subsampling satisfies
\begin{equation*}
\footnotesize
    \Er_{init}  = \frac{8}{\tilde{\eta} T \step}F_0 \leq \frac{64\smooth (1+\heteB^2)(1+4\sigma) F_0}{T}, \ \ \ \Er_{var} \leq 4 \tilde{\eta} \smooth (1+\frac{\sample}{\eta_g^2})  \frac{\varsigma^2(\sigma)}{\sample} \leq 4\sqrt{\frac{ (1+\frac{\sample}{\eta_g^2})\varsigma^2(\sigma) F_0 \smooth}{\step \sample T}}
\end{equation*}
\begin{equation*}
\footnotesize
     \Er_{dp} \leq \frac{ 4 \tilde{\eta} \tilde{d}_\sigma \step \lip^2 \smooth \nu^2_1}{\sample^2} \leq 2\sqrt{\frac{ F_0 \lip^2 \nu^2_1 \smooth \tilde{d}_\sigma}{\sample^2 T}}, 
\end{equation*} 
\begin{equation*}
\footnotesize
    \Er_{hete} \leq \underbrace{32\smooth^2 \heteG^2}_{Q_2} \tilde{\eta}^2 \step^2 + 16 \tilde{\eta} (1-\tau) \smooth  \frac{\step}{\sample}\heteG^2 \leq \sqrt[3]{\frac{32 F_0^2 \heteG^2 \smooth^2}{T^2}} + 8\sqrt{\frac{(1-\tau)F_0 \smooth \heteG^2}{\sample T}}.
\end{equation*}
where $\tilde{H}_\sigma=\big( \frac{1}{\eta_g^2 \step} + \frac{1}{\sample \step} \big) \varsigma^2(\sigma) + \frac{4}{\sample} (1-\tau) \heteG^2 + \frac{ \lip^2 \nu^2_1 \tilde{d}_\sigma}{\sample^2}$, $F_0 = f(w^0) - f(w^*)$, and the effective dimension $\tilde{d}_\sigma=\sum_{i=1}^d \Lambda_i^2$.
\end{theorem}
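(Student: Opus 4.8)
The plan is to establish the non-convex bound as one instantiation of the \textbf{Meta Theorem}, following the descent-lemma-plus-telescoping template of federated averaging but carried out in the $\mathbf{A}_\sigma^{-1}$-geometry induced by Laplacian smoothing. The natural progress measure is the preconditioned gradient norm $\|\nabla f(w^t)\|_{\mathbf{A}_\sigma^{-1}}^2$, since the global step in Eq.~(*) applies $\mathbf{A}_\sigma^{-1}$ to the aggregated update. First I would record two preliminary facts. (i) With the clipping radius set to $\clip=\eta_l\step\lip$ and each $f_j$ being $\lip$-Lipschitz (Assumption~\ref{assumption-G-lipschitz}), every partial trajectory satisfies $\|w_j^{t,i}-w^t\|\le\eta_l\sum_{r<i}\|g_j\|\le\eta_l\step\lip=\clip$, so the clipping operator is inactive and introduces no bias; hence $\Delta_j^t=-\eta_l\sum_i g_j(w_j^{t,i})$ exactly. (ii) Uniform subsampling is unbiased, $\mathbb{E}[\tfrac1\sample\sum_{j\in\mathcal{S}_t}\Delta_j^t]=\tfrac1\client\sum_j\Delta_j^t$, and the injected Gaussian noise is zero-mean, so both drop out of the first-order term of the descent expansion.

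Next I would apply the $\smooth$-smoothness descent lemma (Assumption~\ref{assumption-smooth}) to $f$ in the Euclidean norm. Writing $w^{t+1}-w^t=\tfrac{\eta_g}{\sample}\mathbf{A}_\sigma^{-1}(\sum_{j\in\mathcal{S}_t}\Delta_j^t+\mathbf n)$ and taking conditional expectation, the linear term collapses by (ii) to the descent direction $-\tilde\eta\step\|\nabla f(w^t)\|_{\mathbf{A}_\sigma^{-1}}^2$ up to a drift correction, where $\tilde\eta=\eta_g\eta_l$ is the effective step size. The quadratic term $\tfrac{\smooth}{2}\mathbb{E}\|w^{t+1}-w^t\|^2$ is where the error sources separate: the fresh stochastic-gradient noise yields $\Er_{\mathrm{var}}$ through Assumption~\ref{assumption-gradient-variance}, where the variance is controlled in the $\mathbf{A}_\sigma^{-1}$-norm and the ordering $\mathbf{A}_\sigma^{-2}\preceq\mathbf{A}_\sigma^{-1}$ (eigenvalues $\Lambda_i\le1$) lets me discard the extra preconditioner power and keep $\varsigma^2(\sigma)$ clean; the DP noise contributes $\tfrac{\smooth}{2}\tfrac{\eta_g^2}{\sample^2}\nu^2\,\mathrm{tr}(\mathbf{A}_\sigma^{-2})$, and since $\mathrm{tr}(\mathbf{A}_\sigma^{-2})=\sum_i\Lambda_i^2=\tilde d_\sigma$ and $\nu=\clip\nu_1=\eta_l\step\lip\nu_1$, this becomes exactly $\Er_{\mathrm{dp}}$; and the client-subsampling variance produces the $(1-\tau)\heteG^2/\sample$ piece of $\Er_{\mathrm{hete}}$. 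It is worth noting that the Euclidean descent is precisely what makes the DP and variance terms carry $\tilde d_\sigma=\sum\Lambda_i^2$, in contrast to the (strongly-)convex cases whose $\mathbf{A}_\sigma$-norm Lyapunov function yields $d_\sigma=\sum\Lambda_i$.

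The remaining ingredient is a bounded-drift lemma controlling $\tfrac1\client\sum_j\mathbb{E}\|w_j^{t,i}-w^t\|^2$ over the $\step$ local steps; unrolling the local SGD recursion and using $\smooth$-smoothness together with $(\heteG,\heteB)$-BGD (Assumption~\ref{assumption-BGD}) bounds the drift by a quantity of order $\tilde\eta^2\step^2\smooth^2\heteG^2$, which supplies the $Q_2\tilde\eta^2\step^2$ part of $\Er_{\mathrm{hete}}$, provided $\tilde\eta$ is small enough to fold the $\heteB^2\|\nabla f\|^2$ growth back into the descent term --- this is exactly the role of the stability cap $\tilde\eta\le\Lambda_{\min}/(8\step\smooth(1+\heteB^2))$ in the hypotheses. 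I would then telescope the per-round inequality over $t=0,\dots,T-1$, divide by the accumulated descent $\tfrac18\tilde\eta\step T$ to expose $\Er_{\mathrm{init}}=8F_0/(\tilde\eta\step T)$, and interpret $\bar w^T$ (with $a_t=1/(T+1)$) as a uniformly sampled iterate so that $\mathbb{E}\|\nabla f(\bar w^T)\|_{\mathbf{A}_\sigma^{-1}}^2$ equals the average of the per-round preconditioned gradient norms. Substituting the three-way minimum defining $\tilde\eta$ then balances $\Er_{\mathrm{init}}$ against the $\tilde\eta$-linear terms ($\Er_{\mathrm{var}},\Er_{\mathrm{dp}}$) and the $\tilde\eta^2$-quadratic term, giving the $T^{-1/2}$ and $T^{-2/3}$ rates stated, while $(\varepsilon,\delta)$-DP is inherited verbatim from Theorem~\ref{Theorem-privacy-guarantee-federated-uniform} since Laplacian smoothing is post-processing.

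I expect the main obstacle to be the bounded-drift estimate in the preconditioned geometry: one must simultaneously keep the $\mathbf{A}_\sigma^{-1}$-weighting straight so that the descent, variance, and DP terms each pick up the correct eigenvalue power ($\Lambda_i$ versus $\Lambda_i^2$), verify that the stability cap is strong enough to absorb the $\heteB^2\|\nabla f\|^2$ term from BGD, and handle the subsampling cross terms that generate the $(1-\tau)$ factor without introducing uncontrolled bias. The clipping-inactivity argument (i), although routine under the Lipschitz assumption, is also load-bearing: were clipping active it would bias $\Delta_j^t$ and destroy the unbiasedness relied on in step (ii) to eliminate the first-order term.
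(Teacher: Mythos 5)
Your proposal follows essentially the same route as the paper's proof: the $\beta$-smoothness descent lemma in the Euclidean norm, decomposition of the resulting terms into the first-order descent (yielding $-\tfrac{\tilde\eta \step}{2}\|\nabla f(w^t)\|_{\mathbf{A}_\sigma^{-1}}^2$ plus a drift correction), the DP noise via $\mathrm{tr}(\mathbf{A}_\sigma^{-2})=\tilde d_\sigma$, the stochastic-gradient variance, and the subsampling heterogeneity, followed by the bounded-drift lemma absorbed through the stability cap on $\tilde\eta$, telescoping, and the three-way step-size minimum. Your explicit clipping-inactivity argument and the observation that $\mathbf{A}_\sigma^{-2}\preceq\mathbf{A}_\sigma^{-1}$ keeps $\varsigma^2(\sigma)$ measured in the $\mathbf{A}_\sigma^{-1}$-norm make explicit two points the paper leaves implicit, but they do not change the argument.
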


Finally, Theorem~\ref{thm-convergenve} follows from substituting $\nu_1$ in \ref{thm-strongly-convex}, \ref{thm-convex} and \ref{thm-non-convex} by the one in Theorem ~\ref{Theorem-privacy-guarantee-federated-uniform}. For completeness, we wrap it into a corollary, which ends the proof.
\end{proof} 

\begin{corollary}\label{corollary-convergence-dp-uniform} Assume that $\log(1/\delta) \geq\epsilon$, $\eta_g \geq \sqrt{\sample}$, the conditions on $a_t$, $\clip$ and $\tilde{\eta}$ in Theorem \ref{thm-strongly-convex}, \ref{thm-convex} and \ref{thm-non-convex}, as well as the assumptions in Theorem ~\ref{Theorem-privacy-guarantee-federated-uniform}. Algorithm~\ref{algorithm:1} with uniform subsampling satisfies $(\varepsilon,\delta)$-DP and the following optimization error bounds.

% Assuming that $\log(1/\delta) \geq\varepsilon$, the following holds for Algorithm~\ref{algorithm:1} with uniform subsampling.

\begin{itemize}
%[noitemsep,topsep=0pt, leftmargin=*]
    \item \textbf{$\mu$ Strongly-Convex:} 
    Select $T=\frac{\varepsilon^2 \client^2}{C_0 \lip^2 \sample \log(1/\delta)}$ with $T \geq \frac{1}{\mu_\sigma \tilde{\eta} \step}$ where $\tilde{\eta}$ follows from Theorem~\ref{thm-strongly-convex}. Then 
    \begin{equation*}
        \begin{aligned}
            \mathbb{E} [f(\bar{w}^{T})] - f(w^*)
            & \leq \tilde{\mathcal{O}} \bigg(\frac{( \varsigma^2(\sigma)/\step + (1-\tau) \heteG^2 + d_\sigma) \lip^2 \log(1/\delta)}{\mu_\sigma \varepsilon^2 \client^2} \bigg).
        \end{aligned}
    \end{equation*}
    
    \item \textbf{General-Convex:} Set $T = \frac{\varepsilon^2 \client^2}{C_0 \lip^2 \sample \log(1/\delta)}$. Then
    \begin{equation*} 
        \mathbb{E}[f(\bar{w}^T)] - f(w^*) 
        \leq \frac{\sqrt{(\varsigma^2(\sigma)/\step+ 4(1-\tau)\heteG^2 + d_\sigma)D_\sigma \lip^2 \log(1/\delta)}} {\varepsilon \client}.
    \end{equation*}
    
    \item \textbf{Non-Convex:} Set $T = \frac{\varepsilon^2 \client^2}{C_0 \lip^2 \sample \log(1/\delta)}$. Then 
    \begin{equation*}
        \mathbb{E} \| \nabla f(\bar{w}^T) \|_{\textbf{A}_\sigma^{-1}}^2 
        \leq \frac{\sqrt{(\varsigma^2(\sigma)/\step + 4(1-\tau) \heteG^2 + \tilde{d}_\sigma)F_0  \smooth \lip^2 \log(1/\delta)}} {\varepsilon \client}.
    \end{equation*}
\end{itemize}
\end{corollary}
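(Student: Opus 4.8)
The plan is to derive Corollary~\ref{corollary-convergence-dp-uniform} from the three instantiated convergence bounds in Theorems~\ref{thm-strongly-convex}, \ref{thm-convex} and~\ref{thm-non-convex} by eliminating the free noise level $\nu_1$ in favour of the prescribed budget $(\varepsilon,\delta)$ and round count $T$. The $(\varepsilon,\delta)$-DP assertion is immediate: choosing $\nu$ at the floor of Theorem~\ref{Theorem-privacy-guarantee-federated-uniform} for the given $T$ already guarantees privacy, so only the utility estimates require work.

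First I would convert the privacy constraint into an explicit expression for $\nu_1^2$. Since $\nu_\clip=\clip\nu_1$, dividing the bound in Eq.~(\ref{eq-nu-uniform}) by $\clip$ and taking $\nu_1$ at the privacy floor gives $\nu_1^2=\frac{\tau^2}{\varepsilon^2}\cdot\frac{14T}{\lambda}\big(\frac{\log(1/\delta)}{1-\lambda}+\varepsilon\big)$. Using the hypothesis $\log(1/\delta)\geq\varepsilon$ to bound $\frac{\log(1/\delta)}{1-\lambda}+\varepsilon\leq\big(1+\frac{1}{1-\lambda}\big)\log(1/\delta)$, and inserting $\tau=\sample/\client$ together with $C_0=\frac{14}{\lambda}\big(1+\frac{1}{1-\lambda}\big)$, I obtain the key identity $\nu_1^2\leq\frac{C_0\sample^2 T\log(1/\delta)}{\client^2\varepsilon^2}$. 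Because every convergence bound is monotone increasing in $\nu_1^2$, this upper bound may be substituted freely while preserving validity.

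The heart of the argument is the term-by-term substitution. The decisive cancellation is that each $\Er_{dp}$ contribution carries the factor $\nu_1^2/(\sample^2 T)$ --- directly in the strongly-convex case and under the radical in the other two --- so inserting the identity above cancels both $\sample^2$ and $T$ and leaves a round-independent DP term of order $\frac{d_\sigma\lip^2\log(1/\delta)}{\mu_\sigma\varepsilon^2\client^2}$ (strongly convex), or $\frac{1}{\varepsilon\client}\sqrt{d_\sigma D_\sigma\lip^2\log(1/\delta)}$, resp.\ $\frac{1}{\varepsilon\client}\sqrt{\tilde{d}_\sigma F_0\smooth\lip^2\log(1/\delta)}$, in the convex and non-convex cases. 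The same cancellation disposes of the $\nu_1^2$ buried inside $H_\sigma$ and inside the strongly-convex $\Er_{init}$, where the $T$ in $\nu_1^2$ kills the $1/T$ prefactor. For the genuinely non-private pieces I would then insert $T=\frac{\varepsilon^2\client^2}{C_0\lip^2\sample\log(1/\delta)}$ and simplify with $\eta_g\geq\sqrt{\sample}$ (so $1+\sample/\eta_g^2\leq 2$); this turns the $\varsigma^2(\sigma)$ and $(1-\tau)\heteG^2$ terms into exactly the variance and heterogeneity contributions in the stated bounds.

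Finally I would identify the dominant order. In the strongly-convex case all four error terms share the form $\tilde{\mathcal{O}}(\cdot/(\mu_\sigma T))$ up to logarithms, so they simply add and, after substituting $T$, collapse to the single displayed rate (the $\log\sample$ factor from the numerator of $\tilde{\eta}$ being absorbed into $\tilde{\mathcal{O}}$). In the convex and non-convex cases the surviving variance, heterogeneity and DP terms all scale as $T^{-1/2}\sim(\varepsilon\client)^{-1}$, and I would collect them under one radical via $\sqrt{a}+\sqrt{b}+\sqrt{c}\leq\sqrt{3}\sqrt{a+b+c}$, keeping the factor $4$ on $(1-\tau)\heteG^2$ as stated. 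I expect the main obstacle to be precisely this last domination step: one must argue that the faster-decaying $\Er_{init}$ ($\sim 1/T$) and especially the cube-root heterogeneity term ($\sim T^{-2/3}$) are absorbed into the leading constant under the operative regime (large $\client$ with fixed budget, and in particular the \textbf{IID}/high-participation setting where the heterogeneity terms are controlled), and meanwhile to keep the side constraints $T\geq 1/(\mu_\sigma\tilde{\eta}\step)$ and the two admissibility conditions of Theorem~\ref{Theorem-privacy-guarantee-federated-uniform} consistent with the chosen $\nu_1$.
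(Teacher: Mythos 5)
Your proposal follows essentially the same route as the paper's proof: you derive the identity $\nu_1^2\leq C_0\sample^2 T\log(1/\delta)/(\client^2\varepsilon^2)$ from Theorem~\ref{Theorem-privacy-guarantee-federated-uniform} using $\log(1/\delta)\geq\varepsilon$, substitute it into the $H_\sigma$ (resp.\ $\tilde{H}_\sigma$) terms of Theorems~\ref{thm-strongly-convex}--\ref{thm-non-convex} so that $T$ cancels in the DP contribution, then set $T=\varepsilon^2\client^2/(C_0\lip^2\sample\log(1/\delta))$ and invoke $\eta_g\geq\sqrt{\sample}$, exactly as the paper does. The only cosmetic difference is that the paper simply retains the dominant $\sqrt{D_\sigma H_\sigma/T}$ (resp.\ $\sqrt{F_0\tilde H_\sigma\smooth/T}$) term and absorbs the initial-error and cube-root heterogeneity terms into the stated order without the explicit domination argument you flag, which is a fair point of care but not a divergence in method.
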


\bibliographystyle{plain}
\bibliography{references}

% supplymentary materials
\clearpage

\makeatletter
\renewcommand \thesection{S\@arabic\c@section}
\renewcommand\thetable{S\@arabic\c@table}
\renewcommand \thefigure{S\@arabic\c@figure}
\makeatother

\setcounter{section}{0}

\begin{center}
    \large\bfseries Supplymentary Materials
\end{center}

\crefalias{section}{supp}

\section{Proof of Lemma~\ref{lemma:sub_uniform}} \label{sec:proof-lemma-uniform}
\begin{proof}
This proof is inspired by Lemma 3.7 of \cite{wanglingxiao:2019}, while we relax their requirement and get a tighter bound. According to Theorem 9 in \cite{wang2019subsampled}, Gaussian mechanism applied on a subset of size $\sample=\tau\cdot \client$, whose samples are drawn uniformly satisfies $(\alpha, \rho')$-RDP, where 
\begin{equation*}\label{thm9}
\begin{aligned}
    \rho'(\alpha) \leq \frac{1}{\alpha-1} \log\Bigg(1 &+ \tau^2 {\alpha \choose 2} \min \Big\{ 4(e^{\rho(2)}-1), 2e^{\rho(2)} \Big\} + \sum_{j=3}^{\alpha}\tau^{j}{\alpha \choose j} 2 e^{(j-1)\rho(j)} \Bigg)
\end{aligned}
\end{equation*}
where $\rho(j)=j/2\nu^2$. As mentioned in \cite{wang2019subsampled}, the dominant part in the summation on the right hand side arises from the term $\min \big\{ 4(e^{\rho(2)}-1), 2e^{\rho(2)} \big\}$ when $\nu^2$ is relatively large. We will bound this term as a whole instead of bounding it firstly by $4(e^{\rho(2)}-1)$ \cite{wanglingxiao:2019}. For $\nu^2 \geq 0.67$, we have
\begin{equation}\label{eq:sub_wo_replace_1}
\begin{aligned}
    \min & \Big\{ 4(e^{\rho(2)}-1), 2e^{\rho(2)} \Big\} 
    = \min \Big\{ 4(e^{1/\nu^2}-1), 2e^{1/\nu^2} \Big\} \leq 6/\nu^2,
\end{aligned}
\end{equation}
which can be verified numerically as shown in Figure~\ref{fig:lemma_1}.
\begin{figure}[htbp]
    \centering
    \subfigure[]{\includegraphics[width=1.8in]{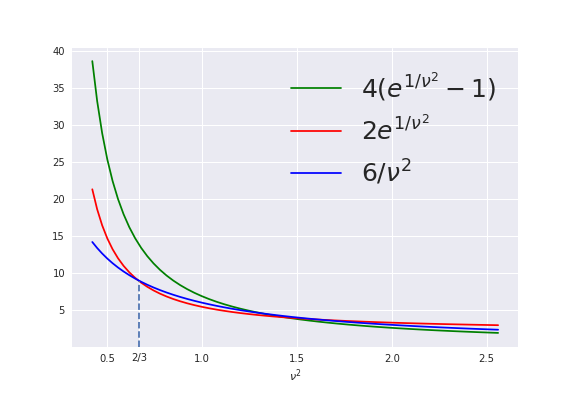}}
    \subfigure[]{\includegraphics[width=1.8in]{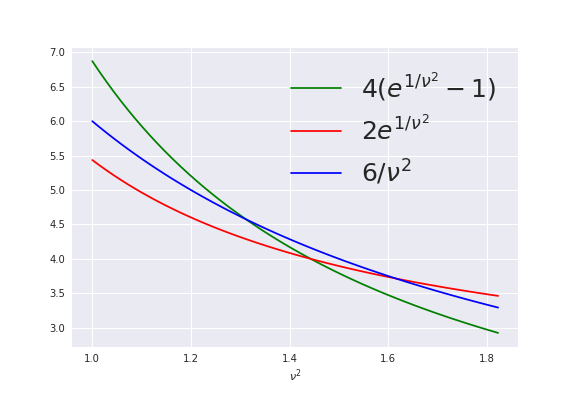}}
    \caption{Numerical comparison of Eq.~(\ref{eq:sub_wo_replace_1}). In (a), we demonstrate the $\min \big\{ 4(e^{1/\nu^2}-1), 2e^{1/\nu^2} \big\} \leq 6/\nu^2$ when $\nu^2\geq \frac{2}{3}$. In (b), we zoom in the range where $\nu \in [1.0, 1.8]$ of (a).}
    \label{fig:lemma_1}
\end{figure}
For the term summing from $j=3$ to $\alpha$, we have
\begin{equation}
\begin{aligned}
    \sum_{j=3}^{\alpha}\tau^{j}{\alpha \choose j} 2 e^{(j-1)\rho(j)} 
    &= \sum_{j=3}^{\alpha}\tau^{j}{\alpha \choose j} 2 e^{\frac{(j-1)j}{2\nu^2}} 
    \leq \sum_{j=3}^{\alpha}\tau^{j}\frac{\alpha^j}{j!} 2 e^{\frac{(j-1)j}{2\nu^2}} \\
    &\leq \sum_{j=3}^{\alpha}\tau^{j}\frac{\alpha^j}{3!} 2 e^{\frac{(\alpha-1)j}{2\nu^2}}
    = \tau^2 \frac{\alpha^2}{3}  \sum_{j=3}^{\alpha}\tau^{j-2} \alpha^{j-2} e^{\frac{(\alpha-1)j}{2\nu^2}}\\
    & \leq \tau^2 {\alpha \choose 2}  \sum_{j=3}^{\alpha}\tau^{j-2} \alpha^{j-2} e^{\frac{(\alpha-1)j}{2\nu^2}}  \\
    & \leq \tau^2 {\alpha \choose 2} \frac{\tau \alpha e^{\frac{3(\alpha-1)}{2\nu^2}}}{1-\tau\alpha e^\frac{\alpha-1}{2\nu^2}} \\
    & \leq \tau^2 {\alpha \choose 2} \frac{\tau \alpha e^{\frac{3(\alpha-1)}{2\nu^2}}}{1-\tau\alpha e^\frac{3(\alpha-1)}{2\nu^2}}
\end{aligned}
\end{equation}
where the first inequality follows from the the fact that ${\alpha \choose j} \leq \frac{\alpha^j}{j!}$, and the last inequality follows from the condition that $\tau\alpha\exp{(\alpha-1)/(2\nu^2)}<1$. In this case, given that 
\begin{equation}\label{lemma_cond1}
    \alpha - 1 \leq \frac{2}{3}\nu^2 \ln \frac{1}{\tau\alpha(1+\nu^2)},
\end{equation}
we have
\begin{equation}\label{eq:sub_wo_replace_2}
    \sum_{j=3}^{\alpha}\tau^{j}{\alpha \choose j} 2 e^{(j-1)\rho(j)} \leq \tau^2 {\alpha \choose 2} \frac{1}{\nu^2} 
\end{equation}
Combining the results in Eq.~(\ref{eq:sub_wo_replace_1}) and Eq.~(\ref{eq:sub_wo_replace_2}), we have
\begin{equation*}
    \begin{aligned}
    \rho'(\alpha) &\leq \frac{1}{\alpha-1} \log \Bigg(1+ {\alpha \choose 2} \frac{6\tau^2}{\nu^2} + {\alpha \choose 2} \frac{\tau^2}{\nu^2} \Bigg) \\
    &\leq \frac{1}{\alpha-1} \tau^2 {\alpha \choose 2} \frac{7}{\nu^2} = 3.5\alpha\tau^2/\nu^2.
    \end{aligned}
\end{equation*}
And condition  $\tau\alpha\exp{(\alpha-1)/(2\nu^2)}<1$ directly follows from Eq.(\ref{lemma_cond1}).
\end{proof}

\section{Proof of Lemma~\ref{lemma:sub_poisson}}\label{sec:proof-lemma-poisson}

\begin{proof}
According to \cite{Mironov2019sampled, zhu2019poission}, Gaussian mechanism applied on a subset where samples are included into the subset with probability ratio $\tau$ independently satifies ($\alpha, \rho'$)-RDP, where
\begin{equation*}
\footnotesize
\begin{aligned}
    \rho'(\alpha) \leq \frac{1}{\alpha-1} \log \Bigg( &(\alpha\tau-\tau+1)(1-\tau)^{\alpha-1}
    + {\alpha \choose 2}(1-\tau)^{\alpha-2} \tau^2 e^{\rho(2)} 
    + \sum_{j=3}^{\alpha} {\alpha \choose j}(1-\tau)^{\alpha-j} \tau^j e^{(j-1)\rho(j)}\Bigg)
\end{aligned}
\end{equation*}
where $\rho(j)=j/2\nu^2$.

We notice that, when $\sigma$ is relatively large, the sum in right-hand side will be dominated by the first two terms. For the first term, we have
\begin{equation}\label{eq:poisson_1}
    (\alpha\tau-\tau+1)(1-\tau)^{\alpha-1} \leq \frac{\alpha\tau-\tau+1}{1+(\alpha-1) \tau} = 1,
\end{equation}
where the first inequality follows from the inequality that
$$(1+x)^n \leq \frac{1}{1-nx}\text{ for } x\in[-1,0], n\in \mathbb{N}.$$
And for the second term, we have 
\begin{equation}\label{eq:poisson_2}
    \tau^2{\alpha \choose 2}(1-\tau)^{\alpha-2}  e^{\frac{1}{\nu^2}} 
    \leq \tau^2{\alpha \choose 2}  e^{\frac{1}{\nu^2}} \leq \tau^2 {\alpha \choose 2} \frac{7}{2\nu^2}
\end{equation}
given that $\nu^2\geq 0.53$. The last inequality is illustrated and verified by numerical comparison in Figure~\ref{fig:lemma_2}.
\begin{figure}[htbp]
    \centering
    \includegraphics[width=2.5in]{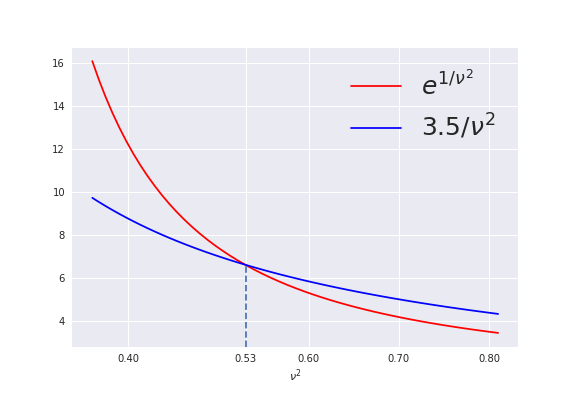}
    \caption{Numerical comparison of Eq.~(\ref{eq:poisson_2}), which demonstrates that $\exp^\frac{1}{\nu^2}\leq \frac{7}{2\nu^2}$ when $\nu^2\geq \frac{5}{9}$.}
    \label{fig:lemma_2}
\end{figure}
And the summation from $j=3$ to $\alpha$ follows Eq.~(\ref{eq:sub_wo_replace_2}) given that
\begin{equation}
    \alpha - 1 \leq \frac{2}{3}\nu^2 \ln \frac{1}{\tau\alpha(1+\nu^2)}.
\end{equation}
Combining Eq. (\ref{eq:poisson_1}), (\ref{eq:poisson_2}) and (\ref{eq:sub_wo_replace_2}), we have 

\begin{equation}
\begin{aligned}
    \rho'(\alpha) 
    &\leq \frac{1}{\alpha-1} \log\bigg( 1+ \tau^2 {\alpha \choose 2} \frac{7}{2\nu^2} + \tau^2 {\alpha \choose 2}\frac{1}{2\nu^2}  \bigg) \leq  \tau^2 {\alpha} \frac{4}{2\nu^2} = 2 \alpha \tau^2/\nu^2.
\end{aligned}
\end{equation}

\end{proof}

\section{Proof of Theorem~\ref{thm-convergenve}}\label{sec:proof-of-convergence}
We firstly provide some useful Lemmas.

\begin{lemma}[Noise reduction of Laplacian smoothing]\label{lemma-ls-noise-reduction}
Consider Gaussian noise $\textbf{n}(\clip) \sim \mathcal{N}(0,\nu^2_{\clip}\textbf{I})$, where $\nu_\clip$ is the noise level scaled by $\clip$, i.e. $\nu_\clip = \clip \nu_1$. We have
\begin{equation*}
    \mathbb{E} \| \textbf{n}(\clip) \|_{\textbf{A}_\sigma^{-1}}^2 \leq \clip^2 \nu^2_1 d_\sigma
\end{equation*}
where $d_\sigma \coloneqq d \zeta_\sigma$ and $\zeta_\sigma\coloneqq \frac{1}{d} \sum_{i=i}^d \frac{1}{1+2\sigma-2\sigma\cos(2\pi i/d)}$.
\end{lemma}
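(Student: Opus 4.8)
The plan is to reduce the weighted-norm expectation to a trace computation and then evaluate that trace through the eigenstructure of the circulant matrix $\textbf{A}_\sigma$. First I would rewrite the quantity as a Gaussian quadratic form: by the definition of the $\textbf{A}_\sigma^{-1}$-norm (Assumption~\ref{assumption-gradient-variance}), we have $\|\textbf{n}(\clip)\|_{\textbf{A}_\sigma^{-1}}^2 = \textbf{n}(\clip)^\top \textbf{A}_\sigma^{-1} \textbf{n}(\clip)$. Since $\textbf{n}(\clip) \sim \mathcal{N}(\textbf{0}, \nu_\clip^2 \textbf{I})$ has zero mean and covariance $\nu_\clip^2 \textbf{I}$, the standard identity $\mathbb{E}[\textbf{n}^\top M \textbf{n}] = \mathrm{tr}(M\,\mathbb{E}[\textbf{n}\textbf{n}^\top])$ for a symmetric matrix $M$ gives $\mathbb{E}\|\textbf{n}(\clip)\|_{\textbf{A}_\sigma^{-1}}^2 = \nu_\clip^2\,\mathrm{tr}(\textbf{A}_\sigma^{-1})$.

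The second step is to compute $\mathrm{tr}(\textbf{A}_\sigma^{-1})$. Recall $\textbf{A}_\sigma = \textbf{I} + \sigma\textbf{L}$, where $\textbf{L}$ is the cycle-graph Laplacian, a circulant matrix diagonalized by the Fourier basis (the same diagonalization exploited in the FFT implementation of Eq.~\eqref{eq:fft} and in Proposition~\ref{prop:ls-risk}). Its eigenvalues are $\lambda_i = 2\big(1 - \cos(2\pi i/d)\big)$, so $\textbf{A}_\sigma$ has eigenvalues $1 + \sigma\lambda_i = 1 + 2\sigma - 2\sigma\cos(2\pi i/d)$, and hence $\textbf{A}_\sigma^{-1}$ has eigenvalues $\Lambda_i = 1/\big(1 + 2\sigma(1-\cos(2\pi i/d))\big)$. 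Since the trace equals the sum of the eigenvalues, $\mathrm{tr}(\textbf{A}_\sigma^{-1}) = \sum_{i=1}^d \Lambda_i = d\zeta_\sigma = d_\sigma$.

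Combining the two steps and substituting $\nu_\clip = \clip\nu_1$ yields $\mathbb{E}\|\textbf{n}(\clip)\|_{\textbf{A}_\sigma^{-1}}^2 = \clip^2\nu_1^2 d_\sigma$, which is in fact an equality and therefore implies the stated inequality. There is no substantial obstacle here: the only point requiring care is correctly identifying the eigenvalues of the cycle-graph Laplacian and matching them to the definition of $\Lambda_i$ and $\zeta_\sigma$ (equivalently, invoking the Fourier diagonalization of $\textbf{A}_\sigma$ already established above). The bound is tight, so writing it as $\leq$ merely anticipates the convergence analysis, where only an upper bound on the injected-noise energy $\Er_{dp}$ is ultimately needed.
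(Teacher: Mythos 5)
Your proposal is correct and follows essentially the same route as the paper's own proof: both reduce $\mathbb{E}\|\textbf{n}(\clip)\|_{\textbf{A}_\sigma^{-1}}^2$ to $\nu_\clip^2\,\mathrm{tr}(\textbf{A}_\sigma^{-1})$ via the trace identity for Gaussian quadratic forms and then evaluate the trace as the sum of the eigenvalues $\Lambda_i$ of the circulant matrix $\textbf{A}_\sigma^{-1}$. Your observation that the bound is in fact an equality matches the paper's computation as well.
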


\begin{proof}[Proof of Lemma~\ref{lemma-ls-noise-reduction}]
The proof is inspired by the proof of Lemma 4 in \cite{wang:2019}. Let the eigenvalue decomposition of $\textbf{A}_\sigma^{-1}$ be $\textbf{A}_\sigma^{-1} = {\bf U \Lambda U^\top}$, where $\bf{\Lambda}$ is a diagonal matrix with $\Lambda_i=\frac{1}{1-2\sigma-2\sigma\cos(2\pi i/d)}$, we have
\begin{equation*}
    \begin{aligned}
        \mathbb{E}\| \textbf{n} (\clip)\|_{\textbf{A}_\sigma^{-1}}^2 
        &= \mathbb{E}[\mbox{Tr}({\bf n^\top U \Lambda U^{\top} n} )] \\
        &= \mbox{Tr}({\bf U \Lambda U^{\top}} \mathbb{E}[{ \bf n  n^\top}])  \\
        &= \nu^2_{\clip} \mbox{Tr}({\bf U \Lambda U^{\top}}) \\
        &= \nu^2_{\clip} \sum_{i=i}^d \frac{1}{1+2\sigma-2\sigma\cos(2\pi i/d)} \\
        &= \clip^2 \nu^2_1 d_\sigma
    \end{aligned}
\end{equation*}
where $\zeta_\sigma = \frac{1}{d}\sum_{i=1}^d \Lambda_i$.
\end{proof}

\begin{lemma}[Noise reduction of Laplacian smoothing ]\label{lemma-ls-noise-reduction-2}
Consider Gaussian noise $\textbf{n}(\clip) \sim \mathcal{N}(0,\nu^2_{\clip}\textbf{I})$, where $\nu_\clip$ is the noise level scaled by $\clip$, i.e. $\nu_\clip = \clip \nu_1$. We have
\begin{equation*}
    \mathbb{E} \| \textbf{A}_\sigma^{-1} \textbf{n}(\clip) \|_2^2 \leq \clip^2 \nu^2_1 \tilde{d}_\sigma
\end{equation*}
where $\tilde{d}_\sigma \coloneqq  d \varphi_\sigma$ and $\varphi_\sigma \coloneqq \frac{1}{d} \sum_{i=i}^d \frac{1}{(1+2\sigma-2\sigma\cos(2\pi i/d))^2}$.
\end{lemma}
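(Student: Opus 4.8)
The plan is to mirror the trace computation in the proof of Lemma~\ref{lemma-ls-noise-reduction} almost verbatim; the only change is that the relevant quadratic form now involves $\textbf{A}_\sigma^{-2}$ rather than $\textbf{A}_\sigma^{-1}$, which simply squares the eigenvalues. First I would rewrite the squared Euclidean norm as a quadratic form. Since $\textbf{A}_\sigma = \textbf{I} + \sigma \textbf{L}$ is a symmetric circulant matrix, so is $\textbf{A}_\sigma^{-1}$, giving $(\textbf{A}_\sigma^{-1})^\top = \textbf{A}_\sigma^{-1}$ and hence
\[
\| \textbf{A}_\sigma^{-1} \textbf{n}(\clip) \|_2^2 = \textbf{n}(\clip)^\top (\textbf{A}_\sigma^{-1})^\top \textbf{A}_\sigma^{-1} \textbf{n}(\clip) = \textbf{n}(\clip)^\top \textbf{A}_\sigma^{-2} \textbf{n}(\clip).
\]
Using the eigendecomposition $\textbf{A}_\sigma^{-1} = \textbf{U} \Lambda \textbf{U}^\top$ from Lemma~\ref{lemma-ls-noise-reduction}, with $\Lambda_i = \frac{1}{1+2\sigma-2\sigma\cos(2\pi i/d)}$, one has $\textbf{A}_\sigma^{-2} = \textbf{U}\Lambda^2\textbf{U}^\top$.

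Next I would take expectations and pass to a trace. Writing $\mathbb{E}\|\textbf{A}_\sigma^{-1}\textbf{n}(\clip)\|_2^2 = \mathbb{E}[\mathrm{Tr}(\textbf{A}_\sigma^{-2}\textbf{n}(\clip)\textbf{n}(\clip)^\top)]$, then invoking the cyclic property of the trace, the orthogonality $\textbf{U}^\top \textbf{U} = \textbf{I}$, and the isotropy $\mathbb{E}[\textbf{n}(\clip)\textbf{n}(\clip)^\top] = \nu_\clip^2 \textbf{I}$, the matrices $\textbf{U}$ collapse and we obtain
\[
\mathbb{E}\|\textbf{A}_\sigma^{-1}\textbf{n}(\clip)\|_2^2 = \nu_\clip^2\, \mathrm{Tr}(\Lambda^2) = \nu_\clip^2 \sum_{i=1}^d \Lambda_i^2.
\]
Substituting $\nu_\clip = \clip \nu_1$ together with the definition $\tilde{d}_\sigma = \sum_{i=1}^d \Lambda_i^2 = d\varphi_\sigma$ then yields the claim, in fact with equality, so the stated inequality follows a fortiori.

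There is no substantive obstacle here: the argument is a direct trace computation identical in structure to the preceding lemma. The only point requiring care is to square the eigenvalues — i.e.\ to track $\textbf{A}_\sigma^{-2}$ rather than $\textbf{A}_\sigma^{-1}$ — so that $\varphi_\sigma$ and $\tilde{d}_\sigma$ appear with the squared denominator $(1+2\sigma-2\sigma\cos(2\pi i/d))^2$, consistent with the definition of $\tilde{d}_\sigma$ used in the non-convex convergence bound of Theorem~\ref{thm-non-convex}.
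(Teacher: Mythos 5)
Your proposal is correct and follows essentially the same route as the paper: the paper likewise writes $\mathbb{E}\|\textbf{A}_\sigma^{-1}\textbf{n}(\clip)\|_2^2 = \mathbb{E}[\mathrm{Tr}({\bf n^\top U \Lambda^2 U^\top n})]$, uses the cyclic property of the trace together with $\mathbb{E}[{\bf nn^\top}]=\nu_\clip^2\textbf{I}$, and arrives at $\nu_\clip^2\sum_{i}\Lambda_i^2=\clip^2\nu_1^2\tilde{d}_\sigma$ with equality. Your additional remark that the symmetry of the circulant matrix $\textbf{A}_\sigma^{-1}$ justifies collapsing $(\textbf{A}_\sigma^{-1})^\top\textbf{A}_\sigma^{-1}$ to $\textbf{A}_\sigma^{-2}$ is a correct detail the paper leaves implicit.
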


\begin{proof}[Proof of Lemma \ref{lemma-ls-noise-reduction-2}] 
The proof is inspired by Lemma 5 in \cite{wang:2019}. Let the eigenvalue decomposition of $\textbf{A}_\sigma^{-1}$ be $\textbf{A}_\sigma^{-1} = {\bf U \Lambda U^\top}$, where $\bf{\Lambda}$ is a diagonal matrix with $\Lambda_i=\frac{1}{1+2\sigma-2\sigma\cos(2\pi i/d)}$, we have
\begin{equation*}
    \begin{aligned}
        \mathbb{E} \| \textbf{A}_\sigma^{-1} \textbf{n}(\clip) \|_2^2 
        &= \mathbb{E} [\mbox{Tr}({\bf n^\top U}{\bf \Lambda}^2 \bf{U^\top n})]\\
        &= \mbox{Tr}({\bf U} {\bf \Lambda}^2 {\bf U^\top} \mathbb{E}[{\bf n n^\top}]) \\
        &= \nu^2_{\clip} \mbox{Tr}({\bf U} {\bf \Lambda}^2 {\bf U^\top})\\
        &= \nu^2_\clip \sum_{i=i}^d \frac{1}{(1+2\sigma-2\sigma\cos(2\pi i/d))^2} \\
        &= \clip^2  \nu^2_1 \tilde{d}_\sigma
    \end{aligned}
\end{equation*}
where $\varphi_\sigma = \frac{1}{d}\sum_{i=1}^d \Lambda_i^2$.
\end{proof}

\begin{lemma}[Bounding the divergence of local parameters]\label{lemma-bound-local-divergence-2} Following convexity, Assumption \ref{assumption-BGD}, \ref{assumption-smooth}, \ref{assumption-gradient-variance} and $\eta_l\leq \frac{\Lambda_{\min}}{8\step \smooth (1+\heteB^2)}$, we have
\begin{equation*}
\begin{aligned}
    \frac{1}{\client} \sum_{j=1}^\client \sum_{i=1}^\step \mathbb{E} \|w^t - w_j^{t,i} \|_2^2 
    &\leq 4\step^3\eta_l^2  \heteG^2 + 4\step^3\eta_l^2  \heteB^2 \| \nabla f(w^t)\|_2^2 + 2\step^2 \frac{\eta_l^2}{\Lambda_{\min}}  \varsigma^2(\sigma)\\
    &\leq 4\step^3\eta_l^2  \heteG^2 +8\step^3\eta_l^2 \heteB^2 \smooth \big( f(w^t) - f(w^*) \big)+ 2\step^2 \frac{\eta_l^2}{\Lambda_{\min}}  \varsigma^2(\sigma) 
\end{aligned}
\end{equation*}
where $w_j^{t,i}$ denote the model of client $j$ in $i$-th iteration of the $t$-th communication round.
\end{lemma}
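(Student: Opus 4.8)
The plan is to treat this as a client-drift (local divergence) bound of the SCAFFOLD type \cite{karimireddy2020scaffold}: I would establish a one-step recursion for the client-averaged squared drift $D_i := \frac{1}{\client}\sum_{j=1}^\client \mathbb{E}\|w^t - w_j^{t,i}\|_2^2$ and then unroll it over the $\step$ local iterations. A useful preliminary observation is that the clip operator is non-expansive, $\|\textsc{Clip}(v,\clip)\|_2 \le \|v\|_2$, so that $\|w_j^{t,i+1}-w^t\|_2 = \|\textsc{Clip}(w_j^{t,i}-\eta_l g_j(w_j^{t,i})-w^t,\clip)\|_2 \le \|(w_j^{t,i}-w^t)-\eta_l g_j(w_j^{t,i})\|_2$. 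Hence the clipped recursion is dominated by the plain SGD recursion, and I can carry out the entire analysis as if no clipping were applied, since it only tightens the bound.

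For the one-step recursion I would expand $\mathbb{E}\|(w_j^{t,i}-w^t)-\eta_l g_j(w_j^{t,i})\|_2^2$, using that the noise $g_j - \nabla f_j(w_j^{t,i})$ is zero-mean given the history, so the cross term vanishes and the expansion becomes $\mathbb{E}\|(w_j^{t,i}-w^t)-\eta_l \nabla f_j(w_j^{t,i})\|_2^2 + \eta_l^2\,\mathbb{E}\|g_j - \nabla f_j(w_j^{t,i})\|_2^2$. Assumption~\ref{assumption-gradient-variance} controls the noise term in the $\textbf{A}_\sigma^{-1}$-norm; converting to the Euclidean norm via $\Lambda_{\min}\|v\|_2^2 \le \|v\|_{\textbf{A}_\sigma^{-1}}^2$ gives $\mathbb{E}\|g_j-\nabla f_j\|_2^2 \le \varsigma_j^2(\sigma)/\Lambda_{\min}$, which is the source of the $1/\Lambda_{\min}$ factor. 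To the deterministic displacement I would apply the Young splitting $\|a-\eta_l b\|_2^2 \le (1+\tfrac{1}{\step-1})\|a\|_2^2 + \step\eta_l^2\|b\|_2^2$ with $b=\nabla f_j(w_j^{t,i})$, then bound $\|\nabla f_j(w_j^{t,i})\|_2^2 \le 2\smooth^2\|w_j^{t,i}-w^t\|_2^2 + 2\|\nabla f_j(w^t)\|_2^2$ by $\smooth$-smoothness. Averaging over $j$ and invoking the $(\heteG,\heteB)$-BGD assumption, $\frac{1}{\client}\sum_j\|\nabla f_j(w^t)\|_2^2 \le \heteG^2 + \heteB^2\|\nabla f(w^t)\|_2^2$, yields a recursion of the form $D_{i+1} \le (1 + \tfrac{1}{\step-1} + 2\step\eta_l^2\smooth^2)D_i + 2\step\eta_l^2(\heteG^2 + \heteB^2\|\nabla f(w^t)\|_2^2) + \eta_l^2\varsigma^2(\sigma)/\Lambda_{\min}$.

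The step-size condition $\eta_l \le \Lambda_{\min}/(8\step\smooth(1+\heteB^2))$ is exactly what is needed to absorb the $2\step\eta_l^2\smooth^2$ term into the $\tfrac{1}{\step-1}$ term, keeping the geometric factor below $1+\tfrac{2}{\step-1}$; unrolling from $D_0=0$ then bounds $(1+\tfrac{2}{\step-1})^{\step}$ by an absolute constant (of order $e^2$), so each $D_i$ is $O(\step^2\eta_l^2(\heteG^2+\heteB^2\|\nabla f(w^t)\|_2^2))$ for the gradient-driven terms and $O(\step\,\eta_l^2\varsigma^2(\sigma)/\Lambda_{\min})$ for the variance term. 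Summing $D_i$ over $i=1,\dots,\step$ contributes one more factor of $\step$, producing the advertised $\step^3$ prefactors on $\heteG^2$ and $\heteB^2\|\nabla f(w^t)\|_2^2$ and the $\step^2$ prefactor on $\varsigma^2(\sigma)/\Lambda_{\min}$, with the constants $4$ and $2$ emerging once the geometric sum is bounded. Finally, the second displayed inequality follows from the standard smoothness consequence $\|\nabla f(w^t)\|_2^2 \le 2\smooth(f(w^t)-f(w^*))$ (valid since $w^*$ is a global minimizer), substituted into the $\heteB^2$ term to give $8\step^3\eta_l^2\heteB^2\smooth(f(w^t)-f(w^*))$.

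I expect the main obstacle to be the bookkeeping of constants through the recursion so that the contraction factor stays bounded and the final coefficients come out cleanly — in particular, choosing the Young splitting parameter as $1/(\step-1)$ and verifying that the step-size constraint is tight enough to swallow both the smoothness cross-term $2\step\eta_l^2\smooth^2$ and the geometric blow-up $(1+\tfrac{2}{\step-1})^{\step}$. The $\textbf{A}_\sigma^{-1}$-to-Euclidean norm conversion and the non-expansiveness of clipping are the only $\sigma$-dependent ingredients, and both are routine; the remainder parallels the standard federated client-drift analysis.
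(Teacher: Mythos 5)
Your proposal follows essentially the same route as the paper's proof of Lemma~\ref{lemma-bound-local-divergence-2}: the same one-step bias--variance expansion with the cross term vanishing, the $\Lambda_{\min}$ norm conversion for the stochastic term, Young splitting with parameter $1/(\step-1)$, smoothness plus the $(\heteG,\heteB)$-BGD assumption, unrolling, and finally $\|\nabla f(w^t)\|_2^2\le 2\smooth\big(f(w^t)-f(w^*)\big)$; your explicit observation that \textsc{Clip} is non-expansive is a point the paper's proof silently omits. The only discrepancy is in the constants: your (correct) expansion factor $1+\tfrac{1}{\step-1}+2\step\eta_l^2\smooth^2>1$ forces a geometric blow-up of order $e^2$ in the unrolling, so you recover the stated bound only up to an absolute constant, whereas the paper arrives at exactly $4$ and $2$ by writing the Young factor as $1-\tfrac{1}{\step-1}$ (a contraction), which appears to be a sign slip rather than a step you could reproduce.
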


\begin{proof}[Proof of Lemma~\ref{lemma-bound-local-divergence-2}]
The proof of is inspired by Lemma 8 in \cite{karimireddy2020scaffold}, while we consider the $\textbf{A}_\sigma^{-1}$ norm. Recall that the local update made on client $j$ is $w_j^{t,i} = w_j^{t, i-1} + \eta_l \nabla f_j(w_j^{t,i-1}, x_j^{i-1})$. When $i=0$, the $w_j^{t,i}$ will just equal $w^t$. For $i\geq 1$, we have: %\YY{As to the 3rd line below, $\Lambda_{\min}$ ($\Lambda_{\max}$) is the smallest/largest eigenvalue of $\textbf{A}_\sigma^{-1}$, so $\Lambda_\min \|y-z\|^2 \leq \| y-z \|_{\textbf{A}_\sigma^{-1}}^2\leq \Lambda_{\max}\| y-z \|_2^2$, } :
\begin{equation*}
    \begin{aligned}
        \mathbb{E} \| w_j^{t,i} - w^t\|_2^2
        &= \mathbb{E} \| w_j^{t,i-1} - w^t - \eta_l g_j (w_j^{t,i-1})\|_{2}^2 \\
        &= \mathbb{E} \| w_j^{t,i-1} - w^t - \eta_l \nabla f_j (w_j^{t,i-1})\|_{2}^2 + \eta_l^2 \mathbb{E} \| g_j(w_j^{t, i-1}) - \nabla f_j (w_j^{t, i-1}) \|_{2}^2 \\
        &\leq \mathbb{E} \| w_j^{t,i-1} - w^t - \eta_l \nabla f_j (w_j^{t,i-1})\|_{2}^2 + \frac{\eta_l^2}{\Lambda_{\min}} \mathbb{E} \| g_j(w_j^{t, i-1}) - \nabla f_j (w_j^{t, i-1}) \|_{\textbf{A}_\sigma^{-1}}^2 \\
        &\leq \mathbb{E} \| w_j^{t,i-1} - w^t - \eta_l \nabla f_j (w_j^{t,i-1}) \|_{2}^2 + \frac{\eta_l^2}{\Lambda_{\min}} \varsigma_j^2(\sigma) \\
        &\leq \bigg(1-\frac{1}{\step-1}\bigg) \mathbb{E} \|w_j^{t,i-1} - w^t  \|_2^2 + \step \eta_l^2 \| \nabla f_j(w_j^{t, i-1}) \|_2^2 + \frac{\eta_l^2}{\Lambda_{\min}} \varsigma_j^2(\sigma) \\
        &\leq \bigg(1-\frac{1}{\step-1}\bigg) \mathbb{E} \|w_j^{t,i-1} - w^t  \|_2^2 + 2\step \eta_l^2 \mathbb{E} \| \nabla f_j(w_j^{t, i-1}) - \nabla f_j(w^t)\|_2^2 \\
        &\quad + 2\step \eta_l^2 \| \nabla f_j(w^t) \|_2^2 + \frac{\eta_l^2}{\Lambda_{\min}} \varsigma_j^2(\sigma) \\
        &\leq \bigg( 1-\frac{1}{\step-1} + 2\step\eta_l^2 \smooth^2 \bigg) \mathbb{E} \|w_j^{t,i-1} - w^t  \|_2^2 + 2\step \eta_l^2 \| \nabla f_j(w^t) \|_2^2 + \frac{\eta_l^2}{\Lambda_{\min}} \varsigma_j^2(\sigma) \\
        &\leq \bigg( 1-\frac{1}{2(\step-1)} \bigg) \mathbb{E} \|w_j^{t,i-1} - w^t  \|_2^2 + 2\step \eta_l^2 \| \nabla f_j(w^t) \|_2^2 + \frac{\eta_l^2}{\Lambda_{\min}} \varsigma_j^2(\sigma)
        \end{aligned}
    \end{equation*}
where the last inequality comes from the assumption that $\eta_l \leq \frac{1}{2\step \smooth}$. Unrolling the recursion above, we have
\begin{equation*}
    \begin{aligned}
        \mathbb{E} \| w_j^{t,i} - w^t\|_2^2 
        &\leq \sum_{k = 0}^{i}  \big( 2\step\eta_l^2 \| \nabla f_j(w^t) \|_2^2  + \frac{\eta_l^2}{\Lambda_{\min}}\varsigma_j^2(\sigma) \big) \bigg( 1 - \frac{1}{2(\step-1)} \bigg)^{k} \\
        &\leq 2\step \big( 2\step\eta_l^2 \| \nabla f_j(w^t) \|_2^2  + \frac{\eta_l^2}{\Lambda_{\min}} \varsigma_j^2(\sigma) \big) 
    \end{aligned}
\end{equation*}
where the last step is due to 
\begin{equation*}
    \begin{aligned}
        \sum_{k=0}^i \bigg( 1 - \frac{1}{2(\step-1)} \bigg)^k = \frac{1- \big( 1 - \frac{1}{2(\step-1)} \big)^{i+1}}{1-\big( 1 - \frac{1}{2(\step-1)} \big)}
        \leq \frac{1}{1-\big( 1 - \frac{1}{2(\step-1)} \big)} \leq 2(\step-1) \leq 2\step.
    \end{aligned}
\end{equation*}
Taking average over $i$ and $j$, and considering Assumption~\ref{assumption-BGD}, we have
\begin{equation*}
    \begin{aligned}
        \frac{1}{\client} \sum_{j=1}^\client \sum_{i=1}^\step \mathbb{E} \|w^t - w_j^{t,i} \|_2^2
        &\leq \frac{1}{\client} \sum_{j=1}^\client 4\step^3 \eta_l^2 \| \nabla f_j(w^t) \|_2^2 + 2\step^2 \frac{\eta_l^2}{\Lambda_{\min}} \varsigma^2(\sigma) \\
       &\leq 4\step^3\eta_l^2  \heteG^2 + 4\step^3\eta_l^2  \heteB^2 \| \nabla f(w^t)\|_2^2 + 2\step^2 \frac{\eta_l^2}{\Lambda_{\min}}  \varsigma^2(\sigma)\\
        &\leq 4\step^3\eta_l^2  \heteG^2 +8\step^3\eta_l^2 \heteB^2 \smooth \big( f(w^t) - f(w^*) \big)+ 2\step^2 \frac{\eta_l^2}{\Lambda_{\min}}  \varsigma^2(\sigma),
    \end{aligned}
\end{equation*}
which complete the proof.
\end{proof}

\begin{lemma}[Perturbed Strongly Convexity]\label{lemma-perturbed-strongly-convexity}
The proof is inspired by Lemma 5 in  \cite{karimireddy2020scaffold} while we discuss it under $\textbf{A}_\sigma$ norm. The following holds for any $\smooth$-smoothness and $\mu$-strongly convex function $h$, and for any $x,y,z$ in the domain of $h$:
\begin{equation*}
    \langle \nabla h(x), z-y \rangle \geq h(z) - h(y) + \frac{\mu_\sigma}{4} \|y-z \|_{\textbf{A}_\sigma}^2 -\smooth \| z-x \|_2^2.
\end{equation*}
where $\mu_\sigma = \mu \Lambda_{\min}$ and $\Lambda_{\min}$ is the smallest eigenvalue of $\textbf{A}_\sigma^{-1}$.
\end{lemma}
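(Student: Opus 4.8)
The plan is to first establish the inequality in the ordinary Euclidean norm—recovering exactly Lemma~5 of \cite{karimireddy2020scaffold}—and then to downgrade the resulting favorable curvature term into the weaker $\textbf{A}_\sigma$-norm using a single eigenvalue comparison. The reason this decomposition works is that every \emph{negative} term appearing on the right-hand side of the target inequality is already measured in $\|\cdot\|_2$, so only the single positive term $\frac{\mu}{4}\|y-z\|_2^2$ needs to be converted, and converting it to $\frac{\mu_\sigma}{4}\|y-z\|_{\textbf{A}_\sigma}^2$ can only weaken the bound, hence preserves its validity.

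First I would split $\langle \nabla h(x), z-y\rangle = \langle \nabla h(x), z-x\rangle + \langle \nabla h(x), x-y\rangle$. Applying $\smooth$-smoothness (Assumption~\ref{assumption-smooth}) to the first piece via $h(z)\leq h(x)+\langle \nabla h(x), z-x\rangle + \frac{\smooth}{2}\|z-x\|_2^2$ gives $\langle \nabla h(x), z-x\rangle \geq h(z)-h(x)-\frac{\smooth}{2}\|z-x\|_2^2$, while applying $\mu$-strong convexity (Assumption~\ref{assumption-mu-strongly-convex}) to the second piece gives $\langle \nabla h(x), x-y\rangle \geq h(x)-h(y)+\frac{\mu}{2}\|x-y\|_2^2$. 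Summing cancels $h(x)$ and yields $\langle \nabla h(x), z-y\rangle \geq h(z)-h(y)+\frac{\mu}{2}\|x-y\|_2^2-\frac{\smooth}{2}\|z-x\|_2^2$. To replace $\|x-y\|_2^2$ by the desired $\|y-z\|_2^2$, I would invoke the elementary bound $\|y-z\|_2^2 \leq 2\|x-y\|_2^2 + 2\|z-x\|_2^2$, rearranged as $\|x-y\|_2^2 \geq \frac{1}{2}\|y-z\|_2^2-\|z-x\|_2^2$, and then absorb the leftover $-\frac{\mu}{2}\|z-x\|_2^2$ into the smoothness term using $\mu\leq\smooth$, so that $-\frac{\smooth}{2}-\frac{\mu}{2}\geq -\smooth$. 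This produces the Euclidean statement $\langle \nabla h(x), z-y\rangle \geq h(z)-h(y)+\frac{\mu}{4}\|y-z\|_2^2-\smooth\|z-x\|_2^2$.

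The final and only nonstandard step is the norm transfer. Since $\textbf{A}_\sigma^{-1}$ has eigenvalues $\Lambda_i$ with smallest $\Lambda_{\min}$, the matrix $\textbf{A}_\sigma$ has largest eigenvalue $1/\Lambda_{\min}$, whence $\|v\|_{\textbf{A}_\sigma}^2 = \langle v, \textbf{A}_\sigma v\rangle \leq \frac{1}{\Lambda_{\min}}\|v\|_2^2$, equivalently $\|v\|_2^2 \geq \Lambda_{\min}\|v\|_{\textbf{A}_\sigma}^2$. Applying this with $v=y-z$ to the favorable term gives $\frac{\mu}{4}\|y-z\|_2^2 \geq \frac{\mu\Lambda_{\min}}{4}\|y-z\|_{\textbf{A}_\sigma}^2 = \frac{\mu_\sigma}{4}\|y-z\|_{\textbf{A}_\sigma}^2$, which closes the argument. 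I do not expect a genuine obstacle here; the one point requiring care is directional bookkeeping—ensuring the eigenvalue inequality is applied only to the positive term (thereby weakening it) and never to the negative $\|z-x\|_2^2$ contributions, and confirming that $\mu\leq\smooth$ is legitimately available from the simultaneous validity of Assumptions~\ref{assumption-smooth} and~\ref{assumption-mu-strongly-convex}.
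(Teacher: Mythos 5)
Your proposal is correct and follows essentially the same route as the paper's proof: the same split of the inner product, the same smoothness and strong-convexity bounds, the same $2\|u\|^2+2\|v\|^2\geq\|u+v\|^2$ trick with $\mu\leq\smooth$ to absorb the leftover term, and the same one-sided eigenvalue comparison $\|v\|_2^2\geq\Lambda_{\min}\|v\|_{\textbf{A}_\sigma}^2$ applied only to the positive curvature term. No gaps.
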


\begin{proof}[Proof of Lemma~\ref{lemma-perturbed-strongly-convexity}]
Given $x$, $y$ and $z$, according to $\smooth$-smoothness and $\mu$-strongly-convexity of $h$, we have
\begin{equation*}
    \begin{aligned}
         \langle\nabla h(x), z-x \rangle &\geq h(z) - h(x) - \frac{\smooth}{2} \| z-x \|^2  \\
         \langle\nabla h(x), x-y \rangle &\geq h(x) - h(y) + \frac{\mu}{2} \| y-x \|^2  \\
    \end{aligned}
\end{equation*}
Further more, we know that $2 \|u\|^2 + 2 \|v\|^2 \geq \|u+v\|^2$. If we let $u = y-x$ and $v=x-z$, we have $2\|y-x \|^2 + 2 \|x-z \|^2 \geq \|y-z\|^2$. In this case, we have 

\begin{equation*}
    \frac{\mu}{2} \| y-x \|^2 \geq \frac{\mu}{4} \|y-z \|^2 - \frac{\mu}{2}\| x-z\|^2 
\end{equation*}
Combining all the inequalities, we have
\begin{equation*}
\begin{aligned}
    \langle \nabla h(x), z-y \rangle 
    &\geq h(z) - h(y) + \frac{\mu}{4} \| y-z \|_2^2 - \frac{ \smooth+\mu}{2} \|z-x \|^2 \\
    &\geq h(z) - h(y) + \frac{\mu}{4} \| y-z \|_2^2 - \smooth \|z-x \|^2 \\
    &\geq h(z) - h(y) + \frac{\mu \Lambda_{\min}}{4} \| y-z \|_{\textbf{A}_\sigma}^2 - \smooth \|z-x \|^2 \\
    &= h(z) - h(y) + \frac{\mu_\sigma}{4} \| y-z \|_{\textbf{A}_\sigma}^2 - \smooth \|z-x \|^2 \\
\end{aligned}
\end{equation*}
where $\mu_\sigma = \mu \Lambda_{\min}$, and $\Lambda_{\min}$ is the smallest eigenvalue of $\textbf{A}_\sigma^{-1}$.
\end{proof}

\begin{lemma}[Subsampling Variance (Lemma B.1 in \cite{lei2017less})]  \label{lemma-subsample-variance}
Given a vector space $\mathcal{X}\in \mathbb{R}^d$ with norm $\|\cdot\|$, we consider a dataset $x_1, x_2, ..., x_N \in \mathcal{X}$. We select a subset $\mathcal{S}$ with size $S$ from the given dataset without replacement. The subsampling mechanism can be uniform subsampling or Poission subsampling. The variance of the subset's average can be bounded by the following upper bound:
\begin{equation*}
    \begin{aligned}
        \mathbb{E} \bigg\| \frac{1}{S} \sum_{j \in \mathcal{S}} x_j - \bar{x} \bigg\|^2 
        &= \frac{1}{S}\bigg(1-\frac{S-1}{N-1} \bigg) \frac{1}{N} \sum_{j=1}^N \|x_j - \bar{x}\|^2,
    \end{aligned}
\end{equation*}
\end{lemma}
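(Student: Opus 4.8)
The plan is to reduce the claim to a direct second-moment computation over the random inclusion indicators. First I would center the data by setting $y_j := x_j - \bar{x}$, so that $\sum_{j=1}^N y_j = 0$, and observe that $\frac{1}{S}\sum_{j\in\mathcal{S}} x_j - \bar{x} = \frac{1}{S}\sum_{j\in\mathcal{S}} y_j$ because averaging $\bar{x}$ over the subset returns $\bar{x}$. Introducing the inclusion indicators $I_j := \mathbf{1}[j\in\mathcal{S}]$, this deviation equals $\frac{1}{S}\sum_{j=1}^N I_j y_j$. For uniform sampling of a fixed-size subset of size $S$ without replacement, symmetry yields the first two moments of the indicators: $\mathbb{E}[I_j] = S/N$ for every $j$ (and hence $\mathbb{E}[I_j^2]=\mathbb{E}[I_j]$ since $I_j^2=I_j$), while $\mathbb{E}[I_j I_k] = S(S-1)/(N(N-1))$ for $j\neq k$, this being the probability that both $j$ and $k$ are selected.

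Next I would expand the squared norm as a double sum of inner products, separating the diagonal from the off-diagonal terms:
\begin{equation*}
    \mathbb{E}\bigg\| \frac{1}{S}\sum_{j\in\mathcal{S}} y_j \bigg\|^2 = \frac{1}{S^2}\bigg( \sum_{j} \mathbb{E}[I_j]\,\|y_j\|^2 + \sum_{j\neq k} \mathbb{E}[I_j I_k]\,\langle y_j, y_k\rangle \bigg).
\end{equation*}
The key simplification exploits the centering constraint: since $0 = \big\|\sum_j y_j\big\|^2 = \sum_j \|y_j\|^2 + \sum_{j\neq k}\langle y_j, y_k\rangle$, the off-diagonal inner products sum to $-\sum_j\|y_j\|^2$. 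Substituting the two moments and this identity, the diagonal and off-diagonal contributions combine into
\begin{equation*}
    \frac{1}{S^2}\bigg( \frac{S}{N} - \frac{S(S-1)}{N(N-1)} \bigg)\sum_{j}\|y_j\|^2 = \frac{1}{S}\bigg(1 - \frac{S-1}{N-1}\bigg)\frac{1}{N}\sum_{j=1}^N\|x_j - \bar{x}\|^2,
\end{equation*}
where the last equality factors $S/N$ out of the bracket. This is exactly the claimed formula.

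The computation is essentially routine; the only care needed is in bookkeeping the cross terms, and the one genuinely useful observation is that the zero-sum property $\sum_j y_j = 0$ collapses the entire off-diagonal contribution into $-\sum_j\|y_j\|^2$, which is what produces the finite-population correction factor $1-\tfrac{S-1}{N-1}$. The only place where the two sampling mechanisms diverge is the Poisson variant, where the subset size is itself random: there I would instead use independence of the $I_j$ (giving $\mathbb{E}[I_j]=\tau$ and $\mathbb{E}[I_jI_k]=\tau^2$ for $j\neq k$) and argue by conditioning on $|\mathcal{S}|$, so that the same second-moment expansion yields the stated bound. I expect this split between the exact equality for uniform subsampling and the conditioning argument for Poisson subsampling to be the only subtle point.
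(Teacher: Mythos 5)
The paper does not prove this lemma; it is quoted verbatim from Lemma B.1 of the cited reference, so there is no in-paper argument to compare against. Your derivation is the standard finite-population variance computation and it is correct for uniform subsampling without replacement: centering to $y_j = x_j - \bar{x}$, using $\mathbb{E}[I_j]=S/N$ and $\mathbb{E}[I_jI_k]=S(S-1)/(N(N-1))$, and collapsing the cross terms via $\sum_{j\neq k}\langle y_j,y_k\rangle = -\sum_j\|y_j\|^2$ gives exactly the stated factor $\frac{1}{S}\bigl(1-\frac{S-1}{N-1}\bigr)$. Two caveats are worth recording. First, your expansion of the squared norm into inner products implicitly assumes $\|\cdot\|$ is induced by an inner product; the lemma is stated for a generic norm, but in every place the paper invokes it the norm is $\|\cdot\|_{\mathbf{A}_\sigma^{-1}}$, which is an inner-product norm, so this is harmless. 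Second, your treatment of the Poisson case is the one place the claim genuinely wobbles: conditioning on $|\mathcal{S}|=s$ does reduce to the uniform formula with $s$ in place of $S$, but averaging $\frac{1}{s}\bigl(1-\frac{s-1}{N-1}\bigr)$ over the random $s$ does not reproduce the displayed equality with a fixed $S$ (and the event $|\mathcal{S}|=0$ must be handled separately). The lemma's own phrasing hedges by saying ``bounded by'' while writing an equality; in the paper the result is only ever applied under uniform subsampling of a fixed-size client set, where your argument is complete, so flagging the Poisson variant as unresolved rather than claiming it is the right call.
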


\subsection{Setup}

Before the proof of the main theorem, we denote the sever update in round $t$ as $\Delta^t$, which can be expressed as:
\begin{equation*}
    \Delta^t = - \frac{\tilde{\eta}}{\sample} \sum_{j=1}^{\sample}\sum_{i=1}^\step \textbf{A}_\sigma^{-1} \nabla g_j(w_j^{t,i}) + \textbf{A}_\sigma^{-1} \eta_g \frac{\textbf{n}(\clip)}{\sample}
    \quad \mbox{and} \quad
    \mathbb{E}[\Delta^t] = -\frac{\tilde{\eta}}{\sample} \sum_{j=1}^\sample \sum_{i=1}^\step \mathbb{E} \textbf{A}_\sigma^{-1} \nabla f_j(w^{t,i}_j)
\end{equation*}
where $\tilde{\eta}=\eta_l \eta_g$, $\textbf{n}(\clip) \sim \mathcal{N}(0,\nu^2_\clip \textbf{I})$, and $\nu_\clip$ is the noise level as a proportional function of the clipping parameter $\clip$. We get $\nu_\clip = \clip \nu_1$ in Theorem \ref{Theorem-privacy-guarantee-federated-uniform} with clipping parameter $\clip$.

Let the eigenvalue decomposition of $\textbf{A}_\sigma^{-1}$ be $\textbf{A}_\sigma^{-1} = {\bf U \Lambda U^\top}$, where ${\bf \Lambda}=\mbox{diag}(\Lambda_i)$ is a diagonal matrix with 
$$\Lambda_i=\frac{1}{1+2\sigma(1-\cos(2\pi i/d))},$$
and denote the smallest eigenvalue of $\textbf{A}_\sigma^{-1}$ by
$$\Lambda_{\min}=\min_{1\leq i\leq d} \frac{1}{1+2\sigma(1-\cos(2\pi i/d))}\geq \frac{1}{1+4\sigma}.$$

\subsection{Proof of Theorem~\ref{thm-strongly-convex}}

\begin{proof}
We start from the total update of a communication round
\begin{equation}\label{strongly-convex-scaffold-single-step}
    \begin{aligned}
        \mathbb{E}\|w^{t+1} - w^* \|_{\textbf{A}_\sigma}^2 
        &= \mathbb{E}\bigg\| w^t + \Delta^t - w^* \bigg\|_{\textbf{A}_\sigma}^2 \\
        & = \mathbb{E}\bigg\|w^t - w^* - \frac{\tilde{\eta}}{\sample} \sum_{j=1}^{\sample}\sum_{i=1}^\step \textbf{A}_\sigma^{-1} g_j(w_j^{t,i}) + \textbf{A}_\sigma^{-1} \eta_g^2 \frac{\textbf{n}(\clip)}{\sample} \bigg \|_{\textbf{A}_\sigma}^2 \\
        &\leq \mathbb{E}\|w^t-w^*\|_{\textbf{A}_\sigma}^2 
        \underbrace{-\frac{2 \tilde{\eta}}{\client} \sum_{j=1}^{\client}\sum_{i=1}^\step \langle w^t - w^*, \nabla f_j(w_j^{t,i}) \rangle}_{A_1}
        + \eta_g^2 \underbrace{\mathbb{E} \bigg\| \frac{\textbf{n}(\clip)}{\sample} \bigg\|_{\textbf{A}_\sigma^{-1}}^2}_{A_2} \\
        & \quad + \underbrace{\tilde{\eta}^2 \mathbb{E}\bigg\| \frac{1}{\sample} \sum_{j=1}^{\sample}\sum_{i=1}^\step  g_j(w_j^{t,i}) \bigg\|_{\textbf{A}_\sigma^{-1}}^2}_{A_3}
    \end{aligned}
\end{equation}

As for $A_1$, we apply Lemma~\ref{lemma-perturbed-strongly-convexity}, we have
\begin{equation*}
    \begin{aligned}
        A_1 &= \frac{2 \tilde{\eta}}{\client} \sum_{j=1}^{\client}\sum_{i=1}^\step \langle w^* - w^t , \nabla f_j(w_j^{t,i}) \rangle \\
        &\leq \frac{2\tilde{\eta}}{\client} \sum_{j=1}^{\client}\sum_{i=1}^\step \bigg( f_j(w^*) - f_j(w^t) + \smooth \| w_j^{t,i} - w^t \|_2^2 - \frac{\mu_\sigma}{4} \| w^t - w^* \|_{\textbf{A}_\sigma}^2 \bigg) \\
        &\leq - 2\tilde{\eta} \step \big( f(w^t) - f(w^*) \big) +  \frac{2\tilde{\eta} \smooth}{\client} \sum_{j=1}^{\client}\sum_{i=1}^\step \| w_j^{t,i} - w^t \|_2^2 - \frac{\tilde{\eta} \mu_\sigma \step}{2} \| w^t - w^* \|_{\textbf{A}_\sigma}^2
    \end{aligned}
\end{equation*}

As for $A_3$, by the equation $\mathbb{E}X^2 = (\mathbb{E}X)^2 + \mathbb{E}(X - \mathbb{E}X)^2$, we have
\begin{equation}\label{eq_A3}
    \begin{aligned}
        A_3 \leq \underbrace{\tilde{\eta}^2 \mathbb{E} \bigg\| \frac{1}{\sample} \sum_{j=1}^{\sample}\sum_{i=1}^\step \nabla f_j(w_j^{t,i}) \bigg\|_{\textbf{A}_\sigma^{-1}}^2}_{B_1} + \underbrace{\tilde{\eta}^2 \mathbb{E} \bigg\| \frac{1}{\sample} \sum_{j=1}^{\sample}\sum_{i=1}^\step \big( g_j(w_j^{t,i}) - \nabla f_j(w_j^{t,i}) \big) \bigg\|_{\textbf{A}_\sigma^{-1}}^2}_{B_2}
    \end{aligned}
\end{equation}
For $B_1$, we have
\begin{equation}\label{eq_B1}
    \begin{aligned}
        B_1 &= \tilde{\eta}^2 \mathbb{E} \bigg\| \frac{1}{\sample} \sum_{j=1}^{\sample}\sum_{i=1}^\step \big(\nabla f(w_j^{t,i}) -\nabla f_j(w^t) + \nabla  f_j(w^t) \big) \bigg\|_{\textbf{A}_\sigma^{-1}}^2 \\
        &\leq 2\tilde{\eta}^2 \mathbb{E} \bigg\| \frac{1}{\sample} \sum_{j=1}^{\sample}\sum_{i=1}^\step \big(\nabla f(w_j^{t,i}) -\nabla f_j(w^t) \big) \bigg\|_{\textbf{A}_\sigma^{-1}}^2 + 2\tilde{\eta}^2 \step^2  \mathbb{E} \bigg\| \frac{1}{\sample} \sum_{j=1}^{\sample} \nabla f_j(w^t) \bigg\|_{\textbf{A}_\sigma^{-1}}^2 \\
        &\leq \frac{2\tilde{\eta}^2 \step}{\client} \sum_{j=1}^\client \sum_{i=1}^\step \mathbb{E} \| \nabla f(w_j^{t,i}) -\nabla f(w^t) \|_2^2 \\
        & \qquad \qquad + 2\tilde{\eta}^2 \step^2  \mathbb{E} \bigg\| \frac{1}{\sample} \sum_{j=1}^{\sample} \nabla f_j(w^t) - \nabla f(w^t) + \nabla f(w^t) \bigg\|_{\textbf{A}_\sigma^{-1}}^2\\
        &\leq \frac{2\tilde{\eta}^2 \step \smooth^2}{\client} \sum_{j=1}^\client \sum_{i=1}^\step \mathbb{E}\| w_j^{t,i} - w^t \|_2^2 + 2\tilde{\eta}^2 \step^2 \|\nabla f(w^t)\|_2^2 \\
        & \qquad \qquad + 2\tilde{\eta}^2 \step^2  \mathbb{E} \bigg\| \frac{1}{\sample} \sum_{j=1}^{\sample} \nabla f_j(w^t) - \nabla f(w^t) \bigg\|_{\textbf{A}_\sigma^{-1}}^2 \\
        &\leq \frac{2\tilde{\eta} \step \smooth^2}{\client} \sum_{j=1}^\client \sum_{i=1}^\step \mathbb{E}\| w_j^{t,i} - w^t \|_2^2 + 2\tilde{\eta}^2 \step^2 \|\nabla f(w^t)\|_2^2 \\
        & \qquad \qquad + 4\tilde{\eta}^2\step^2 \bigg(1-\frac{\sample}{\client}\bigg) \frac{1}{\sample}  \bigg(\heteG^2 + \heteB^2 \| \nabla f(w^t) \|_2^2\bigg) \\
        &\leq \frac{2\tilde{\eta}^2 \step \smooth^2}{\client} \sum_{j=1}^\client \sum_{i=1}^\step \mathbb{E}\| w_j^{t,i} - w^t \|_2^2 + 8\tilde{\eta}^2 \step^2 (1+\heteB^2) \smooth \big( f(w^t) - f(w^*) \big) \\
        & \qquad \qquad + \frac{4\tilde{\eta}^2 \step^2}{\sample} \bigg( 1-\frac{\sample}{\client} \bigg)  \heteG^2 \\
    \end{aligned}
\end{equation}
where the second last inequality comes from Assumption~\ref{assumption-BGD} and Lemma~\ref{lemma-subsample-variance}. When we apply Lemma \ref{lemma-subsample-variance}, we set $x_j = f_j(w^t)$ and $\bar{x}= \frac{1}{\client}\sum_{j=1}^\client f_j(w^t)$. What's more, we use the inequality $1-\frac{\sample-1}{\client-1} \leq 2\big(1 -\frac{\sample}{\client}\big)$. And the last inequality comes from Assumption~\ref{assumption-smooth}. As for $B_2$, we apply Assumption \ref{assumption-gradient-variance}, then we have
\begin{equation}\label{eq_B2}
    \begin{aligned}
        B_2 &\leq 
        \tilde{\eta}^2 \mathbb{E} \bigg\| \frac{1}{\sample} \sum_{j=1}^{\sample}\sum_{i=1}^\step \big( g_j(w_j^{t,i}) - \nabla f_j(w_j^{t,i}) \big) \bigg\|_{\textbf{A}_\sigma^{-1}}^2 \\
        &\leq \tilde{\eta}^2\mathbb{E} \frac{1}{\sample^2} \sum_{j=1}^{\sample}\sum_{i=1}^\step  \| g_j(w_j^{t,i}) - \nabla f_j(w_j^{t,i}) \|_{\textbf{A}_\sigma^{-1}}^2 \\
        &\leq \frac{\tilde{\eta}^2 \step}{\sample}\varsigma^2(\sigma)
    \end{aligned}
\end{equation}
And by Lemma~\ref{lemma-ls-noise-reduction} and the assumption that $\eta_l \leq \frac{\Lambda_{\min}}{8\step \smooth \eta_g (1+\heteB^2)}$, whence $\tilde{\eta} =\eta_l\eta_g \step \smooth \leq\frac{1}{2}$, we have
\begin{equation*}
    \begin{aligned}
        \mathbb{E}\|w^{t+1} - w^*\|_{\textbf{A}_\sigma}^2  
        &\leq \bigg(1 - \frac{\tilde{\eta} \mu_\sigma \step}{2} \bigg)\mathbb{E}\|w^{t} - w^* \|_{\textbf{A}_\sigma}^2  -  \tilde{\eta} \step \big( f(w^t) - f(w^*) \big)   \\
        &\quad + \underbrace{3 \tilde{\eta} \smooth  \frac{1}{\client} \sum_{j=1}^\client \sum_{i=1}^\step \mathbb{E}\| w_j^{t,i} - w^t \|_{2}^2}_C  + \frac{\tilde{\eta}^2\step}{\sample}\varsigma^2(\sigma) + \eta_g^2 \frac{\eta_l^2 \step^2 \lip^2 \nu^2_1 d_\sigma}{\sample^2} \\
        &\quad + \frac{4\tilde{\eta}^2 \step^2}{\sample} \bigg( 1-\frac{\sample}{\client} \bigg)  \heteG^2
    \end{aligned}
\end{equation*}
According to Lemma~\ref{lemma-bound-local-divergence-2}, and the assumption $\eta_g \geq 1$ ($\tilde{\eta} = \eta_g \eta_l \geq \eta_l$) and $\eta_l \leq \frac{\Lambda_{\min}}{8\step \smooth \eta_g (1+\heteB^2)}$, for $C$, we have
\begin{equation}
    \begin{aligned}
        3 \tilde{\eta} \smooth  \frac{1}{\client} \sum_{j=1}^\client \sum_{i=1}^\step \mathbb{E}\| w_j^{t,i} - w^t \|_{2}^2 
        &\leq  12\tilde{\eta} \eta_l^2 \step^3 \smooth \heteG^2 + \frac{6 \tilde{\eta} \eta_l^2 \smooth \step^2}{\Lambda_{\min}} \varsigma^2(\sigma) \\
        & \qquad \qquad + 24\tilde{\eta} \eta_l^2 \step^3 \heteB^2 \smooth^2 \big( f(w^r) - f(w^*) \big) \\
        &\leq  12\tilde{\eta}^3 \step^3 \smooth \heteG^2 + \frac{\tilde{\eta}^2 \step}{\eta_g^2} \varsigma^2(\sigma) + \frac{1}{2}\tilde{\eta} \step \big( f(w^t) - f(w^*) \big)
    \end{aligned}
\end{equation}
In this case,
\begin{equation*}
    \begin{aligned}
        \mathbb{E}\|w^{t+1} - w^* \|_{\textbf{A}_\sigma}^2  
        &\leq \bigg(1 - \frac{\tilde{\eta} \mu_\sigma \step}{2} \bigg)\mathbb{E}\|w^{t} - w^* \|_{\textbf{A}_\sigma}^2  - \frac{1}{2}\tilde{\eta} \step \big( f(w^t) - f(w^*) \big) \\
        &\quad + \tilde{\eta}^2\step^2 \bigg( \big( \frac{1}{\eta_g^2 \step} + \frac{1}{\sample \step} \big) \varsigma^2(\sigma) + \frac{ \lip^2 \nu^2_1 d_\sigma}{\sample^2} + \frac{4}{\sample} \bigg( 1 - \frac{\sample}{\client}\bigg) \heteG^2 + 12 \tilde{\eta} \step \smooth \heteG^2  \bigg) \\
    \end{aligned}
\end{equation*}
Reorganizing the terms, we have
\begin{equation}\label{eq-convex-intermedia-step}
    \begin{aligned}
        f(w^t) - f(w^*) &\leq \frac{2}{\tilde{\eta} \step} \bigg(1 - \frac{\tilde{\eta} \mu_\sigma \step}{2} \bigg) \mathbb{E}\|w^{t} - w^* \|_{\textbf{A}_\sigma}^2 - \frac{2}{\tilde{\eta} \step} \mathbb{E}\|w^{t+1} - w^* \|_{\textbf{A}_\sigma}^2 \\
        &\quad + 2\tilde{\eta} \step \bigg( \big( \frac{1}{\step \eta_g^2} + \frac{1}{\sample \step} \big) \varsigma^2(\sigma) + \frac{ \lip^2 \nu^2_1 d_\sigma}{\sample^2} + \frac{4}{\sample} \bigg( 1 - \frac{\sample}{\client}\bigg) \heteG^2 + 12 \tilde{\eta} \step \smooth \heteG^2  \bigg)
    \end{aligned}
\end{equation}
By averaging using weights $a_t = q^{-t}$ where $q \triangleq \big(1-\frac{\mu_\sigma \tilde{\eta} \step}{2}\big)$, we have
\begin{equation*}
    \begin{aligned}
      \sum_{t=0}^T a_t \big( \mathbb{E} [f(w^{t})] &- f(w^*) \big) 
      \leq \frac{2}{\tilde{\eta} \step}  \|w^{0} - w^* \|_{\textbf{A}_\sigma}^2 \\
      &+ \sum_{t=0}^T 2 a_t \tilde{\eta} \step \bigg( \big( \frac{1}{\eta_g^2\step} + \frac{1}{\sample \step} \big) \varsigma^2(\sigma) + \frac{ \lip^2 \nu^2_1 d_\sigma}{\sample^2} + \frac{4}{\sample} \bigg( 1 - \frac{\sample}{\client}\bigg) \heteG^2 + 12 \tilde{\eta} \step \smooth \heteG^2  \bigg)
    \end{aligned}
\end{equation*}
Diving by $\sum_{t=0}^T a_t$, we have
\begin{equation*}
    \begin{aligned}
       \mathbb{E} [f(\bar{w}^{T})] - f(w^*)  &\leq \frac{2}{\tilde{\eta} \step \sum_{t=0}^T a_t} \|w^{0} - w^* \|_{\textbf{A}_\sigma}^2\\
       & + 2 \tilde{\eta} \step \bigg( \big( \frac{1}{\eta_g^2 \step} + \frac{1}{\sample \step} \big) \varsigma^2(\sigma) + \frac{ \lip^2 \nu^2_1 d_\sigma}{\sample^2} + \frac{4}{\sample} \bigg( 1 - \frac{\sample}{\client}\bigg) \heteG^2 + 12 \tilde{\eta} \step \smooth \heteG^2  \bigg)
    \end{aligned}
\end{equation*}
Now we consider $\tilde{\eta} \step \sum_{t=0}^T a_t = \tilde{\eta} \step \sum_{t=0}^T q^{-t}$. Since we assume that $T\geq \frac{1}{\mu_\sigma \tilde{\eta} \step}$, we have
\begin{equation*}
    \begin{aligned}
        \tilde{\eta} \step \sum_{t=0}^T a_t = \tilde{\eta} \step q^{-T} \sum_{t=0}^{T} q^t = \tilde{\eta} \step q^{-T} \frac{1-(1-\mu_\sigma \tilde{\eta} \step/2)^{T+1}}{\mu_\sigma \tilde{\eta} \step/2} \geq \frac{2q^{-T}}{3\mu_\sigma}
    \end{aligned}
\end{equation*}
So
\begin{equation*}
    \frac{1}{\tilde{\eta} \step \sum_{t=0}^T a_t} \leq \frac{3}{2} \mu_\sigma q^T =  \frac{3}{2}\mu_\sigma \bigg(1-\frac{\mu_\sigma \tilde{\eta} \step}{2}\bigg)^T \leq \frac{3}{2}\mu_\sigma \exp(-\mu_\sigma \tilde{\eta} \step T /2)
\end{equation*}
In this case, 
\begin{equation}\label{eq-strongly-convex-full-expression}
    \begin{aligned}
       \mathbb{E} [f(\bar{w}^{T})] - f(w^*)  
       &\leq 3\mu_\sigma \exp(-\mu_\sigma \tilde{\eta} \step T/2) \underbrace{\|w^{0} - w^* \|_{\textbf{A}_\sigma}^2}_{D_\sigma}  + 24 \tilde{\eta}^2 \step^2 \smooth \heteG^2  \\
       & \quad + 2 \tilde{\eta} \step \bigg(\underbrace{  \big( \frac{1}{\eta_g^2 \step} + \frac{1}{\sample \step} \big) \varsigma^2(\sigma) + \frac{4}{\sample} \bigg( 1 - \frac{\sample}{\client}\bigg) \heteG^2 + \frac{ \lip^2 \nu^2_1 d_\sigma}{\sample^2}  }_{H_\sigma}\bigg)
    \end{aligned}
\end{equation}
Here we discuss two situations:
\begin{itemize}
    \item If $\frac{1}{\mu_\sigma \step T} \leq \frac{\Lambda_{\min}}{8\step \smooth (1+\heteB^2)} \leq \frac{2\log(\max(e,\mu_\sigma^2 T D_\sigma/H_\sigma)))}{\mu_\sigma \step T} $, we choose $\tilde{\eta}=\frac{\Lambda_{\min}}{8 \step \smooth (1+\heteB^2)}$, then
    \begin{equation*}
    \begin{aligned}
        \mathbb{E} [f(\bar{w}^{T})] - f(w^*) &\leq 3\mu_\sigma D_\sigma \exp\bigg( -\frac{\mu_\sigma \Lambda_{\min} T}{16\smooth(1+\heteB^2)} \bigg)  + \tilde{O}\left(\frac{\smooth \heteG^2}{\mu_\sigma^2 T^2}\right) +\tilde{O}\bigg( \frac{H_\sigma}{\mu_\sigma T}\bigg) \\
        &\leq \tilde{O} \bigg( \frac{H_\sigma}{\mu_\sigma T} \bigg).
    \end{aligned}
    \end{equation*}
    where we use $\tilde{\eta} \step \leq \frac{2\log(\max(e,\mu_\sigma^2 RD_\sigma/H_\sigma)))}{\mu_\sigma T} = \tilde{O}(1/(\mu_\sigma T))$
    \item If $\frac{1}{\mu_\sigma \step T} \leq \frac{2\log(\max(e,\mu_\sigma^2 T D_\sigma/H_\sigma)))}{\mu_\sigma \step T} \leq \frac{1}{8\step \smooth (1+\heteB^2)} $, we choose $\tilde{\eta}=\frac{2\log(\max(e,\mu_\sigma^2 T D_\sigma/H_\sigma)))}{\mu_\sigma \step T}$, then
    \begin{equation*}
    \small
    \begin{aligned}
        \mathbb{E} [f(\bar{w}^{T})] - f(w^*) &\leq 3\mu_\sigma D_\sigma \exp(-\log(\max(e,\mu_\sigma^2 T D_\sigma/H_\sigma))) + \tilde{O}\bigg(\frac{H_\sigma}{\mu_\sigma T} \bigg) + \tilde{O}\left(\frac{\smooth \heteG^2}{\mu_\sigma^2 T^2} \right) \\ 
        &\leq  \tilde{O}\bigg(\frac{H_\sigma}{\mu_\sigma T} \bigg)
    \end{aligned}
    \end{equation*}
\end{itemize}
In this case, we choose $\tilde{\eta} = \min \bigg\{ \frac{2\log(\max(e,\mu_\sigma^2 T D_\sigma/H_\sigma)))}{\mu_\sigma \step T} , \frac{\Lambda_{\min}}{8\step \smooth (1+\heteB^2)} \bigg\}$ \big($T\geq \frac{8\smooth(1+\heteB^2)}{\mu_\sigma \Lambda_{\min}}$\big). Then 
\begin{equation*}
\begin{aligned}
     \mathbb{E} [f(\bar{w}^{T})] - f(w^*) 
     & \leq \tilde{\mathcal{O}} \bigg(\frac{1}{\mu_\sigma T} \bigg(\big( \frac{1}{\eta_g^2 \step} + \frac{1}{\sample \step} \big) \varsigma^2(\sigma) + \frac{4}{\sample} \bigg( 1 - \frac{\sample}{\client}\bigg) \heteG^2 +\frac{ \lip^2 \nu^2_1 d_\sigma}{\sample^2} \bigg) \bigg)
\end{aligned}
\end{equation*}
which completes the proof.
\end{proof}

\subsection{Proof of Theorem~\ref{thm-convex}}

\begin{proof}
We start from Eq. (\ref{eq-convex-intermedia-step}) and set $\mu=0$ for general-convex case:
\begin{equation}
    \begin{aligned}
        f(w^t) - f(w^*) &\leq \frac{2}{\tilde{\eta} \step}\mathbb{E}\|w^{t} - w^* \|_{\textbf{A}_\sigma}^2 - \frac{2}{\tilde{\eta} \step} \mathbb{E}\|w^{t+1} - w^* \|_{\textbf{A}_\sigma}^2 \\
        &\quad + 2\tilde{\eta} \step \bigg( \big( \frac{1}{\eta_g^2 \step} + \frac{1}{\sample \step} \big) \varsigma^2(\sigma) + \frac{ \lip^2 \nu^2_1 d_\sigma}{\sample^2} + \frac{4}{\sample} \bigg( 1 - \frac{\sample}{\client}\bigg) \heteG^2 + 12 \tilde{\eta} \step \smooth \heteG^2  \bigg)
    \end{aligned}
\end{equation}
Summing the above inequality from $t=0$ to $t=T$ and taking average , we have
\begin{equation}\label{eq-convex-full-expression}
\begin{aligned}
    \mathbb{E}\big[f(\bar{w}^t)] - f(w^*)
    &\leq  \frac{2}{ \tilde{\eta}  T \step} \underbrace{\|w^0-w^* \|_{\textbf{A}_\sigma}^2}_{D_\sigma} + \underbrace{24 \smooth \heteG^2}_{Q_1} \tilde{\eta}^2 \step^2\\
    &\quad + 2 \tilde{\eta} \step \bigg( \underbrace{ \big( \frac{1}{\eta_g^2 \step} + \frac{1}{\sample \step} \big) \varsigma^2(\sigma) + \frac{ \lip^2 \nu^2_1 d_\sigma}{\sample^2} + \frac{4}{\sample} \bigg( 1 - \frac{\sample}{\client}\bigg) \heteG^2 }_{H_\sigma} \bigg) 
\end{aligned}
\end{equation}

We set $\tilde{\eta}_{\max}=\frac{\Lambda_{\min}}{8\step \smooth(1+\heteB^2)}$. Here we consider two situations:
\begin{itemize}
    \item If $\tilde{\eta}_{\max}^2 \leq \frac{D_\sigma}{H_\sigma T \step^2}$ and $\tilde{\eta}_{\max}^3 \leq \frac{D_\sigma}{Q_1 T\step^3}$, we set $\tilde{\eta}=\tilde{\eta}_{\max}$, then
    \begin{equation*}
       \mathbb{E}[f(\bar{w}^T)] - f(w^*) 
        \leq \frac{16 \smooth(1+\heteB^2)D_\sigma}{T\Lambda_{\min}} + 2\sqrt{\frac{D_\sigma H_\sigma}{T}} +  \sqrt[3]{\frac{24 D_\sigma^2 \heteG^2 \smooth}{T^2}}
    \end{equation*}
    \item If $\tilde{\eta}_{\max}^2 \geq \frac{D_\sigma}{H_\sigma T \step^2}$ or $\tilde{\eta}_{\max}^3 \geq \frac{D_\sigma}{Q_1 T \step^3}$, we set $\tilde{\eta}=\min \bigg\{\sqrt{\frac{D_\sigma}{H_\sigma T\step^2}}, \sqrt[3]{\frac{D_\sigma}{Q_1 T \step^3}} \bigg\}$, then
    \begin{equation*}
        \mathbb{E}[f(\bar{w}^T)] - f(w^*) 
        \leq 4\sqrt{\frac{D_\sigma H_\sigma}{T}} + \sqrt[3]{\frac{24D_\sigma^2 \heteG^2 \smooth}{T^2}}
    \end{equation*}
\end{itemize}
In conclusion, if we set $\tilde{\eta}=\min \bigg\{\sqrt{\frac{D_\sigma}{H_\sigma T\step^2}}, \sqrt[3]{\frac{D_\sigma}{Q_1 T\step^3}}, \frac{1}{8\step \smooth(1+\heteB^2)} \bigg\}$, we have
\begin{equation*}
    \mathbb{E}[f(\bar{w}^T)] - f(w^*) 
    \leq \frac{16 \smooth(1+\heteB^2)(1+4\sigma)D_\sigma}{T} + 4\sqrt{\frac{D_\sigma H_\sigma}{T}} +  \sqrt[3]{\frac{24 D_\sigma^2 \heteG^2 \smooth}{T^2}}
\end{equation*}
which completes the proof.
\end{proof}

\subsection{Proof of Theorem \ref{thm-non-convex}}

\begin{proof}[Proof of Theorem~\ref{thm-non-convex}]
According to the smoothness of $f$, we have
\begin{equation*}
    \begin{aligned}
        f(w^{t+1}) 
        &\leq f(w^{t}) + \langle \nabla f(w^t), w^{t+1} - w^t \rangle + \frac{\smooth}{2} \| w^{t+1} - w^t \|_2^2 \\
        &\leq f(w^{t}) - \bigg\langle \nabla f(w^t),  \frac{\tilde{\eta}}{\sample} \sum_{j=1}^{\sample}\sum_{i=1}^\step \textbf{A}_\sigma^{-1} g_j(w_j^{t,i})\bigg\rangle + \bigg\langle \nabla f(w^t),  \textbf{A}_\sigma^{-1} \eta_g \frac{\textbf{n}(\clip)}{\sample} \bigg\rangle \\
        &\quad+  \frac{\smooth}{2} \bigg( \bigg\|  \frac{\tilde{\eta}}{\sample} \sum_{j=1}^{\sample}\sum_{i=1}^\step  \textbf{A}_\sigma^{-1} g_j(w_j^{t,i})  \bigg\|_2^2 + \eta_g^2 \frac{\|\textbf{A}_\sigma^{-1} \textbf{n}(\clip) \|_2^2}{\sample^2} \\
        & \quad + 2\bigg\langle \frac{\tilde{\eta}}{\sample} \sum_{j=1}^{\sample}\sum_{i=1}^\step \textbf{A}_\sigma^{-1} g_j(w_j^{t,i}) ,\textbf{A}_\sigma^{-1} \eta_g \frac{\textbf{n}(\clip)}{\sample} \bigg\rangle\bigg)
    \end{aligned}
\end{equation*}
By taking the expectation on both sides, we have
\begin{equation*}
    \begin{aligned}
        \mathbb{E}[f(w^{t+1})] 
        &\leq f(w^t) \underbrace{-\frac{\tilde{\eta}}{\client} \sum_{j=1}^{\client}\sum_{i=1}^\step  \langle \nabla f(w^t), \nabla f_j(w_j^{t,i})\rangle_{\textbf{A}_\sigma^{-1}}}_{A_1} + \underbrace{\frac{\smooth \eta_g^2}{2\sample^2} \mathbb{E}\|\textbf{A}_\sigma^{-1} \textbf{n}(\clip) \|_2^2}_{A_2} \\
        &\quad + \underbrace{\frac{\tilde{\eta}^2\smooth}{2} \mathbb{E} \bigg\|  \frac{1}{\sample} \sum_{j=1}^{\sample}\sum_{i=1}^\step  g_j(w_j^{t,i})  \bigg\|_{\textbf{A}_\sigma^{-1}}^2}_{A_3}
    \end{aligned}
\end{equation*}
According to Eq~(\ref{eq_A3}), (\ref{eq_B1}) and (\ref{eq_B2}), we have
\begin{equation}
\begin{aligned}
    A_3 &\leq  2\tilde{\eta}^2 \step^2 \smooth(1+\heteB^2) \|\nabla f(w^t)\|_2^2 + 2\tilde{\eta}^2\step^2 \smooth \bigg(1-\frac{\sample}{\client}\bigg) \frac{1}{\sample} \heteG^2 \\
    & \quad + \frac{\tilde{\eta}^2 \step \smooth^3}{\client} \sum_{j=1}^\client \sum_{i=1}^\step \mathbb{E}\| w_j^{t,i} - w^t \|_{\textbf{A}_\sigma^{-1}}^2 + \frac{\tilde{\eta}^2 \step \smooth}{2\sample}\varsigma^2(\sigma)
\end{aligned}
\end{equation}

As for $A_1$, we apply the inequality $ab=\frac{1}{2}[(b-a)^2 -a^2] - \frac{1}{2}b^2 \geq \frac{1}{2} [a^2 - (b-a)^2]$, we have 
\begin{equation*}
    \begin{aligned}
        A_1 &\leq -\frac{\tilde{\eta}}{2 \client} \sum_{j=1}^\client \sum_{i=1}^\step \bigg[ \|\nabla f(w^t) \|_{\textbf{A}_\sigma^{-1}}^2 -\| \nabla f_j(w_j^{t,i}) - \nabla f(w^t) \|_{\textbf{A}_\sigma^{-1}}^2 \bigg] \\
        &\leq - \frac{\tilde{\eta}\step}{2} \| \nabla f(w^t) \|_{\textbf{A}_\sigma^{-1}}^2 + \frac{\tilde{\eta} \smooth^2}{2\client}\sum_{j=1}^\client \sum_{i=1}^\step \|w_j^{t,i} - w^t \|_{2}^2
    \end{aligned}
\end{equation*}

According to Lemma~\ref{lemma-ls-noise-reduction-2}, we have
\begin{equation*}
    A_2 \leq \frac{\tilde{\eta}^2 \step^2 \lip^2  \smooth \nu^2_1 \tilde{d}_\sigma}{2\sample^2}
\end{equation*}

Summing up $A_1$, $A_2$ and $A_3$, and using the inequality $\eta_l \leq \frac{\Lambda_{\min}}{8\step \smooth \eta_g (\heteB^2+1)}$, where $\frac{1}{1+4\sigma}\leq \Lambda_{\min} \leq 1$ is the smallest eigenvalue of $\textbf{A}_\sigma^{-1}$, we have
\begin{equation*}
    \begin{aligned}
        \mathbb{E}[f(w^{t+1})] 
        &\leq f(w^t) - \tilde{\eta} \step \bigg(\frac{1}{2} -\frac{ 2\tilde{\eta} \step \smooth (1+\heteB^2)}{\Lambda_{\min}} \bigg) \| \nabla f(w^t) \|_{\textbf{A}_\sigma^{-1}}^2 + \frac{\tilde{\eta}^2 \step \smooth}{2\sample}\varsigma^2(\sigma) \\
        & \quad + \frac{\tilde{\eta}^2 \step^2 \lip^2  \smooth \nu^2_1 \tilde{d}_\sigma}{2\sample^2}  + 2\tilde{\eta}^2\step^2 \smooth \bigg(1-\frac{\sample}{\client}\bigg) \frac{1}{\sample} \heteG^2 \\
        &\quad + \tilde{\eta} \smooth^2\bigg( \frac{1}{2} + \tilde{\eta} \step \smooth  \bigg) \frac{1}{\client} \sum_{j=1}^\client \sum_{i=1}^\step \|w_j^{t,i} - w^t \|_{2}^2\\
        &\leq f(w^t) - \frac{\tilde{\eta} \step}{4}  \| \nabla f(w^t) \|_{\textbf{A}_\sigma^{-1}}^2 + \frac{\tilde{\eta}^2 \step \smooth}{2\sample}\varsigma^2(\sigma) + \frac{\tilde{\eta}^2 \step^2 \lip^2  \smooth \nu^2_1 \tilde{d}_\sigma}{2\sample^2}\\
        &\quad + 2\tilde{\eta}^2\step^2 \smooth \bigg(1-\frac{\sample}{\client}\bigg) \frac{1}{\sample} \heteG^2 + \tilde{\eta} \smooth^2 \frac{1}{\client} \sum_{j=1}^\client \sum_{i=1}^\step \|w_j^{t,i} - w^t \|_{2}^2
    \end{aligned}
\end{equation*}
And according to Lemma~\ref{lemma-bound-local-divergence-2} and the assumption that $\eta_g\geq 1$ ($\tilde{\eta} = \eta_l \eta_g \geq \eta_l$) and $\eta_l \leq \frac{\Lambda_{\min}}{8\step \smooth \eta_g (\heteB^2+1)}$, we have
\begin{equation}
    \begin{aligned}
        \tilde{\eta} \smooth^2 \frac{1}{\client}\sum_{j=1}^\client \sum_{i=1}^\step \|w_j^{t,i} - w^t \|_{2}^2 
        &\leq 4 \step^3\tilde{\eta} \eta_l^2 \smooth^2 \heteG^2 +  4 \step^3\tilde{\eta} \eta_l^2 \smooth^2 \heteB^2 \|\nabla f(w^t)\|_2^2 + 2\step^2 \tilde{\eta} \frac{\eta_l^2}{\Lambda_{\min}} \smooth^2 \varsigma^2(\sigma)\\
        &\leq 4 \step^3\tilde{\eta}^3 \smooth^2 \heteG^2 +  \frac{4 \step^3\tilde{\eta}^3 \smooth^2 \heteB^2}{ \Lambda_{\min}} \|\nabla f(w^t)\|_{\textbf{A}_\sigma^{-1}}^2 + 2\step^2 \tilde{\eta} \frac{\eta_l^2}{\Lambda_{\min}}\smooth^2 \varsigma^2(\sigma)\\
        &\leq 4 \step^3\tilde{\eta}^3 \smooth^2 \heteG^2 +  \frac{\tilde{\eta} \step}{16} \|\nabla f(w^t)\|_{\textbf{A}_\sigma^{-1}}^2 + \frac{\tilde{\eta}^2 \step \smooth}{2\eta_g^2} \varsigma^2(\sigma)
    \end{aligned}
\end{equation}

In this case, we have
\begin{equation*}
    \begin{aligned}
        \mathbb{E}[f(w^{t+1})] 
        &\leq  f(w^t) - \frac{\tilde{\eta} \step}{8}  \| \nabla f(w^t) \|_{\textbf{A}_\sigma^{-1}}^2 + \frac{\tilde{\eta}^2 \step^2 \lip^2  \smooth \nu^2_1 \tilde{d}_\sigma}{2\sample^2} + 2\tilde{\eta}^2\step^2 \smooth \bigg(1-\frac{\sample}{\client}\bigg) \frac{1}{\sample} \heteG^2 \\
        &\quad  + \frac{\tilde{\eta}^2 \step^2 \smooth}{2} \bigg( \frac{1}{\step} + \frac{1}{\sample \step} \bigg)\varsigma^2(\sigma) + 4 \step^3\tilde{\eta}^3 \smooth^2 \heteG^2 \\
        &\leq  f(w^t) - \frac{\tilde{\eta} \step}{8}  \| \nabla f(w^t) \|_{\textbf{A}_\sigma^{-1}}^2 + 4 \step^3\tilde{\eta}^3 \smooth^2 \heteG^2 \\
        &\quad + \frac{\tilde{\eta}^2 \step^2 \smooth}{2} \bigg( \big(\frac{1}{\eta_g^2 \step} + \frac{1}{\sample \step}\big) \varsigma^2(\sigma) + \frac{4}{\sample} \bigg(1-\frac{\sample}{\client} \bigg) \heteG^2 + \frac{ \lip^2  \nu^2_1 \tilde{d}_\sigma}{\sample^2}\bigg)
    \end{aligned}
\end{equation*}

Summing the above inequality from $t=0$ to $t=T$ and taking average , we have
\begin{equation}\label{eq-non-convex-full-expression}
    \begin{aligned}
        \mathbb{E}\| \nabla f(\bar{w}^t) \|_{\textbf{A}_\sigma^{-1}}^2 &\leq \frac{8}{\tilde{\eta} \step T} \big(\underbrace{f(w^0) - f(w^*)}_{F_0}\big)  + \underbrace{32  \smooth^2 \heteG^2}_{Q_2} \step^2 \tilde{\eta}^2 \\
        &\quad  + 4 \tilde{\eta} \step \smooth \bigg( \underbrace{\big(\frac{1}{\eta_g^2 \step} + \frac{1}{\sample \step}\big) \varsigma^2(\sigma) + \frac{4}{\sample} \bigg(1-\frac{\sample}{\client} \bigg) \heteG^2 + \frac{ \lip^2  \nu^2_1 \tilde{d}_\sigma}{\sample^2}}_{\tilde{H}_\sigma}\bigg)
    \end{aligned}
\end{equation}

We set $\eta_{\max} = \frac{\Lambda_{\min}}{8\step \smooth(1+\heteB^2)}$. Here we consider two situations:
\begin{itemize}
    \item If $\tilde{\eta}_{\max}^2 \leq \frac{F_0}{\tilde{H}_\sigma \smooth T \step^2}$ and $\tilde{\eta}_{\max}^3\leq \frac{F_0}{Q_2 T \step^3}$, we set $\tilde{\eta}=\tilde{\eta}_{\max}$, then
    \begin{equation*}
        \mathbb{E} \| \nabla f(\bar{w}^T) \|_{\textbf{A}_\sigma^{-1}}^2 
        \leq \frac{64 \smooth (1+\heteB^2)(1+4\sigma) F_0}{T} + 4 \sqrt{\frac{F_0 \tilde{H}_\sigma \smooth}{T}} + \sqrt[3]{\frac{32 F_0^2 \heteG^2 \smooth^2}{T^2}}
    \end{equation*}
    \item If $\tilde{\eta}_{\max}^2 \geq \frac{F_0}{H_\sigma \smooth T \step^2}$ or $\tilde{\eta}_{\max}^3\geq \frac{F_0}{Q_2 T \step^3}$, we set $\tilde{\eta} = \min\{\sqrt{\frac{F_0}{\tilde{H}_\sigma \smooth T \step^2}}, \sqrt[3]{\frac{F_0}{Q_2 T \step^3}}\}$, then
    \begin{equation*}
        \mathbb{E} \| \nabla f(\bar{w}^T) \|_{\textbf{A}_\sigma^{-1}}^2 
        \leq  12 \sqrt{\frac{F_0 \tilde{H}_\sigma \smooth}{T}} + \sqrt[3]{\frac{32 F_0^2 G^2 \smooth^2}{T^2}}
    \end{equation*}
\end{itemize}

In conclusion, if we set $\tilde{\eta} = \min\{ \sqrt{\frac{F_0}{\tilde{H}_\sigma T \smooth \step^2}}, \sqrt[3]{\frac{F_0}{Q_2 T \step^3}}, \frac{\Lambda_{\min}}{8 \step \smooth (1+\heteB^2)}\}$, we have
\begin{equation*}
    \mathbb{E} \| \nabla f(\bar{w}^T) \|_{\textbf{A}_\sigma^{-1}}^2 
    \leq \frac{64 \smooth (1+\heteB^2)(1+4\sigma) F_0}{T} + 12 \sqrt{\frac{F_0 \tilde{H}_\sigma \smooth}{T}} + \sqrt[3]{\frac{32 F_0^2 \heteG^2 \smooth^2}{T^2}},
\end{equation*}
which completes the proof.
\end{proof}

\subsection{Proof of Corollary~\ref{corollary-convergence-dp-uniform}}
\begin{proof}[Proof of Corollary~\ref{corollary-convergence-dp-uniform}]
We assume $\log(1/\delta) \geq \varepsilon$, then applying Theorem \ref{Theorem-privacy-guarantee-federated-uniform} with $\nu_1$, and $\tau=\frac{\sample}{\client}$, we have
\begin{align*}\label{eq-plug-in-nu}
       \frac{\lip^2 d_\sigma }{\sample^2} \cdot \nu^2_1
        &= \frac{\lip^2 d_\sigma }{\sample^2} \cdot \frac{\tau^2 }{\varepsilon^2} \frac{14 T}{\lambda} \bigg( \frac{\log(1/\delta)}{1-\lambda} + \varepsilon \bigg), && \text{by Theorem \ref{Theorem-privacy-guarantee-federated-uniform}} \\  
        &\leq \frac{\lip^2 d_\sigma }{\sample^2} \cdot \frac{\tau^2 }{\varepsilon^2} \frac{14T}{\lambda} \bigg( \frac{\log(1/\delta)}{1-\lambda} + \log(1/\delta) \bigg), && \text{by assumption $\log(1/\delta) \geq \varepsilon$} \\
        &\leq \frac{\lip^2 d_\sigma }{\sample^2} \cdot \frac{\tau^2 T\log(1/\delta)}{\varepsilon^2} \cdot \underbrace{\frac{14}{\lambda} \bigg( \frac{1}{1-\lambda} + 1 \bigg)}_{C_0} \\
        & = \frac{C_0\lip^2 d_\sigma T \log(1/\delta)}{\varepsilon^2\client^2} 
\end{align*}

\begin{itemize}
%[noitemsep,topsep=0pt, leftmargin=*]
    \item \textbf{$\mu$ Strongly-Convex:} 
    following the proof of Theorem~\ref{thm-strongly-convex}, we have
    \begin{equation*}
        \begin{aligned}
            \mathbb{E} [f(\bar{w}^{T})] - f(w^*) 
            &\leq \tilde{\mathcal{O}} \bigg(\frac{1}{\mu_\sigma T} \bigg(\frac{(1+\frac{\sample}{\eta_g^2})\varsigma^2(\sigma)}{\sample \step}  + \frac{4 ( 1 - \tau) }{\sample} \heteG^2 + \frac{d_\sigma C_0 \lip^2 T \log(1/\delta)}{\varepsilon^2 \client^2} \bigg) \bigg) \\
        \end{aligned}
    \end{equation*}
    If we select $T=\frac{\varepsilon^2 \client^2}{C_0 \lip^2 \sample \log(1/\delta)}$ with $T \geq \frac{1}{\mu_\sigma \tilde{\eta} \step}$ where {\footnotesize{$\tilde{\eta} = \min \bigg\{ \frac{2\log(\max(e,\mu_\sigma^2 T D_\sigma/H_\sigma)))}{\mu_\sigma \step T} , \frac{\Lambda_{\min}}{8\step \smooth (1+\heteB^2)} \bigg\}$}}, and assume $\eta_g \geq \sqrt{\sample}$, then we have
    \begin{equation*}
        \begin{aligned}
            \mathbb{E} [f(\bar{w}^{T})] - f(w^*) 
            & \leq \tilde{\mathcal{O}} \bigg(\frac{ (d_\sigma + \varsigma^2(\sigma)/\step + (1-\tau) \heteG^2) \lip^2 \log(1/\delta)}{\mu_\sigma \varepsilon^2 \client^2} \bigg).
        \end{aligned}
    \end{equation*}
    \item \textbf{General-Convex:} Following Theorem~\ref{thm-convex}, we have 
    \begin{equation*}
    \begin{aligned}
        \mathbb{E}[f(\bar{w}^T)] - f(w^*) 
        &\leq \mathcal{O} \bigg( \sqrt{\frac{D_\sigma H_\sigma}{T}} \bigg) 
        = \mathcal{O} \bigg( \sqrt{\frac{D_\sigma}{T} \bigg( \frac{(1+\frac{\sample}{\eta_g^2}) \varsigma^2(\sigma)}{\sample \step} + \frac{4}{\sample} ( 1 - \tau ) \heteG^2  + \frac{ \lip^2 \nu^2_1 d_\sigma}{\sample^2}\bigg)} \\
        &= \mathcal{O} \bigg( \sqrt{\frac{(1+\frac{\sample}{\eta_g^2}) \varsigma^2(\sigma)}{\sample \step} + \frac{4}{\sample} ( 1 - \tau ) \heteG^2 +  \frac{d_\sigma C_0 \lip^2 T \log(1/\delta)}{\varepsilon^2 \client^2} \bigg)} \\
    \end{aligned}
    \end{equation*}
    If we set $T = \frac{\varepsilon^2 \client^2}{C_0 \lip^2 \sample \log(1/\delta)}$ and assume that $\eta_g\geq \sqrt{\sample}$, then we have
    \begin{equation*} 
        \mathbb{E}[f(\bar{w}^T)] - f(w^*) 
        \leq \frac{\sqrt{(\varsigma^2(\sigma)/\step+ 4(1-\tau)\heteG^2 + d_\sigma)D_\sigma \lip^2 \log(1/\delta)}} {\varepsilon \client}.
    \end{equation*}

    \item \textbf{Non-Convex:} Following Theorem~\ref{thm-non-convex}, we have 
    \begin{equation*}
    \begin{aligned}
        \mathbb{E} \| \nabla f(\bar{w}^T) \|_{\textbf{A}_\sigma^{-1}}^2 
        &\leq \mathcal{O} \bigg( \sqrt{\frac{F_0 H_\sigma \smooth}{T}} \bigg)
        = \mathcal{O} \bigg( \sqrt{\frac{F_0 \smooth}{T} \bigg(\frac{(1+\frac{\sample}{\eta_g^2})\varsigma^2(\sigma)}{\sample \step} \! +\! \frac{4}{\sample} (1-\tau) \heteG^2 \!+\! \frac{ \lip^2  \nu^2_1 \tilde{d}_\sigma}{\sample^2} \bigg)} \bigg) \\
        &= \mathcal{O} \bigg( \sqrt{\frac{F_0 \smooth}{T} \bigg(\frac{(1+\frac{\sample}{\eta_g^2})\varsigma^2(\sigma)}{\sample \step}  + \frac{4}{\sample} (1-\tau) \heteG^2 + \frac{\tilde{d}_\sigma C_0 \lip^2 T \log(1/\delta)}{\varepsilon^2 \client^2}  \bigg)} \bigg)
    \end{aligned}
    \end{equation*}
    If we set $T = \frac{\varepsilon^2 \client^2}{C_0 \lip^2 \sample \log(1/\delta)}$ and assume that $\eta_g \geq \sqrt{\sample}$, then we have
    \begin{equation*}
        \mathbb{E} \| \nabla f(\bar{w}^T) \|_{\textbf{A}_\sigma^{-1}}^2
        \leq \frac{\sqrt{(\varsigma^2(\sigma)/\step + 4(1-\tau) \heteG^2 + \tilde{d}_\sigma)F_0  \smooth \lip^2 \log(1/\delta)}} {\varepsilon \client}.
    \end{equation*}
\end{itemize}
which completes the proof.
\end{proof}

\section{Comparison of Theorem~\ref{Theorem-privacy-guarantee-federated-uniform} and \ref{Theorem-privacy-guarantee-federated-poisson} with Moment Accountants}\label{sec:comparison}

In this section, we show that our bounds provided in Theorem~\ref{Theorem-privacy-guarantee-federated-uniform} and Theorem~\ref{Theorem-privacy-guarantee-federated-poisson} are tight by comparing them with the numerical moment accountants in \cite{wang2019subsampled}
and \cite{zhu2019poission, Mironov2019sampled} respectively. We consider two settings where $T=30$, $\tau=0.05$, $\client=500$ and $T=200$, $\tau=0.05$, $\client=2000$, which we uses for the experiment over MNIST and SVHN respectively. Firstly, one thing we need to notice is that, in Theorem~\ref{Theorem-privacy-guarantee-federated-uniform} and \ref{Theorem-privacy-guarantee-federated-poisson}, noise level $\nu$ is in nearly inverse proportion to $\varepsilon$ when $\varepsilon$ is small, where the first term under the square root in Eq. \ref{eq-nu-uniform} and Eq. \ref{eq-nu-poisson} become the major term. However, when $\varepsilon$ is relatively large, like settings we use in our experiment, this relation changes. The slopes of the curves lie in $[-1,-1/2]$, at similar rates. Note that when we apply Theorem~\ref{Theorem-privacy-guarantee-federated-uniform} and \ref{Theorem-privacy-guarantee-federated-poisson}, we will firstly select $\lambda$ satisfying all the proposed conditions by line search. Then we choose the one minimizing the lower bound of $\nu$.

Figure~\ref{comparison_thm_accountant} a) and b) compare Theorem~\ref{Theorem-privacy-guarantee-federated-uniform} with accountant in \cite{wang2019subsampled} under the two settings above. We can notice that the two curves are almost parallel when $\varepsilon$ is relatively large. For Theorem~\ref{Theorem-privacy-guarantee-federated-poisson} and accountant in \cite{zhu2019poission, Mironov2019sampled}, (Figure~\ref{comparison_thm_accountant} d) and e)), we can notice that their curves are getting close when $\varepsilon$ becomes large. If we further choose a large $T=1000$ ($\tau=0.05$ and $\client=2000$), these observations are more obvious, which is shown in Figure~\ref{comparison_thm_accountant} c) and f).
It demonstrate that our closed-form bounds are tight and only differ from numerical moment accountant by a constant.

\begin{figure}[htbp]
    \centering
    \subfigure[]{\includegraphics[width=1.7in]{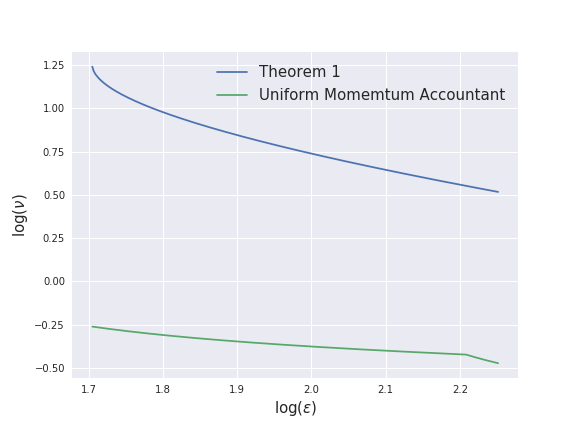}}
    \subfigure[]{\includegraphics[width=1.7in]{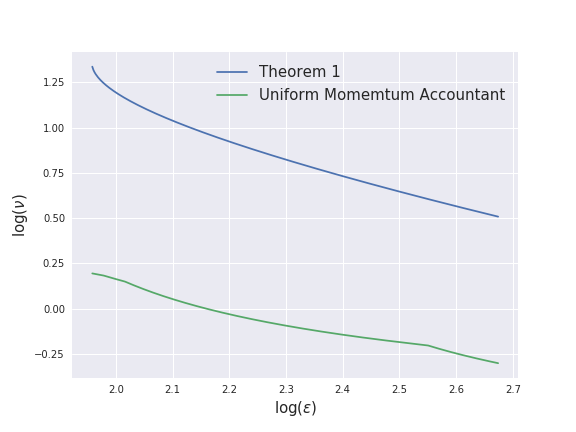}}
    %\vspace{-1em}
    \subfigure[]{\includegraphics[width=1.7in]{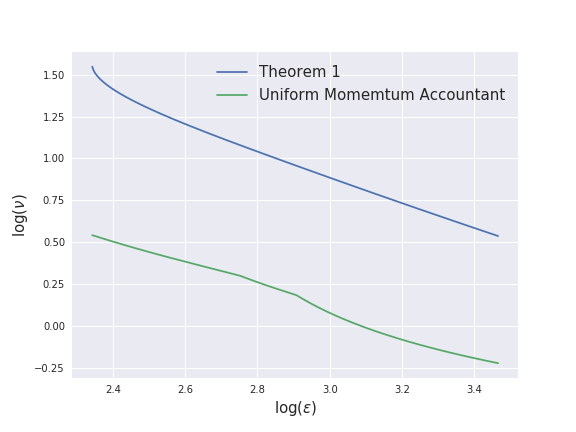}} \\
    \subfigure[]{\includegraphics[width=1.7in]{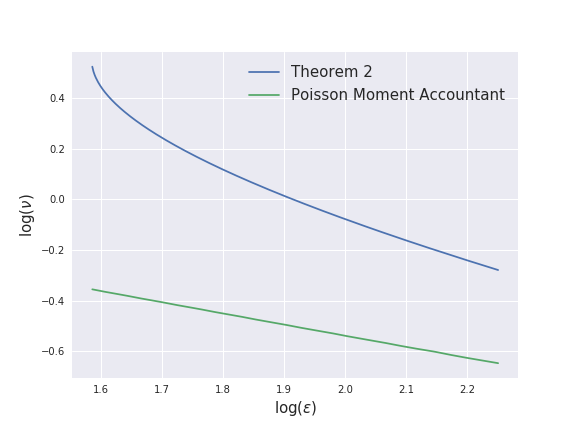}}
    %\vspace{-1em}
    \subfigure[]{\includegraphics[width=1.7in]{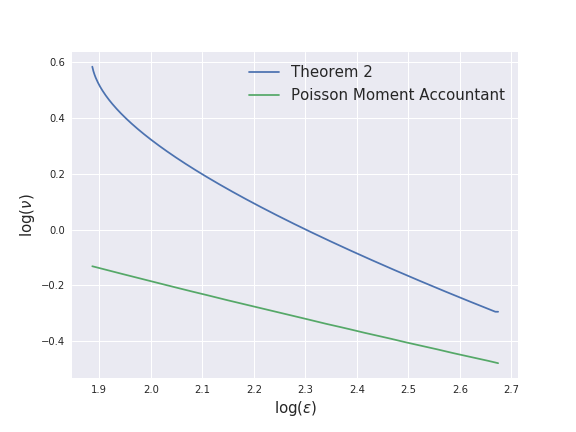}}
    \subfigure[]{\includegraphics[width=1.6in]{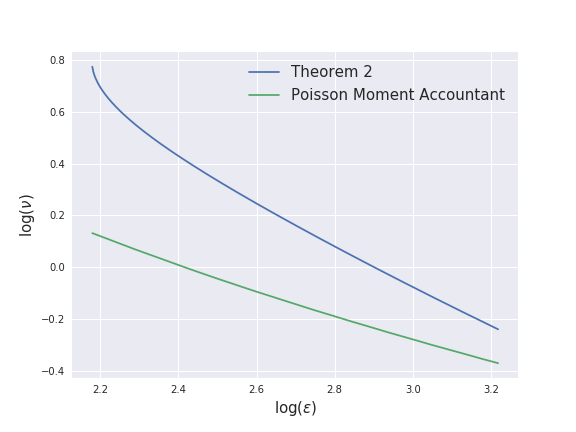}}
    \caption{Comparison of Theorem~\ref{Theorem-privacy-guarantee-federated-uniform} and \ref{Theorem-privacy-guarantee-federated-poisson} with uniform \cite{wang2019subsampled} and Poisson moment accountants \cite{zhu2019poission, Mironov2019sampled}. We can observe that Theorem~\ref{Theorem-privacy-guarantee-federated-uniform} is nearly parallel to moment accountant \cite{wang2019subsampled} and and \ref{Theorem-privacy-guarantee-federated-poisson} is close to moment accountant in \cite{zhu2019poission, Mironov2019sampled} when $\varepsilon$ becomes large. For example, in c), the slopes of least square regression for Theorem~\ref{Theorem-privacy-guarantee-federated-uniform} and moment accountant are -0.80 and -0.73 respectively, while the intercepts are 3.31 and 2.29. It shows that the theoretical bound are of similar rates as numerical moment accountants and differ from moment accountants only by a small constant $e^{3.31-2.29}=$.}
    \label{comparison_thm_accountant}
\end{figure}

\section{Training Curves}\label{sec:curves}

In Figure~\ref{train_mnist_loss_acc}, \ref{train_svhn_loss_acc}, \ref{train_lstm_loss_acc} and \ref{mi_svhn_loss_acc}, we show the training curves of experiment in Section 4, including training loss, validation loss, training accuracy and validation accuracy.

\begin{figure}[htbp]
    \centering
    \subfigure[]{\includegraphics[width=1.5in]{imgs/logistic/loss_train_uniform_mnist.png}}
    \subfigure[]{\includegraphics[width=1.5in]{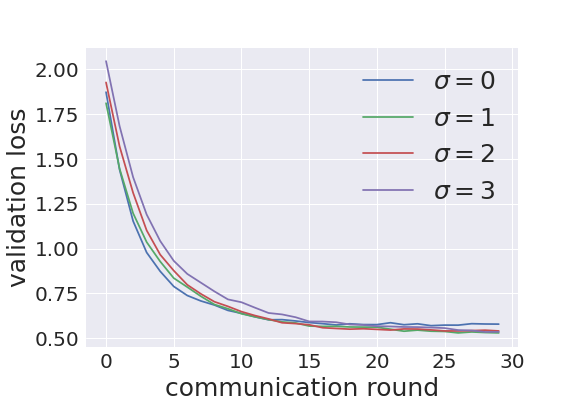}} 
    %\vspace{-1em}
    \subfigure[]{\includegraphics[width=1.5in]{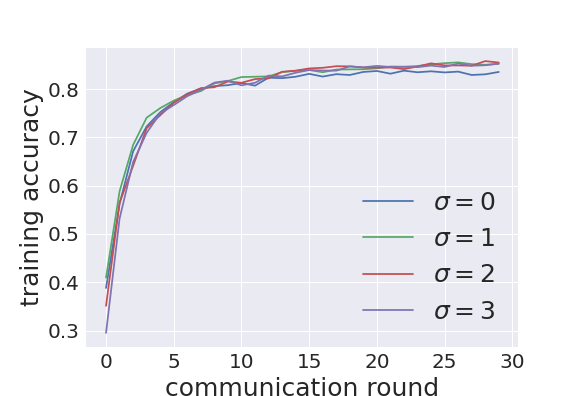}}
    \subfigure[]{\includegraphics[width=1.5in]{imgs/logistic/acc_val_uniform_mnist.png}} \\
    %\vspace{-1em}
    \subfigure[]{\includegraphics[width=1.5in]{imgs/logistic/loss_train_poisson_mnist.png}}
    \subfigure[]{\includegraphics[width=1.5in]{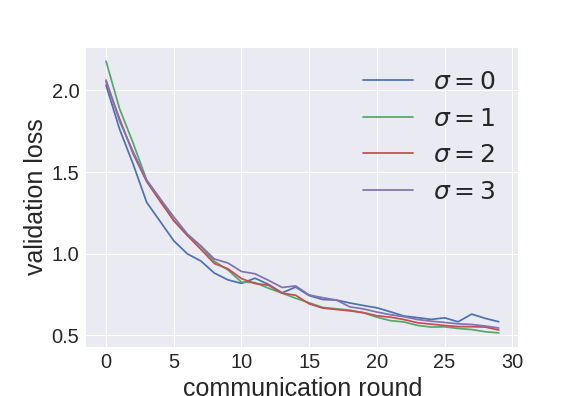}} 
    %\vspace{-1em}
    \subfigure[]{\includegraphics[width=1.5in]{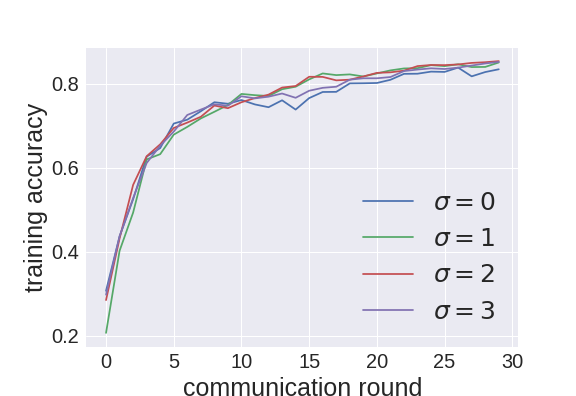}}
    \subfigure[]{\includegraphics[width=1.5in]{imgs/logistic/acc_val_poisson_mnist.png}}
    \caption{Training curves of logistic regression on MNIST (Section \ref{sec:logistic}) with DP-Fed ($\sigma=0$), DP-Fed-LS ($\sigma=1, 2, 3$). (a),(b),(c) and (d): training loss, validation loss, training accuracy and  validation accuracy with uniform subsampling and $(7,1/1000^{1.1)}$-DP.  (c), (d), (e) and (f): training loss, validation loss, training accuracy and validation accuracy with Poisson subsampling and $(7,1/500^{1.1})$-DP.}
    \label{train_mnist_loss_acc}
\end{figure}

\begin{figure}[htbp]
    \centering
    \subfigure[]{\includegraphics[width=1.5in]{imgs/svhn/loss_train_uniform_svhn_eps_5_23.png}}
    \subfigure[]{\includegraphics[width=1.5in]{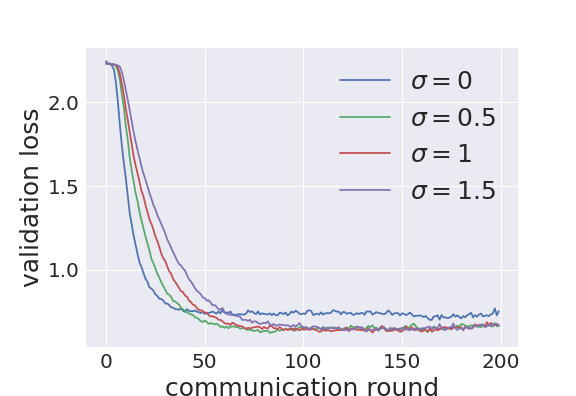}} 
    %\vspace{-1em}
    \subfigure[]{\includegraphics[width=1.5in]{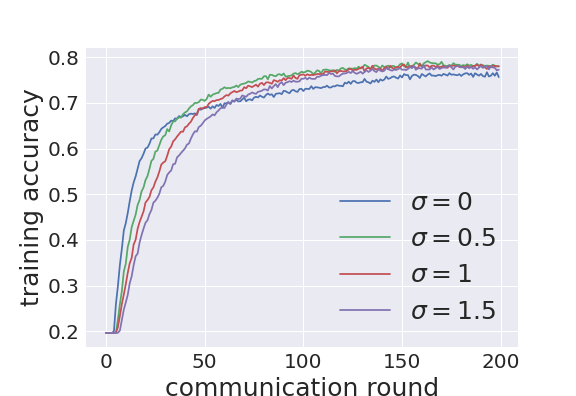}}
    \subfigure[]{\includegraphics[width=1.5in]{imgs/svhn/acc_val_uniform_svhn_eps_5_23.png}}  \\
    %\vspace{-1em}
    \subfigure[]{\includegraphics[width=1.5in]{imgs/svhn/loss_train_poisson_svhn_eps_5_07.png}}
    \subfigure[]{\includegraphics[width=1.5in]{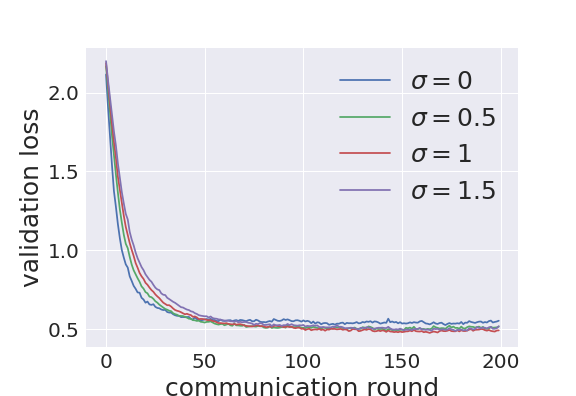}} 
    %\vspace{-1em}
    \subfigure[]{\includegraphics[width=1.5in]{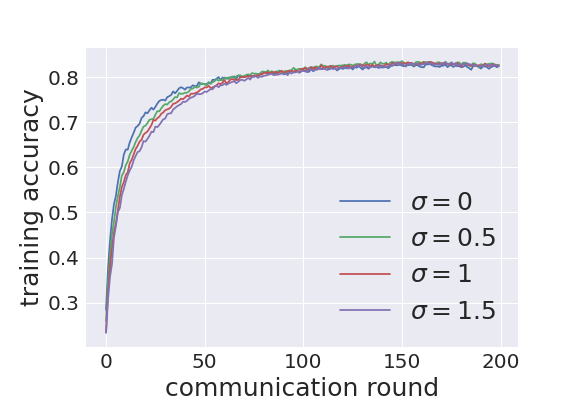}}
    \subfigure[]{\includegraphics[width=1.5in]{imgs/svhn/acc_val_poisson_svhn_eps_5_07.png}}
    \caption{Training curves of CNN on SVHN (Section \ref{sec:cnn}) with DP-Fed ($\sigma=0$), DP-Fed-LS ($\sigma=0.5, 1, 1.5$). (a), (b), (c) and (d): training loss, validation loss, training accuracy and validation accuracy with uniform subsampling, where $(5.23, 1/2000^{1.1})$-DP is applied. (c), (d), (e) and (f): training loss, validation loss and training accuracy and validation accuracy with Poisson subsampling, where $(5.07, 1/2000^{1.1})$-DP is applied.}
    \label{train_svhn_loss_acc}
\end{figure}

\begin{figure}[htbp]
    \centering
    \subfigure[]{\includegraphics[width=1.5in]{imgs/lstm/loss_train_uniform_shakespeare_eps_27_24.png}}
    \subfigure[]{\includegraphics[width=1.5in]{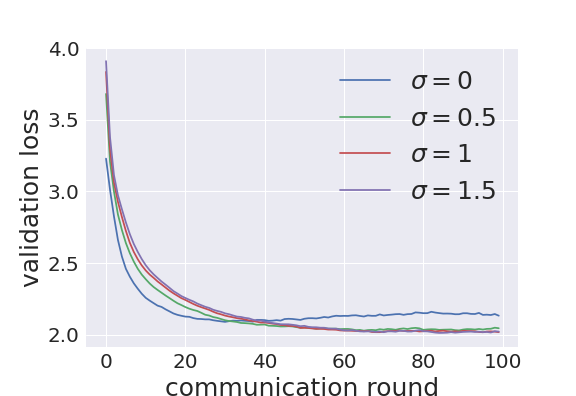}}
    %\vspace{-1em}
    \subfigure[]{\includegraphics[width=1.5in]{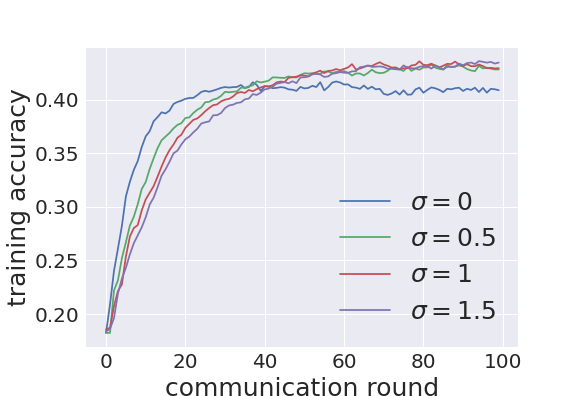}}
    \subfigure[]{\includegraphics[width=1.5in]{imgs/lstm/acc_val_uniform_shakespeare_eps_27_24.png}} \\
    %\vspace{-1em}
    \subfigure[]{\includegraphics[width=1.5in]{imgs/lstm/loss_train_poisson_shakespeare_eps_14_04.png}}
    \subfigure[]{\includegraphics[width=1.5in]{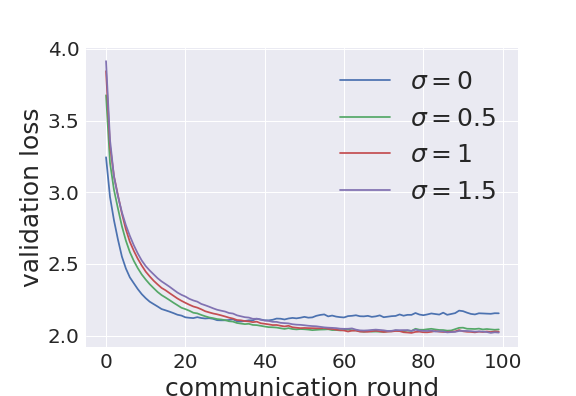}}
    %\vspace{-1em}
    \subfigure[]{\includegraphics[width=1.5in]{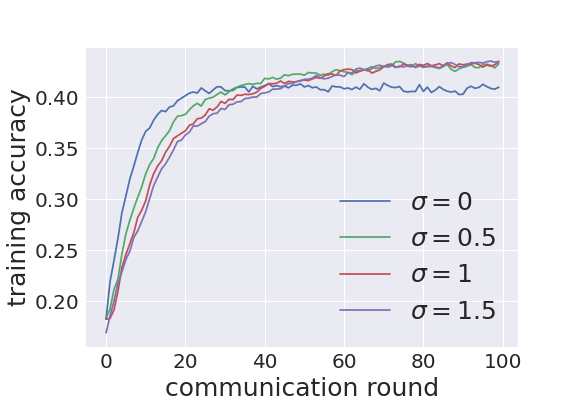}}
    \subfigure[]{\includegraphics[width=1.5in]{imgs/lstm/acc_val_poisson_shakespeare_eps_14_04.png}} 
    \caption{Training curves of LSTM on Shakespeare (Section \ref{sec:lstm}) with DP-Fed ($\sigma=0$), DP-Fed-LS ($\sigma=0.5, 1, 1.5$). (a), (b), (c) and (d): training loss, validation loss, training accuracy and validation accuracy with uniform subsampling, where $(27.24, 1/975^{1.1})$-DP is applied. (c), (d), (e) and (f): training loss, validation loss, training accuracy and validation accuracy with Poisson subsampling, where $(14.04, 1/975^{1.1})$-DP is applied.}
    \label{train_lstm_loss_acc}
\end{figure}

\begin{figure}[htbp]
    \centering
    \subfigure[]{\includegraphics[width=1.50in]{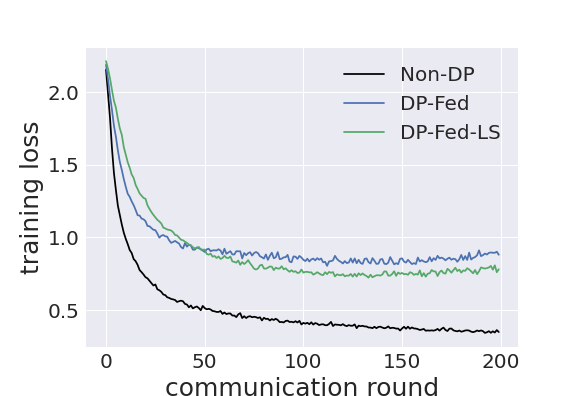}}
    \subfigure[]{\includegraphics[width=1.50in]{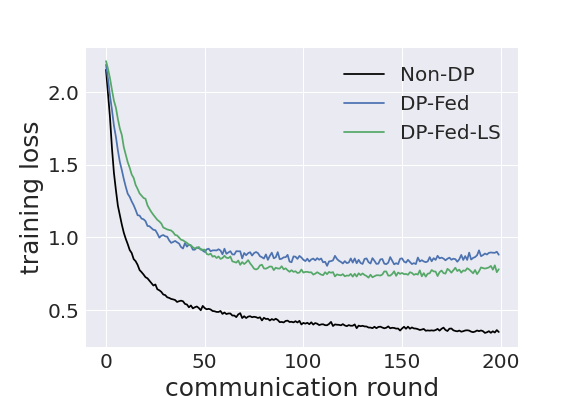}} 
    %\vspace{-1em}
    \subfigure[]{\includegraphics[width=1.50in]{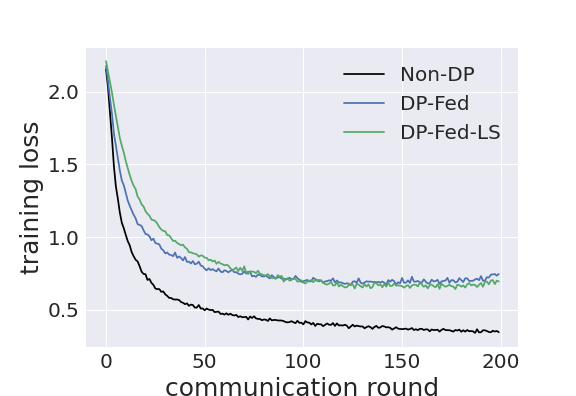}}
    \subfigure[]{\includegraphics[width=1.50in]{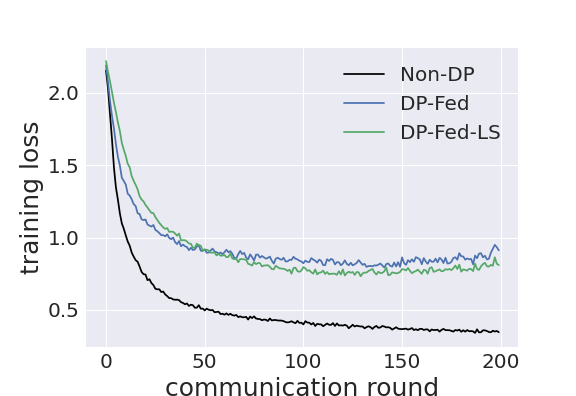}} \\
    %\vspace{-1em}
    \subfigure[]{\includegraphics[width=1.50in]{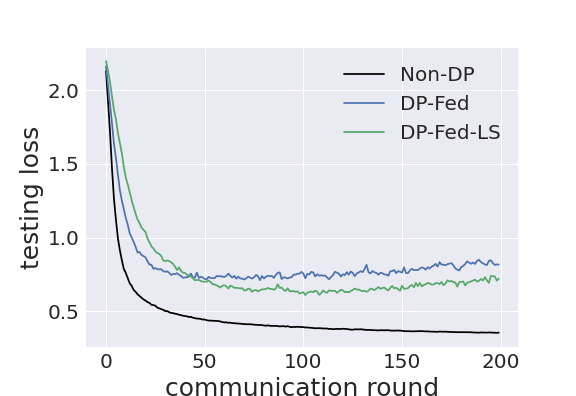}}
    \subfigure[]{\includegraphics[width=1.50in]{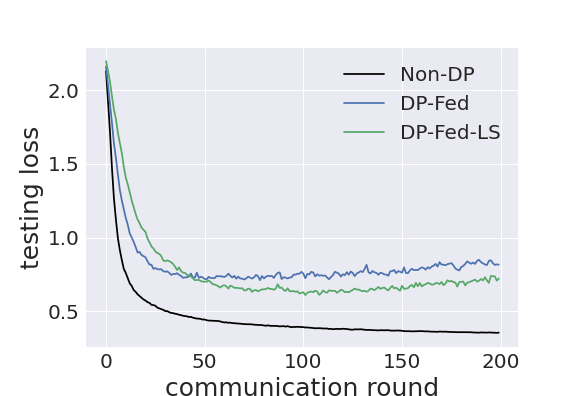}} 
    %\vspace{-1em}
    \subfigure[]{\includegraphics[width=1.50in]{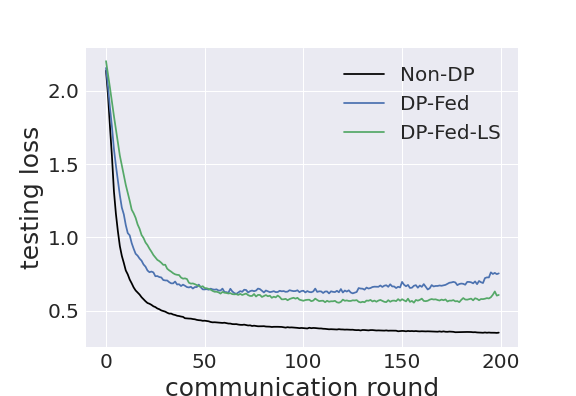}}
    \subfigure[]{\includegraphics[width=1.50in]{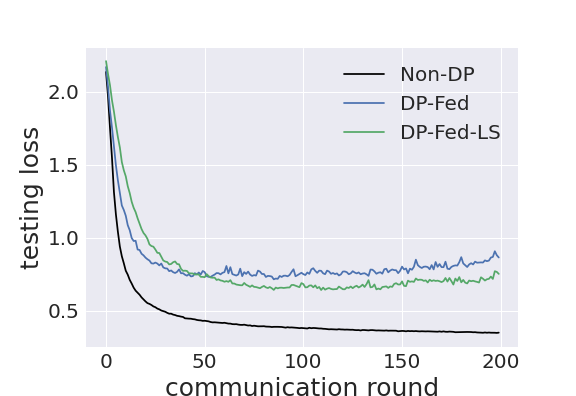}} \\
    %\vspace{-1em}
    \subfigure[]{\includegraphics[width=1.50in]{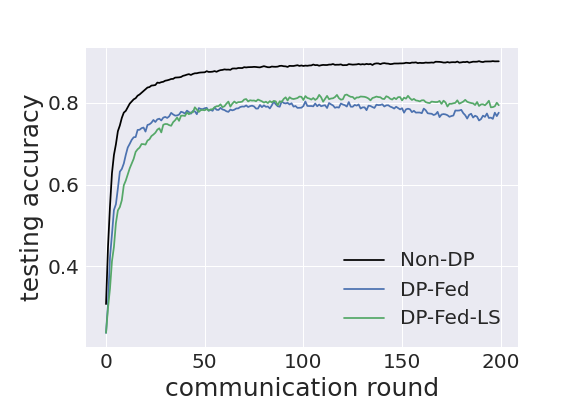}}
    \subfigure[]{\includegraphics[width=1.50in]{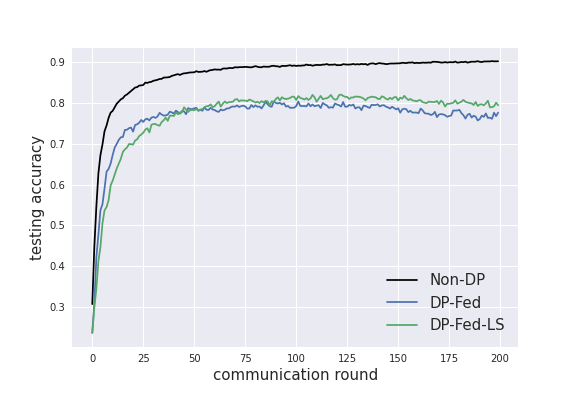}} 
    %\vspace{-1em}
    \subfigure[]{\includegraphics[width=1.50in]{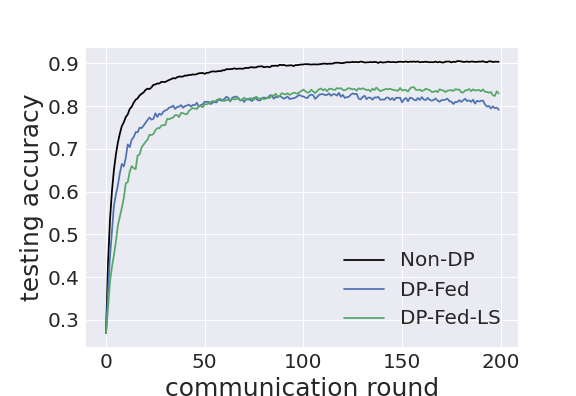}}
    \subfigure[]{\includegraphics[width=1.50in]{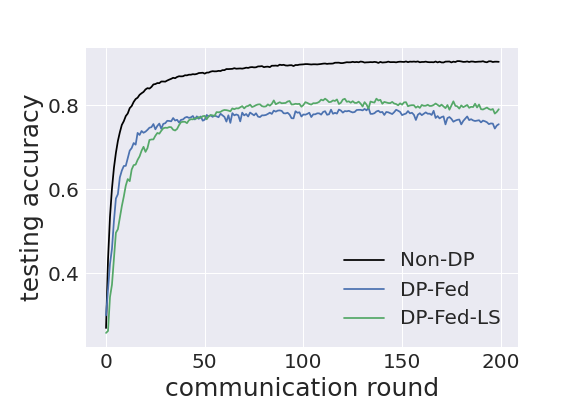}}
    \caption{Training curves of experiment with threshold attack (Section \ref{sec:mi}). (a)(e)(i): training loss, testing loss, test accuracy for different models with uniform subsampling, where noise multiplier $z=1$ for DP models. (b)(f)(j): training loss, test loss, test accuracy for models with uniform subsampling where $z=1.3$ for DP. (c)(g)(k): training loss, test loss, test accuracy for models with Poisson subsampling where $z=1$ for DP. (d)(h)(l): training loss, test loss, test accuracy for models with Poisson subsampling where $z=1.3$ for DP. In all these experiment, the Laplacian smoothing parameter $\sigma=1$.}
    \label{mi_svhn_loss_acc}
\end{figure}

\section{Laplacian Smoothing} \label{sec:ls}

In Figure~\ref{fig:demo_ls}, we compare the evolution curves of Gradient Descent (GD) and Laplacian smoothing Gradient Descent (LSGD). We can notice that the curve (Figure~\ref{fig:demo_ls} (b)) of LSGD is much more smoother than the one of GD. 

To illustrate the Proposition \ref{prop:ls-risk}, in Figure~\ref{fig:demo_ls_2}, we show the efficacy of Laplacian smoothing. We consider signal $y=\sin(x), x\in[0,30]$. We perturb it by Gaussian noise: $\tilde{y} = y + \textbf{n}$ where $\textbf{n} \sim \mathcal{N}(0,\nu^2)$. Then we get the Laplacian smoothing estimate $\hat{y}_{LS}:=\arg\min_u \|u-\tilde{y}\|^2 + \sigma \|\nabla u\|^2$. From Figure~\ref{fig:demo_ls_2} (a), we notice that $\hat{y}_{LS}$ can significantly smooth the noisy signal. Then in Figure~\ref{fig:demo_ls_2} (b), we compute the MSE reduction ratio of Laplacian smoothing estimator: $(\text{MSE}(\tilde{y}) - \text{MSE}(\hat{y}_{LS})) / \text{MSE}(\tilde{y})$ to demonstarte the efficacy of Laplacian smoothing. We see that, when the noise level $\nu$ is small, Laplacian smoothing will introduce higher MSE: the bias introduced by Laplacian smoothing is larger than its variance reduction. However, once the noise level increases, Laplacian smoothing will significantly reduce the MSE. The larger the $\sigma$ is, the more MSE reduction achieved.

\begin{figure}[htbp]
    \centering
    \subfigure[]{\includegraphics[width=2.5in]{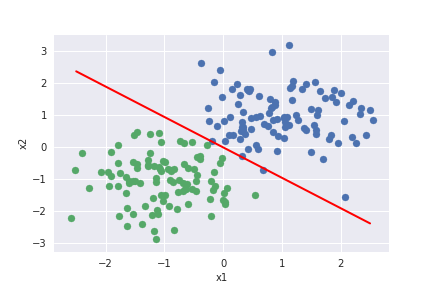}}
    \subfigure[]{\includegraphics[width=2.5in]{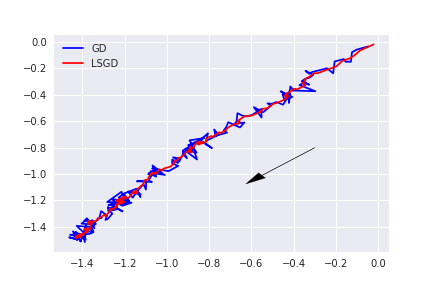}}
    \caption{Demonstration of Laplacian smoothing. We try to use a linear classifier $y=\text{sigmoid}(Wx)$ to separate data points from two distributions, i.e., the blue points ($y=0$) and the green points ($y=1$) in (a). We use gradient descent (GD) and Laplacian smoothing gradient descent (LSGD with $\sigma=1$) with binary cross entropy loss to fulfill this task. Here $W$ is intialized as (0,0) and its perfect solution would be (c,c) for any $c<0$. Gaussian noise with standard deviation of 0.3 is added on the gradients. Learning rate is set to be 0.1. In (b), we plot the evolution curves of $W$ in 100 updates, where we can find that the curve of LSGD is much smoother than the one of GD.}
    \label{fig:demo_ls}
\end{figure}

\begin{figure}[htbp]
    \centering
    \subfigure[]{\includegraphics[width=2.in]{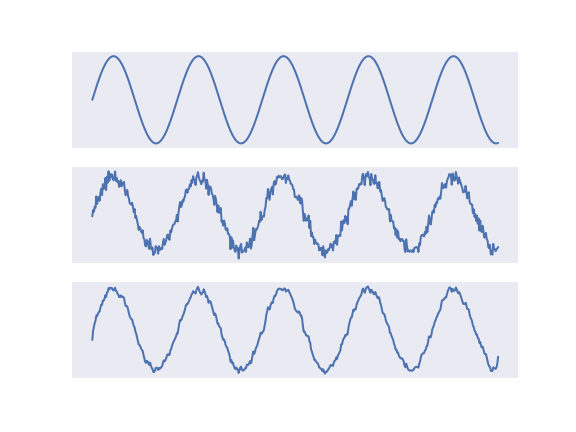}}
    \subfigure[]{\includegraphics[width=2.in]{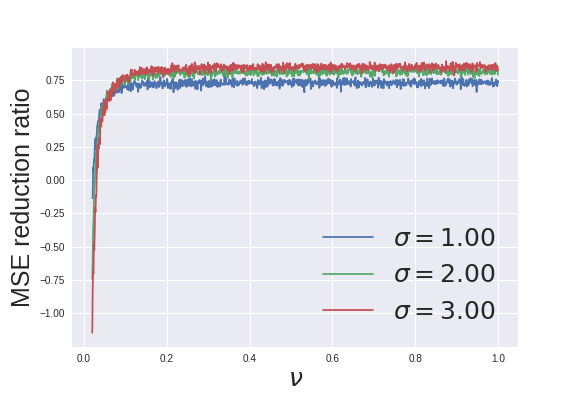}}
    \caption{Efficacy of Laplacian smoothing. In (a), signals from top to bottom are $y=\sin(x)$, $\tilde{y}=y + \mathcal{N}(0, \nu^2)$ with $\nu=0.1$ and $\hat{y}_{LS}$ with $\sigma=1$. In (b), we compute the MSE reduction ratio of Laplacian smoothing estimator $(\text{MSE}(\tilde{y}) - \text{MSE}(\hat{y}_{LS})) / \text{MSE}(\tilde{y})$ along different noise level $\nu$, where $\hat{y}_{LS}:=\arg\min_u \|u-\tilde{y}\|^2 + \sigma \|\nabla u\|^2$.}
    \label{fig:demo_ls_2}
\end{figure}

\section{Details about Table~\ref{tbl:rate-comparision} and Corollory~\ref{corollary-communication-round}}\label{sec:table_compare}
In Table~\ref{tbl:rate-comparision}, for \cite{khaled2020tighter}, the $\log(T)$ term in denominators are ignored. For the communication complexity with strongly-convex condition for \cite{karimireddy2020scaffold} and DP-Fed-LS, the $\log{\sample}$ and $\log{\step}$ terms in numerator are ignored.

For Corollary~\ref{corollary-communication-round}, given fixed noise level $\nu_1$ and communication round T, we would like to determined what $(\varepsilon,\delta)$-DP can be achieved.
Following from Theorem~\ref{Theorem-privacy-guarantee-federated-uniform}, we know that to satisfy $(\varepsilon, \delta)$-DP, we need \begin{equation}
    \nu \geq \frac{\tau \clip}{\varepsilon}\sqrt{\frac{14T}{\lambda}\bigg( \frac{\log(1/\delta)}{1-\lambda} + \varepsilon\bigg)},
\end{equation}
and $\nu$ satisfying $\nu^2/4\clip^2 \geq \frac{2}{3}$ and $\alpha-1 \leq \frac{\nu^2}{6\clip^2}\log(1/(\tau \alpha (1+\nu^2/4\clip^2)))$ for some $\lambda\in(0,1)$, where $\alpha= \log(1/\delta)/((1-\lambda)\varepsilon) +1$.
In other words, we require
\begin{equation}\label{eq-T-sub}
    T \leq \frac{\lambda \varepsilon^2 \nu_1^2}{14 \tau^2 \big(\frac{\log(1/\delta)}{1-\lambda} +\varepsilon \big)}.
\end{equation}
and\begin{equation}\label{eq-tmp-cond1}
    \frac{\nu_1^2\clip^2}{\sample^2 \Delta^2(\textbf{q})} = \frac{\nu_1^2}{4} \geq \frac{2}{3}
\end{equation}
and
\begin{equation}\label{eq-tmp-cond2}
    \alpha - 1\leq \frac{\nu_1^2}{6} \ln \frac{1}{\tau\alpha(1+\nu_1^2/4)}.
\end{equation}
for $\nu_1=\nu/\clip$. In other words, if $\nu_1\geq8/3$ and $\alpha - 1\leq \frac{\nu_1^2}{6} \ln \frac{1}{\tau\alpha(1+\nu_1^2/4)}$, then $(\varepsilon,\delta)$-DP satisfying Eq~(\ref{eq-T-sub}) can be achieved for any $\lambda\in(0,1)$.

In Theorem~\ref{Theorem-privacy-guarantee-federated-uniform}, we select $\lambda \in (0,1)$ such that $\nu_1$'s lower bound can satisfy two inequalities Eq. (\ref{eq-tmp-cond1}) and (\ref{eq-tmp-cond2}). However, in Corollary~\ref{corollary-communication-round}, our first step is to fix the noise level $\nu_1$ such that it directly satisfies Eq. (\ref{eq-tmp-cond1}) and (\ref{eq-tmp-cond2}). In this case, $\lambda\in(0,1)$ is a free parameter. One could select $\lambda \in(0,1)$ such that the upper bound for $T$ is maximized.

\end{document}